\newcommand{\vx}{\mathbf{x}}
\newcommand{\vy}{\mathbf{y}}
\newcommand{\vw}{\mathbf{w}}
\newcommand{\va}{\mathbf{a}}
\newcommand{\vc}{\mathbf{c}}
\newcommand{\vb}{\mathbf{b}}
\newcommand{\vd}{\mathbf{d}}
\newcommand{\mQ}{\mathbf{Q}}
\newcommand{\mE}{\mathbf{E}}
\newcommand{\vr}{\mathbf{r}}
\newcommand{\valpha}{\bm{\alpha}}
\newcommand{\vOmega}{\bm{\Omega}}
\newcommand{\sv}{v}
\newcommand{\half}{\frac{1}{2}}
\newcommand{\reals}[1]{\mathbb{R}^{#1}}
\newcommand{\mW}{\mathbf{W}}
\newcommand{\mA}{\mathbf{A}}
\newcommand{\mD}{\mathbf{D}}
\newcommand{\mZ}{\mathbf{Z}}
\newcommand{\mU}{\mathbf{U}}
\newcommand{\fK}{\mathcal{K}}
\newcommand{\mZZ}{\left(\mathbf{Z}\odot\mathbf{Z}\right)}
\newcommand{\sqZ}{\mZ^2}
\newcommand{\sqY}{\mY^2}
\newcommand{\mX}{\mathbf{X}}
\newcommand{\mY}{\mathbf{Y}}
\newcommand{\mYY}{\left(\mathbf{Y}\odot\mathbf{Y}\right)}
\newcommand{\stiefel}[1]{\mathcal{S}^{#1}_d}
\newcommand{\gs}[1]{\mathcal{G}_{\kernel}^{#1}}
\newcommand{\dataset}{\mathcal{D}}
\newcommand{\dl}{\mathcal{D}_o}
\newcommand{\dbar}{\mathcal{\overline{D}}}
\newcommand{\subjectto}{\text{ s.t. }}
\newcommand{\unitsphere}[1]{U^{#1}}
\newcommand{\eye}[1]{\mathbf{I}_{#1}}
\newcommand{\norm}[1]{\left\|{#1}\right\|}
\newcommand{\enorm}[1]{\left\|{#1}\right\|_2}
\newcommand{\fnorm}[1]{\left\|{#1}\right\|_F}
\newcommand{\set}[1]{\left\{#1\right\}}
\newcommand{\hinge}[1]{\left[{#1}\right]_+}
\newcommand{\indexset}[1]{[{#1}]}
\newcommand{\kernel}{\mathbb{K}}
\newcommand{\one}{\mathbf{1}}
\newcommand{\zero}{\mathbf{0}}
\newcommand{\problem}{P}
\newcommand{\kods}{\mathcal{K}}
\newcommand{\ob}[1]{{\mathcal{OB}}^{#1}_d}
\DeclareMathOperator*{\diag}{diag}
\DeclareMathOperator*{\rowmax}{rowmax}
\DeclareMathOperator*{\rowmin}{rowmin}
\DeclareMathOperator*{\sign}{sgn}
\DeclareMathOperator*{\dist}{dist^2}
\DeclareMathOperator*{\Dist}{dist}
\DeclareMathOperator*{\tr}{Tr}
\DeclareMathOperator*{\distW}{dist_W^2}
\DeclareMathOperator*{\DistW}{dist_W}
\DeclareMathOperator*{\argmax}{arg\,max}
\DeclareMathOperator*{\argmin}{arg\,min}
\DeclareMathOperator*{\grad}{grad}
\DeclareMathOperator{\gods}{GODS}
\newcommand{\trace}[1]{\tr\left({#1}\right)}
\newcommand{\rottxt}[1]{\parbox[t]{2mm}{\multirow{4}{*}{\rotatebox[origin=c]{90}{#1}}}}
\newtheorem{theorem}{Theorem}
\newtheorem{proposition}{Proposition}
\newtheorem{lemma}{Lemma}
\begin{document}

\title{Generalized One-Class Learning \\Using Pairs of Complementary Classifiers}

\author{ Anoop Cherian$^*$ \qquad\qquad Jue Wang$^*$  \\
\IEEEcompsocitemizethanks{
\IEEEcompsocthanksitem Anoop Cherian (corresponding author) is with Mitsubishi Electric Research Labs (MERL), Cambridge, MA, E-mail: cherian@merl.com\protect\\
\IEEEcompsocthanksitem Jue Wang is with the Research School of Engineering, The
Australian National University, ACT 2601, Australia. E-mail: jue.wang@anu.edu.au. Work done while interning at MERL.\protect\\ 
$^*$ Equal contribution.
}
\thanks{}}

\markboth{TRANSACTIONS ON PATTERN ANALYSIS AND MACHINE INTELLIGENCE}%
{Shell \MakeLowercase{\textit{et al.}}:Learning $\alpha\beta$-Divergences for Positive Definite Matrices}

\IEEEtitleabstractindextext{%
\begin{abstract}
One-class learning is the classic problem of fitting a model to the data for which annotations are available only for a single class. In this paper, we explore novel objectives for one-class learning, which we collectively refer to as \emph{Generalized One-class Discriminative Subspaces} (GODS). Our key idea is to learn a pair of complementary classifiers to flexibly bound the one-class data distribution, where the data belongs to the positive half-space of one of the classifiers in the complementary pair and to the negative half-space of the other. To avoid redundancy while allowing non-linearity in the classifier decision surfaces, we propose to design each classifier as an orthonormal frame and seek to learn these frames via jointly optimizing for two conflicting objectives, namely: i) to minimize the distance between the two frames, and ii) to maximize the margin between the frames and the data. The learned orthonormal frames will thus characterize a piecewise linear decision surface that allows for efficient inference, while our objectives seek to bound the data within a minimal volume that maximizes the decision margin, thereby robustly capturing the data distribution. We explore several variants of our formulation under different constraints on the constituent classifiers, including kernelized feature maps. We demonstrate the empirical benefits of our approach via experiments on data from several applications in computer vision, such as anomaly detection in video sequences, human poses, and human activities. We also explore the generality and effectiveness of GODS for non-vision tasks via experiments on several UCI datasets, demonstrating state-of-the-art results. 
\end{abstract}

\begin{IEEEkeywords}
one-class classification, subspace learning, kernelized subspaces, Riemannian optimization
\end{IEEEkeywords}}

\maketitle
\IEEEdisplaynontitleabstractindextext

\IEEEpeerreviewmaketitle

\section{Introduction}
\label{sec:intro}
\IEEEPARstart{T}{here} are several real-world applications for which it may be straightforward to characterize the normal operating behavior and collect data to train learning systems, however may be difficult or sometimes even impossible to have data when abnormalities or rare events happen. Examples include but not limited to, an air conditioner making a spurious vibration, a network attacked by an intruder, sudden variations in a patient's vitals, or an accident captured in a video surveillance camera, among others~\cite{chandala2009anomaly}. In machine learning literature, such problems are usually called one-class problems~\cite{bishop1994novelty,ritter1997outliers}, signifying the fact that we may have an unlimited supply of labeled training data for the one-class (corresponding to the normal operation of the system), but do not have any labels or training data for situations corresponding to abnormalities. 

\begin{figure}[!ht]
    \centering
    \subfigure[BODS-Gaussian]{\label{fig:bods-vis}\includegraphics[width=2.5cm,height=2.5cm, trim={6cm 3cm 3cm 1cm},clip]{./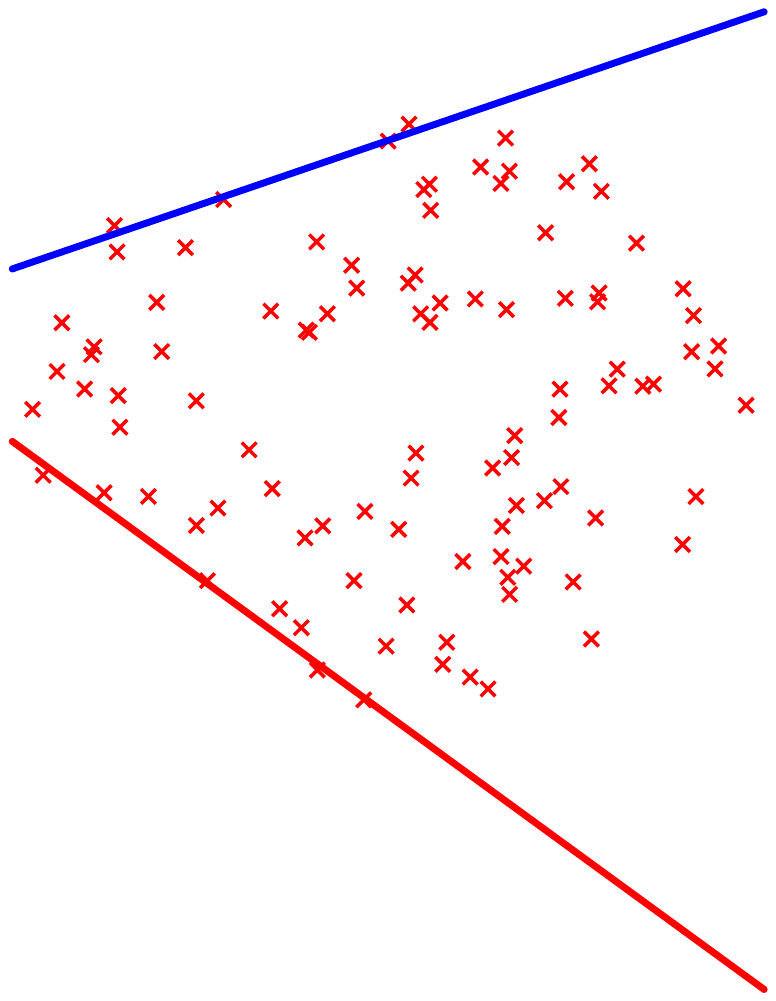}}
    \subfigure[GODS-Gaussian]{\label{fig:gods-vis}\includegraphics[width=2.5cm,height=2.5cm,trim={6cm 3cm 3cm 1cm},clip]{./figure/gods_visualization}}
    \subfigure[GODS-Arbitrary]{\label{fig:gods-vis-arb}\includegraphics[width=2.5cm,height=2.5cm,trim={6cm 3cm 3cm 1cm},clip]{./figure/gods_visualization_complicated}}\\
    \subfigure[KODS on 2D data]{\label{fig:kods-vis}\includegraphics[width=2.5cm,trim={4cm 3cm 4cm 3cm},clip]{./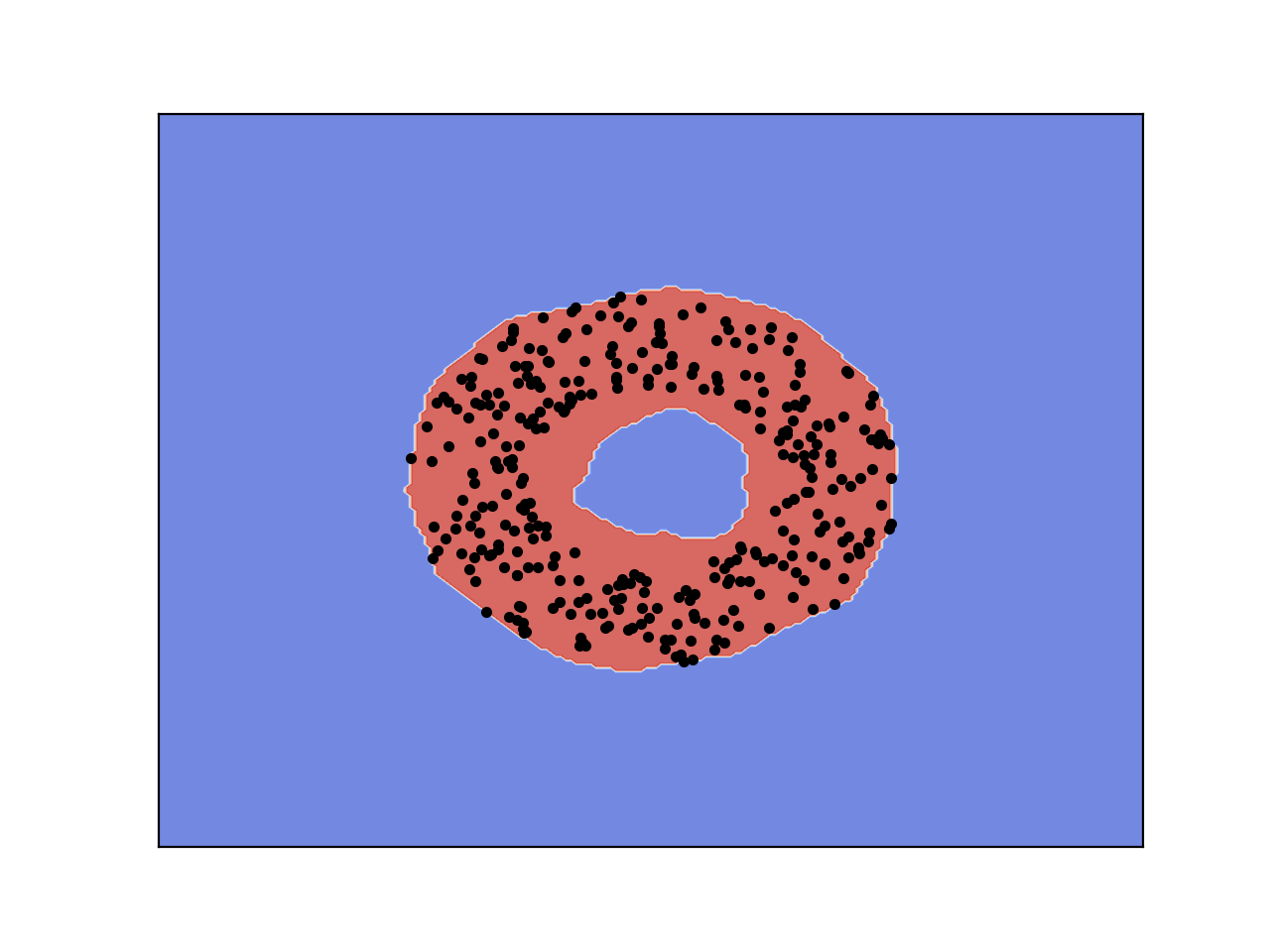}}
    \subfigure[KODS-$\mW_1$]{\label{fig:kods-vis-w1}\includegraphics[width=3cm,trim={1cm 2cm 1cm 1cm},clip]{./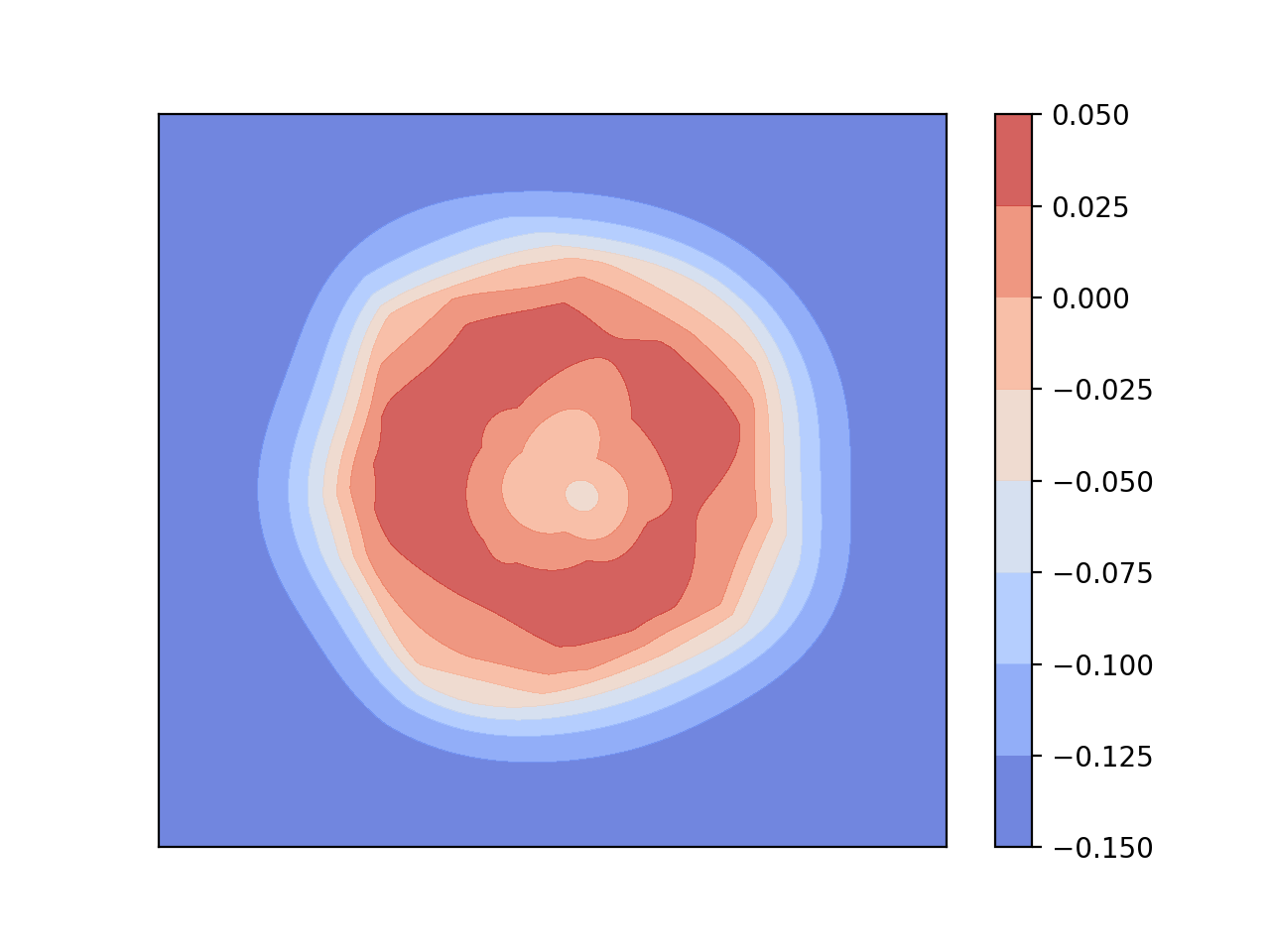}}
    \subfigure[KODS-$\mW_2$]{\label{fig:kods-vis-w2}\includegraphics[width=3cm,trim={1cm 2cm 1cm 1cm},clip]{./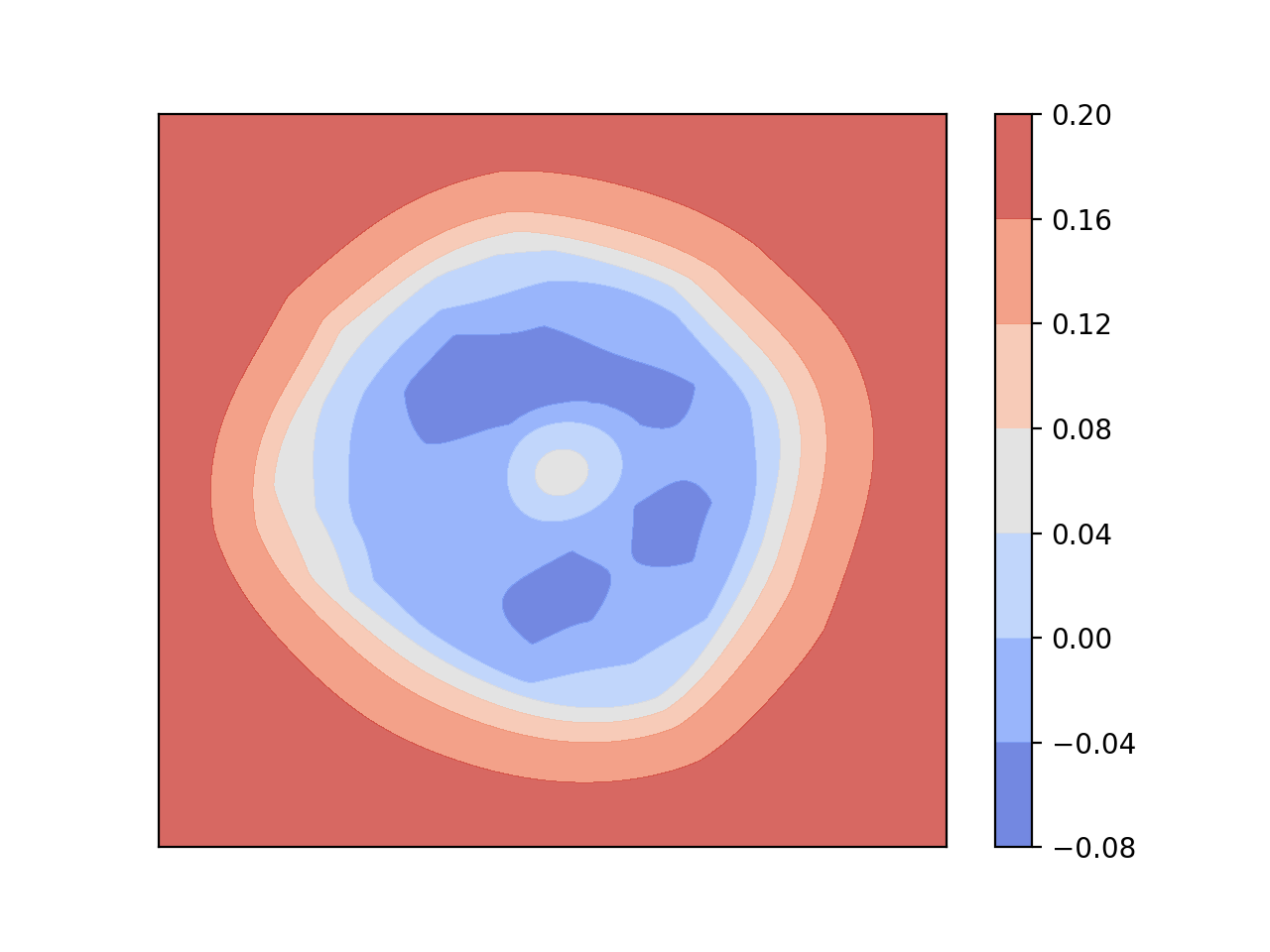}}
    \subfigure[3D data points\qquad\quad (h) KODS-$\mW_1$\qquad\qquad (i) KODS-$\mW_2$\qquad\quad ]{\label{fig:kods-3d}\includegraphics[width=9cm,trim={1cm 6cm 4cm 5cm},clip]{./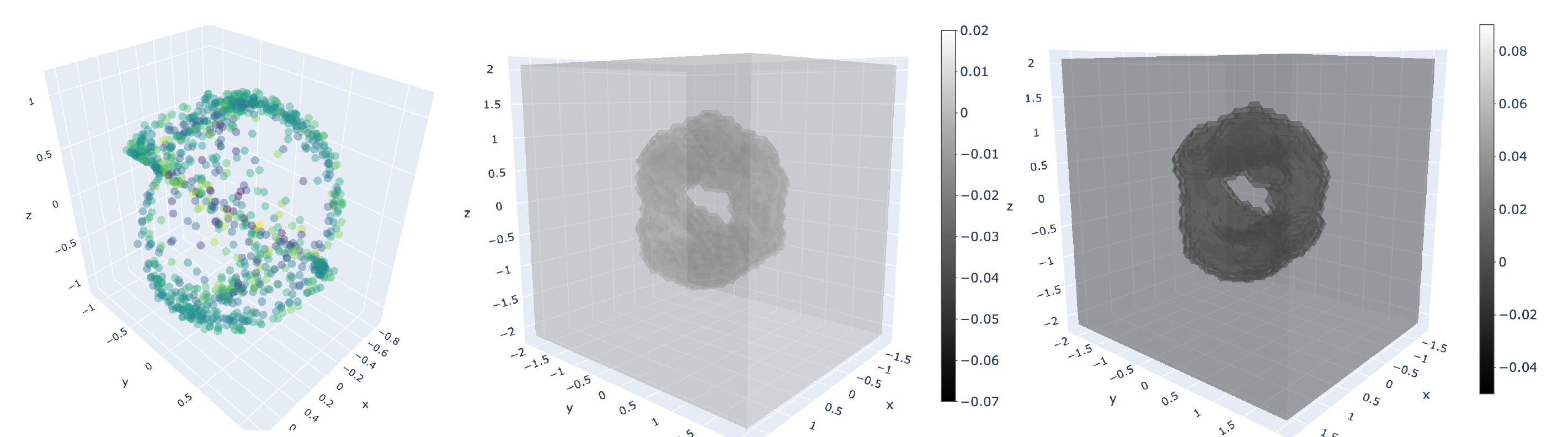}}
\caption{Visualizations of decision regions using various GODS formulations on synthetic data. Figures (a, b, c) show subspaces found by BODS and GODS on various data distributions (see Section~\ref{exp_result}). The colors identify hyperplanes within a classifier in the complementary pair. Figure (d) shows KODS decision regions for ring-shaped data (black dots) using an RBF kernel. Figures (e, f) are the decision regions of the classifiers; $\mW_1$ bounding data from outside and $\mW_2$ from inside, together they define the region in (d). Figure (g,h,i) show 3D points and the decision surfaces of the two classifiers.}
\label{fig:toy}
\end{figure}

Popular problems in computer vision, such as video novelty detection~\cite{gardner2006one,khan2009survey,sabokrou2018adversarially}, surveillance anomaly detection~\cite{del2016discriminative,saligrama2012video,liu2018future}, image denoising~\cite{buades2005non}, and outlier detection~\cite{xia2015learning,you2017provable}, are variants of the standard one-class setting. The main goal of such methods is usually to learn a model that fits to the normal set, such that abnormalities can be characterized as outliers of this learned model. Given that the distribution of abnormal samples are often unbounded or could even partially overlap with the normal data samples (e.g., in a video surveillance application, when both normal and abnormal features could be present in a time window), achieving one-class classification can be practically challenging.

Classical solutions to one-class problems are mainly extensions to support vector machines (SVMs), such as the one-class SVM (OC-SVM), that maximizes the margin of the discriminative hyperplane from the origin~\cite{scholkopf2001estimating}. There are extensions of this scheme, such as the least-squares one-class SVM (LS-OSVM)~\cite{choi2009least} or its online variants~\cite{wang2013online},  that learn to find a tube of minimal diameter that includes all the labeled data. Another popular approach is the support-vector data description (SVDD) that finds a hypersphere of minimum radius that encapsulates the training data~\cite{tax2004support}. There have also been kernelized extensions of these schemes that use the kernel trick to embed the data points in a reproducible kernel Hilbert space, potentially enclosing the `normal' data with arbitrarily-shaped boundaries. 

Apart from the classic one-class solutions, there is an increasing number of recent works that use deep neural networks for building the one-class model~\cite{abati2019latent,chalapathy2018anomaly,sabokrou2018adversarially,liu2018future,luo2017revisit,hasan2016learning}. In these approaches, typically a deep auto-encoder model is trained on the one-class data such that its reconstruction error is minimized. When such a model is provided with an out-of-distribution data sample (anomaly), the reconstruction error can be large, which could be used as an anomaly cue~\cite{hasan2016learning,sabokrou2018adversarially}. There are extensions of this general architecture using generative adversarial networks (GANs) to characterize the distribution of the in-class samples~\cite{schlegl2017unsupervised,ravanbakhsh2019training}. For anomaly detection on time-varying inputs, there are also adaptations using predictive auto-encoders, such as~\cite{liu2018future}, that attempts to generate the (latent) future samples, and flags  anomalies if the predicted sample is significantly different from the observed one.  

While, these approaches  have been widely adopted in several applications (see e.g., Chandala et al.~\cite{chandala2009anomaly}), they have drawbacks. For example, the OC-SVM uses only a single hyperplane, however using multiple hyperplanes may be beneficial~\cite{wang2018learning}. The SVDD scheme makes a strong assumption on the spherical nature of the data distribution. Using kernel methods may impact scalability, while deep learning methods may need specialized hardware (such as GPUs) and large training sets. Thus, trading-off between the pros and cons of these diverse prior methods, we propose novel generalizations of these techniques, which we call \emph{generalized one-class discriminative subspaces} (GODS). The key goal of GODS is to combine the linearity properties of OC-SVM, and the non-linear bounded characterization of SVDD in a single framework. However, our proposed one-class model is neither linear nor uses a spherical classifier, instead uses a pair of  orthornormal frames\footnote{Orthonormal frames are matrices with linearly independent unit norm columns.} whose columns characterize linear classifiers. Specifically, these columns are optimized such that the one-class data belongs to the positive half-spaces of the columns in one of these classifiers and to the negative half-spaces of the columns in the other; thus these classifiers jointly form a complementary pair. These classifiers, with their respective half-spaces defined by their orthonormal columns, non-linearly bound the data from different directions. Our learning objective, to find these complementary classifiers, jointly optimizes two opposing criteria: i) to minimize the distance between the two classifiers, thus bounding the data within the smallest volume, and ii) to maximize the margin between the hyperplanes and the data, thereby avoiding overfitting, while improving classification robustness. 

Our proposed GODS model offers several advantages against prior methods: (i) the piecewise linear decision boundaries approximate a non-linear classifier, while providing computationally cheap inference, and (ii) the use of the complementary classifier pair allows flexible bounding of the data distribution of arbitrary shapes, as illustrated in Figure~\ref{fig:toy}. For example, our kernelized variant of GODS (called KODS), that we introduce in Section~\ref{sec:kgods}, bounds data from outside as well as inside (see Figure~\ref{fig:kods-vis}), which is usually not possible in prior methods. Albeit these benefits, our objective is non-convex due to the orthogonality constraints. However, such non-convexity fortunately is not a significant practical concern as they naturally place the optimization objective on the Stiefel manifold~\cite{edelman1998geometry}. This is a well-studied Riemannian manifold~\cite{boothby1986introduction} for which there exist efficient non-linear optimization methods at our disposal. We use one such optimization scheme, called Riemannian conjugate gradient~\cite{absil2009optimization}, which is fast and efficient.

To empirically evaluate the benefits of our GODS formulations, we apply them to one-class data arising from various anomaly detection problems in computer vision and machine learning. One novel task we consider in this paper is that of out-of-pose (OOP) detection in cars~\cite{trivedi2007looking,trivedi2004occupant}. Specifically, in this task, our goal is to detect if the passengers or the driver are seated OOP as captured by an inward-looking dashboard camera.  For this task, we showcase the effectiveness of our approaches on a new dataset, which we call Dash-Cam-Pose. Apart from this task, we also report experimental results on several standard and public anomaly detection benchmarks in computer vision, such as on the UCF-crime~\cite{sultani2018real} and the UCSD Ped2~\cite{li2013anomaly} datasets. We also provide experiments on the standard JHMDB action recognition dataset~\cite{jhuang2013towards} re-purposed for the anomaly detection task. We further analyze the generalizability of our approach to non-computer vision applications by providing results on five UCI datasets. Our results demonstrate that GODS variants lead to significant performance improvements over the state of the art.

We summarize below the main contributions of this paper:
\begin{itemize}
\item We first introduce a basic one-class discriminative subspace (BODS) classifier that uses a pair of hyperplanes.
\item We generalize BODS to use multiple hyperplanes, termed generalized one-class discriminative subspaces (GODS) and derive a kernelized variant, termed KODS. We also present several formulations of GODS under different assumptions on the classifiers.
\item We explore Riemannian conjugate gradient algorithms for optimizing our objectives. Specifically, the BODS and GODS formulations use a Stiefel manifold, while KODS is modeled on the generalized Stiefel manifold.
\item We present a novel task of out-of-pose detection, and provide a new dataset, termed Dash-Cam-Pose. 
\item We provide experiments on Dash-Cam-Pose, three standard vision datasets, and five UCI datasets, demonstrating substantial performance benefits of our methods.
\end{itemize}

We note that this work is an extension of the ICCV conference paper~\cite{GODS} and extends it in the following ways:  (i) we provide novel extensions to the GODS using different assumptions on the complementary classifiers (Section~\ref{gods_extension}), (ii) we derive a kernelized variant of GODS (Section~\ref{sec:kgods}) and provide practical simplifications for optimization on the generalized Stiefel manifold (Section~\ref{sec:kods_optimization}), (iii) we provide adaptive classifiation rules in Section~\ref{sec:classification}, and (iv) provide additional experiments and ablative studies, including new results on UCSD Ped2 and UCI datasets.

\section{Related Work}
\label{sec:related_work}
As one-class problems arise in numerous practical settings, they have been explored to great depth in a variety of disciplines, including but not limited to remote sensing~\cite{matteoli2014overview}, network intruder detection~\cite{lazarevic2003comparative}, and fraud detection~\cite{heller2003one}. In computer vision, a few illustrative problems are novelty detection~\cite{gardner2006one,khan2009survey,sabokrou2018adversarially}, video anomaly detection~\cite{del2016discriminative,popoola2012video,saligrama2012video}, diagnosis on medical images~\cite{krawczyk2015one}, and anomalous object attribute recognition in image collections~\cite{saleh2013object}. For a comprehensive review of applications, we refer the interested reader to surveys, such as~\cite{chandala2009anomaly,pimentel2014review,pang2020deep}. 
\\
\noindent\textbf{Classic methods} for modelling such one-class problems are extensions of data density estimation techniques~\cite{tsybakov1997nonparametric,nolan1991excess}. These methods attempt to model the density of the given data in the input space by trading-off between maximizing their inclusivity within a (given) density quantile while minimizing its volume. The dependence on minimal density volume in the input space is discarded in Scholkopf et al.~\cite{scholkopf2001estimating,tax2004support} against smoothness of the decision function in a non-linear (kernelized) feature space. Working in the kernel space not only allows for more flexible characterizations of the distribution of the data samples, but also allows transferring the max-margin machinery (and the associated theory) developed for support vector machines to be directly used in the one-class setting. However, as alluded to earlier, working with kernel feature maps can be demanding in large data settings, and thus our main focus in this paper is on deriving one-class algorithms in the input (primal) space. That said, we also explore a kernelized dual variant of our scheme for problems that are impossible to be modelled using our primal variant. 

Modern efforts to one-class learning typically use either (i) good hand-crafted representations combined with effective statistical learning models, or (ii) implicit representation and learning via neural networks. Below, we review these efforts in detail.

\noindent\textbf{Explicit Modelling Approaches.} Performance of any one-class approach inevitably depends on the effectiveness of the data representation. For example, visual representations such as histogram of oriented gradients~\cite{dalal2005histograms} (HOG) and histogram of optical flows (HOF)~\cite{dalal2006human} have been beneficial in developing several anomaly detection algorithms. A Markov random filed (MRF) on HOG and HOF descriptors is proposed in Zhang et al.~\cite{zhang2005semi} for modeling the normal patterns in a semi-supervised manner, where an abnormal sample model is iteratively derived from the normal one using Bayesian adaptation. In Xu and Caramanis~\cite{xu2010robust}, an outlier pursuit algorithm is proposed using convex optimization for the robust PCA problem. Similarly, Kim et al.~\cite{kim2009observe} propose a space-time MRF to detect abnormal activities in videos. This method uses a mixture of probabilistic principal component analysis to characterize the distribution of normal data characterized as densities on optical flow. Detecting out-of-context objects is explored in~\cite{choi2012context, park2012abnormal} using support graph and generative models. Motion trajectory analysis~\cite{wu2010chaotic,calderara2011detecting,tung2011goal} of objects in video sequences has been a common approach for modeling anomalies, under the strong assumption that deviant trajectories may correspond to abnormal data. Detecting salient regions in images has also been implemented by some researchers~\cite{itti2000saliency,judd2009learning}. In contrast to these approaches that propose problem-specific anomaly detection models, our solution is for a general setting.

Sparse reconstruction analysis has been a powerful workhorse in the recent times in developing several one-class solutions~\cite{cong2011sparse,lu2013abnormal,zhao2011online,li2013visual}. The assumption in these methods is that normal data can be encoded as sparse linear combinations of columns in a dictionary that is learned only on the normal data; however the reconstruction error of any out-of-distribution sample using this dictionary could be significant. In addition to the reconstruction loss, a study from Ren et al.~\cite{ren2016comprehensive} points out that the sparsity term should be taken into consideration for improving the anomaly detection accuracy. However, sparse reconstruction methods can be computationally expensive.  To improve their efficiency, Bin et al.~\cite{zhao2011online} proposes an online detection scheme using sparse reconstructibility of query signals from an atomically learned event dictionary. Yang et al.~\cite{cong2011sparse} improves efficiency via learning multiple small dictionaries to encode image patterns at multiple scales. While, our proposed GODS algorithm could also be treated as a dictionary of orthonormal columns, our inference is significantly cheaper in contrast to solving $\ell_1$-regularized problems in these works as GODS involves only evaluations of inner products of the data samples to the learned hyperplanes. 

\noindent\textbf{Deep Learning Based Methods.} The huge success of deep neural networks on several fundamental problems in computer vision~\cite{deng2009imagenet,girshick2015fast} has also casted its impact in devising schemes for anomaly detection~\cite{hasan2016learning}. Extending classical methods, a deep variant of SVDD is proposed in Ruff et al.~\cite{ruff2018deep}, however assumes the one-class data is unimodal. Variants of OC-SVM are explored in~\cite{chalapathy2018anomaly,xu2017detecting}. Parera and Patel~\cite{perera2018learning} proposes a trade-off between compactness and descriptiveness using an external reference dataset to train a deep model on one-class data. Liang et al.~\cite{liang2017enhancing} proposes to use statistical trends in the softmax predictions. Deep learning based feature representations have been used as replacements for hand-crafted features in several one-class problem settings. For example, Xu et al.~\cite{xu2015learning} design a multi-layer auto-encoder embracing data-driven feature learning. Similarly, Hasan et al.~\cite{hasan2016learning} propose a 3D convolutional auto-encoder to capture both spatial and temporal cues in video anomaly detection. Leveraging the success of convolutional neural networks (CNNs) to capture spatial cues,~\cite{chong2017abnormal,luo2017remembering}, and~\cite{luo2017revisit} propose to embed recurrent networks, such as LSTMs, to model the appearance dynamics of normal data. In~\cite{lee2018simple} and  \cite{principled_srikant}, frameworks to minimize the in-distribution sample distances is proposed thereby maximizing the distance to out-of-distribution samples. In Sabokrou et al.~\cite{sabokrou1609fully},  a pre-trained CNN is used for extracting region features, and a cascaded outlier detection scheme is applied. Multiple instance learning (MIL) in a deep learning setting is attempted in~\cite{sultani2018real} for anomaly detection using weakly-labeled training videos via applying an MIL ranking loss with sparsity and smoothness constraints; however includes both normal and abnormal samples in the training set. Deep generative adversarial networks (GAN) have also been proposed to characterize the single class~\cite{ravanbakhsh2017abnormal,sabokrou2018adversarially,schlegl2017unsupervised,ravanbakhsh2019training}. These methods typically follow the same philosophy of training auto-encoders, however uses an adversarial discriminator to improve quality of the decoded data sample; the discriminator confidence is then used as an abnormality cue during inference.

In contrast to these approaches, we focus on explicit modelling of one-class data distributions, allowing better and more controlled characterization of the single class. Deep learning approaches reviewed above are complimentary to our contributions; in fact we use deep-learned data representations in our experiments and simultaneously demonstrate our performances on non-deep-learned features as well. More recently, explicit characterization of the diversity of the normal data is explored in~\cite{park2020learning}, self-supervised learning is proposed in~\cite{mohseni2020self}, and normalizing flows for the task in~\cite{zisselman2020deep}. We note that using our formulations within a deep neural network, while straightforward using methods described in~\cite{li2020efficient,gould2016differentiating}, is currently beyond the focus of this paper.

\begin{figure*}[t]
\centering

\subfigure[BODS]{\label{fig:bods}\includegraphics[width=6cm,trim={5cm 6cm 7cm 3.5cm},clip]{./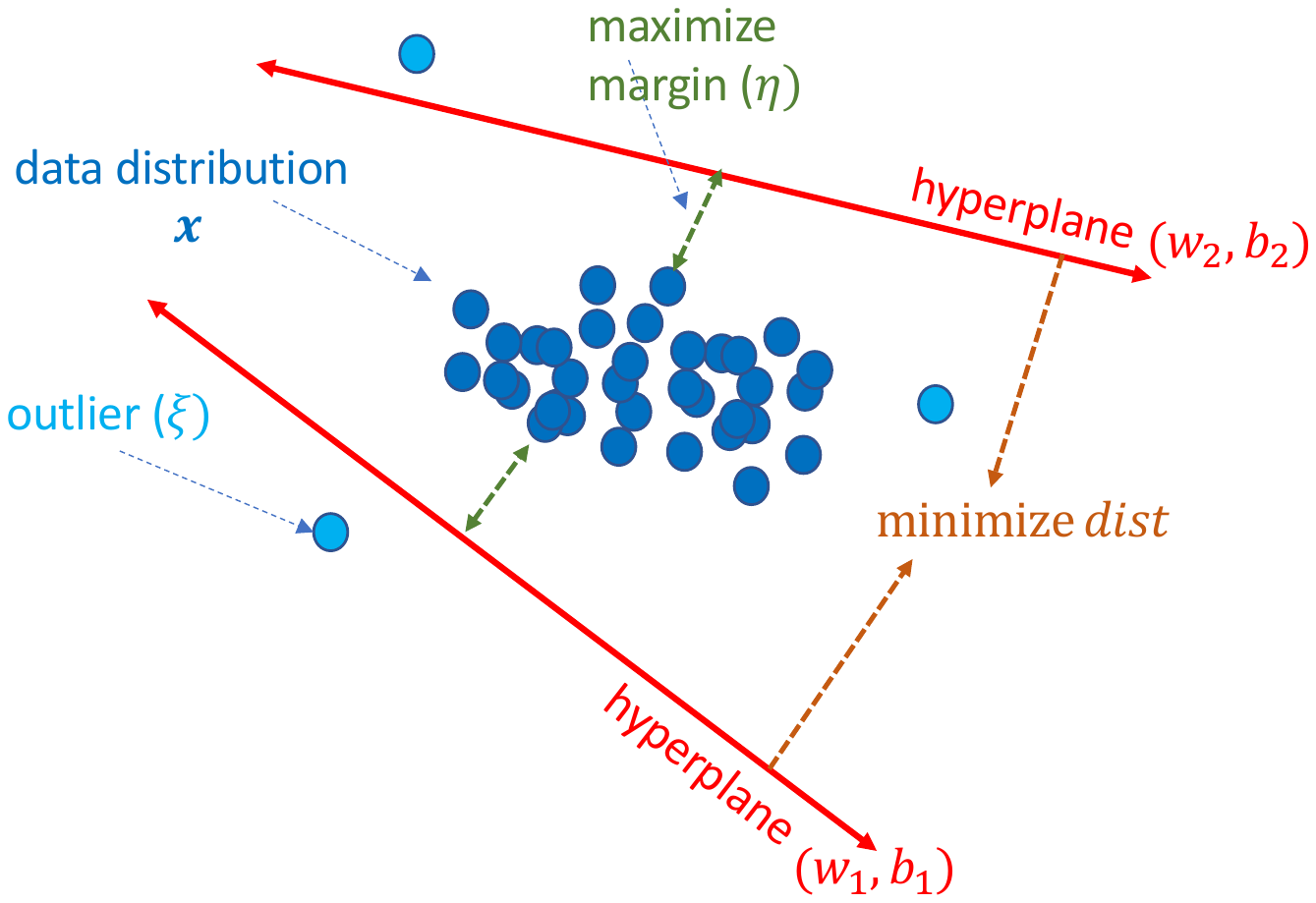}}
\subfigure[GODS]{\label{fig:gods}\includegraphics[width=6cm,trim={5cm 8cm 8cm 4cm},clip]{./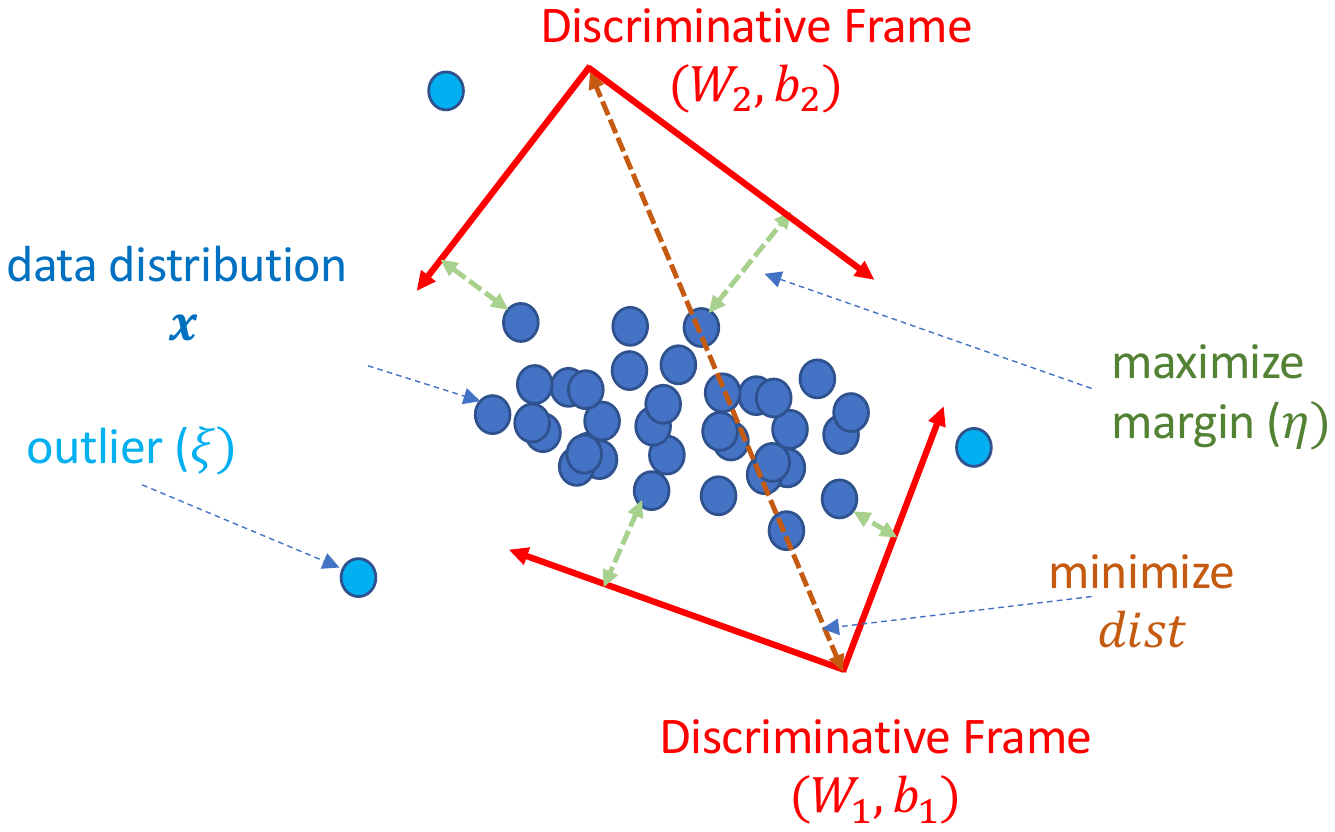}}
\subfigure[GODS in detail]{\label{fig:gods_detail}\includegraphics[width=6cm,trim={7cm 8.5cm 8cm 4cm},clip]{./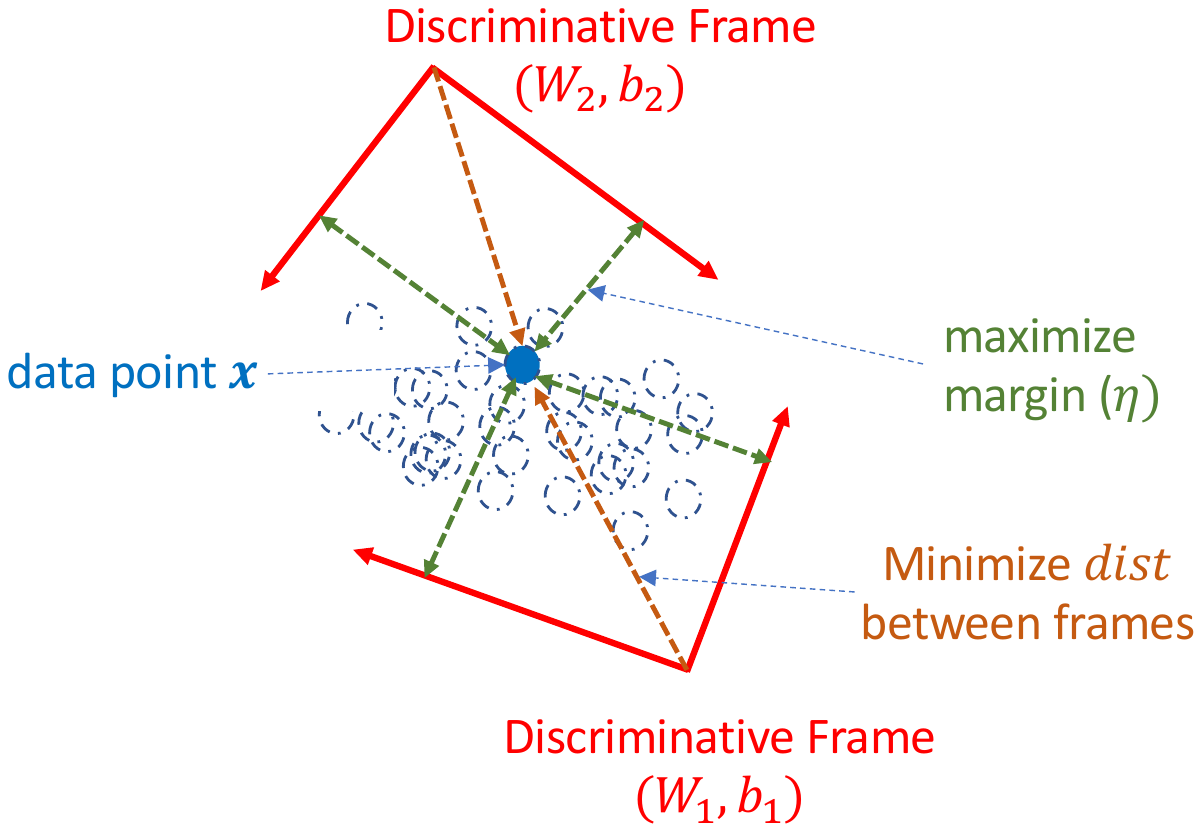}}
\caption{(a), (b) graphically illustrate our BODS and GODS formulations. Our one-class variants learn discriminative hyperplanes (BODS) or orthonormal frames (GODS) that maximize the margin with the data distribution while minimizing the volume of the one-class region captured (via minimizing the distance $\dist$ between hyperplanes or frames). In Figure~\ref{fig:gods_detail}, we show a detailed depiction of the objectives in GODS using a single data point $\vx$. See Section~\ref{sec:gods} for more details.  }
\label{fig:illustration}
\end{figure*}

\section{Background}
\label{sec:background}
Let $\dataset\subset\reals{d}$ denote the data distribution consisting of our one class-of-interest and everything outside it, denoted $\dbar$, be the anomaly set. Suppose we are given $n$ data instances $\dl=\set{\vx_1, \vx_2,\cdots, \vx_n} \subset\dataset$. The goal of one-class classifiers is to use $\dl$ to learn a functional $f$ which is positive on $\dataset$ and negative on $\dbar$. Typically, the label of $\dataset$ is assumed $+1$ and that of $\dbar$ is $-1$. We emphasize that we assume to have access \emph{only} to the one-class data for training our models; i.e., the set $\dbar$ is not available for training, and samples from $\dbar$ are used only at test time. 

In One-Class Support Vector Machine (OC-SVM)~\cite{scholkopf2001estimating}, $f$ is modeled as an extension of an SVM objective by learning a max-margin hyperplane that separates the origin from the data points in $\dl$. Mathematically, $f$ has the form $\sign(\vw^T\vx+b$), where $(\vw,b)\in\reals{d}\times\reals{1}$ and is learned by minimizing the following objective:
\begin{equation}
\min_{\vw,b,\xi\geq 0} \half\enorm{\vw}^2 - b+C\!\sum_{i=1}^n\xi_i,\subjectto \vw^T\vx_i +b +\xi_i\geq 0, \forall \vx_i\in\dl\notag,
\label{eq:oc-svm}
\end{equation}
where $\xi_i$'s are non-negative slacks, $b$ is the hyperplane intercept, and $C$ is the slack penalty.  As a single hyperplane in the input space might be insufficient to capture non-linear data, kernel feature maps are proposed in~\cite{scholkopf2001estimating}. 

Another popular variant of one-class classifiers is the support vector data description (SVDD)~\cite{tax2004support} that instead of modeling data to belong to an open half-space of $\reals{d}$ (as in the primal OC-SVM), assumes the labeled data inhabits a bounded set; specifically, the optimization seeks the centroid $\vc\in\reals{d}$ of a hypersphere of minimum radius $R>0$ that can contain all points (or a given quantile of points) in $\dl$. Mathematically, the objective reads:

\begin{equation}
\min_{\vc,R,\xi\geq 0} \half R^2\!+C\sum_{i=1}^n\xi_i, \subjectto \enorm{\vx_i-\vc}^2\!\leq\!R^2-\xi_i, \forall \vx_i \in \dl,\notag
\label{eq:svdd}
\end{equation}
where, as in OC-SVM, the $\xi$'s model the slack. As is discussed in~\cite{scholkopf2001estimating}, OC-SVM and SVDD are equivalent when the underlying kernel is translation invariant. There have been extensions of this scheme, such as the mSVDD that uses a mixture of hyperspheres~\cite{lai2015mixture}. Other variants include (i) the density-induced SVDD~\cite{lee2007density}, (ii) kernelized variants~\cite{tax2001one}, and (iii) more recently, those that use subspaces for data description~\cite{sohrab2018subspace}. A major drawback of SVDD in general is the strong assumption it makes on the isotropic nature of the underlying data distribution. Such a demand is ameliorated by combining OC-SVM with the idea of SVDD in least-squares one-class SVM (LS-OSVM)~\cite{choi2009least} that learns a tube around the discriminative hyperplane that contains the input; however, this scheme also makes strong assumptions on the data distribution (such as being cylindrical). 

Unlike OC-SVM that learns a compact data model to enclose as many training samples as possible, a different approach is to use principal component analysis (PCA) (or its varaints, such as Robust PCA\cite{candes2011robust,de2003framework,hoffmann2007kernel,nguyen2009robust,xu2013outlier}, and Kernel PCA) to summarize the data by using its principal subspaces. However, such an approach is usually unfavorable due to its high computational cost, especially when the dataset is large. Similar in motivation to the proposed technique, Bodesheim et al.~\cite{bodesheim2013kernel} uses a null space transform for novelty detection, while Liu et al.~\cite{liu2014unsupervised} optimizes a kernel-based max-margin objective for outlier removal and soft label assignment. However, their problem setups are different from ours in that \cite{bodesheim2013kernel} requires multi-class labels in the training data and \cite{liu2014unsupervised} is proposed for unsupervised learning.

In contrast to these prior methods, we explore the one-class learning problem from a distinct perspective; specifically, to use orthonormal frames as in PCA, however instead of approximating the one-class data using these frames, our objective seeks a pair of complementary orthonormal frames that bounds the data discriminatively and in a piece-wise linear manner, such that the data subspace is sandwiched between these frames. We first present a simplified variant of this idea using two complementary hyperplanes, dubbed Basic One-class Discriminative Subspaces (BODS) (Fig.~\ref{fig:bods}); these hyperplanes are independently parameterized and bounds the data distribution. Note that there is a similar prior work, termed Slab-SVM~\cite{fragoso2016one}, that learns two hyperplanes for one-class classification. However, their hyperplanes are constrained to have the same slope, which we do not impose in our BODS model; as a result, our model is more flexible than Slab-SVM. We extend BODS using multiple orthogonal hyperplanes, which we call Generalized One-class Discriminative Subspaces (GODS) (see Fig.~\ref{fig:gods}). The use of such discriminative subspaces has been recently explored in the context of representation learning on videos in Wang and Cherian~\cite{wang2018learning} and Wang et al.~\cite{wang2018video}, however requires a surrogate negative bag of features, that is found via adversarial learning.

\noindent\textbf{Notation:} We use bold-face upper-case for matrices, bold-face lower-case for vectors, non-bold-face lower case for scalars. We use $\mW$ to represent an orthonormal frame. We sometimes succinctly refer to a set of variables without their subscripts; e.g., $\mW=\set{\mW_1,\mW_2}$. The notation $\one_{p\times q}$ represents a matrix of ones with $p$ rows and $q$ columns.

\section{Proposed Method}
\label{sec:algo}
In this section, we formally introduce our schemes. First, we present BODS using a pair of hyperplanes, which we generalize to GODS using a pair of discriminative frames in Section~\ref{sec:gods}. We explore variants of GODS in Section~\ref{gods_extension} and further generalize GODS using kernel feature maps, proposing KODS in Section~\ref{sec:kgods}. With a slight abuse of terminology, we call our entire suite of formulations as GODS.
 
\subsection{Basic One-class Discriminative Subspaces}
\label{sec:bods}
In this section, we introduce a basic variant of our objective, which we call Basic One-class Discriminative Subspaces (BODS). The key goal of BODS is to bound the one-class data distribution using a pair of hyperplanes. Similar to OC-SVM, we seek these hyperplanes to have a large margin from the data distribution, thus allowing robustness to minor differences in the test data distribution. Further, inspired by SVDD, we also demand the two hyperplanes to bound the data within a minimal volume. BODS combines these two conflicting objectives into a joint formulation. Mathematically, suppose $(\vw_1, b_1)$ and $(\vw_2, b_2)$ define the parameters of the pair of hyperplanes. Our goal in BODS is then to minimize an objective such that all data points $\vx_i$ be classified to the positive half-space of $(\vw_1, b_1)$ and to the negative half-space of $(\vw_2, b_2)$, while also minimizing a suitable distance between the two hyperplanes (see Figure~\ref{fig:bods}). To this end, we propose the following BODS objective:

\begin{align}
\label{eq:0}\min_{\substack{(\vw_1,b_1), (\vw_2,b_2),\\ \xi_1,\xi_2\geq 0}}& \half\enorm{\vw_1}^2\!+\!\half\enorm{\vw_2}^2\!-\!b_1\!-\!b_2 + \vOmega(\xi_{1i},\xi_{2i})\notag\\
&\qquad\qquad+\half\dist\!\left(\left(\vw_1, b_1\right), \left(\vw_2,b_2\right)\right), \\
\label{eq:1}\subjectto &\left(\vw_1^T\vx_i + b_1\right) \geq \eta - \xi_{1i},\\
\label{eq:2}&\left(\vw_2^T\vx_i + b_2\right) \leq -\eta + \xi_{2i},
\end{align}

\noindent where the first four terms of~\eqref{eq:0} seek large margins similar to a standard OC-SVM objective. The key element of BODS is the last term in~\eqref{eq:0} that aims to minimize a suitable distance $\Dist$ between the two hyperplanes. In~\eqref{eq:1} and~\eqref{eq:2}, we capture the complementary classification constraints noted above. We use the notation $\vOmega(\xi_{1i},\xi_{2i})=C\sum_{i=1}^n\left(\xi^2_{1i}+\xi^2_{2i}\right)$ for the squared-slack regularization\footnote{We use squared-$\ell_2$ norm on the slacks instead of the standard $\ell_1$ norm to allow for smooth gradients in our optimization setup. While, this choice may hurt the sparsity of the slacks, it is usually not empirically seen to affect the classifier generalization~\cite{lee2001ssvm,mangasarian2001lagrangian}.}  and $\eta>0$ specifies a (given) classification margin. For BODS, we assume $\Dist$ is the Euclidean distance, i.e.,
$\dist\!\left(\left(\vw_1,b_1\right),\left(\vw_2,b_2\right)\right) = \enorm{\vw_1-\vw_2}^2 + (b_1-b_2)^2$. 

\subsubsection{BODS on the Unit Sphere}
\label{sec:unit_sphere}
It is often seen (especially for computer vision problems) that feature normalization, specifically unit normalizing of the inputs, demonstrate better performances \cite{jegou2009burstiness,koniusz2016higher,graf2003classification}. While, such a step may be counter-intuitive as we lose the discriminativeness in the scale of the input features, such normalization often allows counteracting the effects of \emph{burstiness}~\cite{jegou2009burstiness} -- a statistical phenomenon due to a non-uniform distribution of data elements. Such normalization is even found to improve the performances of deep-learned features when used in max-margin frameworks~\cite{ranjan2017l2,cherian2018non} and its importance to one-class tasks is ascertained in~\cite{tax2000feature}, in the context of SVDD. As the primary focus of this paper is in developing one-class solutions for vision problems, we decided to make this normalization as part of the pre-processing steps when generating our input features and thus in deriving our core formulations, (albeit we consider variants of our formulation without such assumptions in the next section). We empirically validate this assumption in Section~\ref{exp_result}.

Following this idea, we assume that our data is unit normalized, i.e., $\enorm{\vx_i}=1$, and thus belongs to a unit hypersphere  $\unitsphere{d-1}$, which is a sub-manifold of the Euclidean manifold $\reals{d}$. This assumption\footnote{Such an SVM formulation is classically known as \emph{normalized margin SVMs}, theoretically analyzed for its generalization performance in~\cite{herbrich2001pac} and~\cite{smola1998learning}[Sec. 10.6.2].} on the data naturally places our hyperplanes also to belong to $\unitsphere{d-1}$; i.e., $\enorm{\vw_1} = \enorm{\vw_2}=1$.  Using these manifold constraints, our BODS formulation can be rewritten  as follows: 
\begin{align}
&\label{eq:7}\problem_1:=\min_{\vw_1,\vw_2\in\unitsphere{d-1}, b_1,b_2}\half\valpha(b_1,b_2) -\vw_1^T\vw_2  \\
&+\frac{\nu}{2n}\sum_{i}\hinge{\eta-\left(\vw_1^T\vx_i+b_1\right)}^2+\hinge{\eta\!+\!\left(\vw_2^T\vx_i+b_2\right)}^2\notag,
\end{align}
where $\valpha(b_1,b_2)=(b_1-b_2)^2-2(b_1-b_2)$. We further simplify the BODS objective by substituting the constraints on the slacks $\xi$'s in~\eqref{eq:1} and~\eqref{eq:2} into $\vOmega$ in~\eqref{eq:0} and include them in the objective as soft constraints using the hinge loss $\hinge{\ \ }$. We use $\nu$ to denote a penalty factor on these soft constraints. In Fig.~\ref{fig:bods}, we illustrate the decision boundaries of the BODS model. 

While, BODS offers a simple and flexible model to capture the one-class distribution, the linearity of the classifiers may limit its applications to sophisticated data models. A natural idea is then to empower these classifiers to have non-linear decision boundaries. While, using a kernel method is perhaps a standard approach in this regard (which we present subsequently), we first propose to achieve non-linearity via piecewise linear decision boundaries. To this end, we equip each classifier in BODS with a set of hyperplanes; each set forming a complementary pair with the other. The use of piecewise linear decision boundaries makes inference computationally cheap as it requires only $2K$ inner products during inference, assuming $K$ hyperplanes per set. Further, we also avoid the need for computing kernel matrices, allowing for scalability of our approach to larger datasets. However, using sets of hyperplanes brings in the challenge of how to effectively regularize them to avoid overfitting and redundancy. To this end, in the following subsections, we generalize BODS to use pairs of multiple hyperplanes, regularized as orthonormal frames, thus providing a richer and non-linear discriminative setup, and subsequently present other regularizations and kernel embeddings.

\subsection{One-class Discriminative Subspaces}
\label{sec:gods}
Let us continue to assume the input data is unit normalized, we will remove this assumption in the next section. Formally, suppose $\mW_1,\mW_2\in\stiefel{K}$ be orthonormal frames -- that is, matrices of dimensions $d\times K$, each with $K$ columns where each column is orthonormal to the rest; i.e., $\mW_1^T\mW_1=\mW_2^T\mW_2=\eye{K}$, where $\eye{K}$ is the $K\times K$ identity matrix (see Fig.~\ref{fig:gods}). Such frames belong to the so-called Stiefel manifold, denoted~$\stiefel{K}$, with $K$ $d$-dimensional directions. Note that the orthogonality assumption on the $\mW_i$'s is to ensure they capture diverse discriminative directions, leading to better regularization, while also improving their characterization of the data distribution. A direct extension of $\problem_1$ then leads to:
\begin{align}
&\problem_2:=\min_{\substack{\mW\in\stiefel{K}, \vb}}\half\distW(\mW_1,\mW_2) +\valpha(\vb_1, \vb_2)\\
&\qquad\label{eq:10}+\frac{\nu}{2n}\sum_i\hinge{\eta-\min(\mW_1^T\vx_i+\vb_1)}^2\\
&\qquad\label{eq:11}+\frac{\nu}{2n}\sum_i\hinge{\eta + \max(\mW_2^T\vx_i+\vb_2)}^2,
\end{align}
where $\DistW$ is a suitable distance between orthonormal frames, and $\vb\in\reals{K}$ is a vector of biases, one for each hyperplane. Note that in~\eqref{eq:10} and~\eqref{eq:11},  unlike BODS, $\mW^T\vx_i+\vb$ is a $K$-dimensional vector. Thus,~\eqref{eq:10} says that the minimum value of this vector should be greater than $\eta$ and~\eqref{eq:11} says that the maximum value of it is less than $-\eta$. To simplify the notation, let us use $\zeta(\mW,\vb)=\valpha(\vb_1, \vb_2)+\eqref{eq:10}+\eqref{eq:11}$. Then, $\problem_2$ can be written as follows:

\begin{align}
    \problem_2':= \min_{\substack{\mW\in\stiefel{K}, \vb}} &-\trace{\mW_1^T\mW_2} +\zeta(\mW,\vb).
    \label{eq:26}
\end{align}

The formulation $\problem_2'$, due to the first term, enforces a \emph{tight coupling} between $\mW_1$ and $\mW_2$; such a  coupling might prevent the frames from freely aligning to the data distribution, resulting in sub-optimal performance. To circumvent this issue, we propose the following work around. Recall that the main motivation to define the distance between the frames is so that they sandwich the (one-class) data points compactly. Thus, rather than defining a distance between the frames directly, one could also use a measure that minimizes the Euclidean distance of each data point from both the hyperplanes; thereby achieving the same effect. Such a distance via the data points will also make the frames loosely coupled. More formally, we propose to redefine $\distW$ as:
\begin{equation}
\distW(\mW_1,\mW_2, \vb_1,\vb_2|\vx) = \half\sum_{j=1}^2\enorm{\mW_j^T\vx+\vb_j}^2,
\label{eq:12}
\end{equation}
where now we minimize the sum of the lengths of each $\vx$ after projecting on to the respective frames; thereby pulling both the frames closer to the data point. Using this definition of $\distW$, we formulate our \emph{generalized one-class discriminative subspace} (GODS) classifier as:
\begin{align}
&\label{Problem3}\gods:=\!\!\!\min_{\substack{\mW\in\stiefel{K}, b}} F= \frac{1}{2n}\sum_{i=1}^n \sum_{j=1}^2\enorm{\mW_j^T\vx_i+\vb_j}^2\\
&+\frac{\nu}{2n}\!\!\sum_i\!\hinge{\eta\!-\!\min(\mW_1^T\vx_i+\vb_1)}^2\!\!\!+\!\!\hinge{\eta\!+\! \max(\mW_2^T\vx_i+\vb_2)}^2\!\!\!\!.\notag
\end{align}
See Figure~\ref{fig:gods_detail} for a graphical illustration of this variant of our GODS objective.

\subsection{Extensions to GODS Formulation}
\label{gods_extension}
The technical development of GODS in the previous section assumes the input data is unit normalized, as otherwise the orthornormal frames for discriminating them may not be generally fruitful. Our other important assumption in the previous section -- that the hyperplanes in our discriminative decision parameters $\mW$ are orthonormal -- can be restrictive as well. In this section, we provide extensions of our GODS framework that relax or remove these restrictions. In the following variants, we will assume $\lambda>0$ to generically denote a penalty on the respective regularization.

\subsubsection{Non-Compact Stiefel Manifold}
A matrix $\mW\in\reals{d\times K}$ with its $K$ columns being linearly independent, however not unit normalized, belongs to the so-called non-compact Stiefel manifold (Absil et al.~\cite{absil2009optimization}[Chapter 3]), denoted $\reals{d\times K}_*$, which is an open subspace of the Euclidean space $\reals{d\times K}$. One may represent such a manifold as a product manifold between a $d\times K$ Stiefel manifold and a $K\times 1$ Euclidean vector; i.e., $\reals{d\times K}_*=\stiefel{K}\times \reals{K}$. For example, if $\mW\in\reals{d\times K}_*$, then $\mW={\mQ}\diag(\vr)$, where $\mQ\in\stiefel{K}$, $\vr\in\reals{K}$, and the $i$-th dimension $\vr_i=\enorm{\mW_{:,i}}$, the $\ell_2$ norm of the $i$-th column in $\mW$. For non-unit-norm input data, we can extend the GODS formulation in~\eqref{Problem3} using a non-compact Stiefel manifold as:
\begin{align}
    \gods_N :=\!\!\!\min_{(\mQ,\vr)\in\stiefel{K}\times \reals{K}\!, b} \!\!F\left(\mQ\diag(\vr), b\right) + \frac{\lambda}{2}\norm{\vr}_p,
    \label{eq:ncstiefel}
\end{align}
where $F$ is the objective in~\eqref{Problem3} and $\lambda>0$ is a penalty on the $\ell_p$-norm regularization over $\vr$. Note that in~\eqref{eq:ncstiefel}, for brevity we assume $(\mQ,\vr)=\set{(\mQ_i,\vr_i)}_{=1}^2$, i.e., it is technically a product of two non-compact Stiefel manifolds corresponding to the two discriminative frames. There is an additional advantage with the proposed subspace representation -- it can allow automatic selection of the number of subspace components one may need. For example, with the $\ell_p$ regularization on $\vr$, some of the dimensions in $\vr$ can go to zero (say using $p=1$), thereby removing the respective subspace component from the final representation. 

\subsubsection{Oblique Manifold}
We may also relax the orthogonality constraints on $\mW$, however maintain their unit normality. A set of matrices $\ob{K}=\set{\mW\in\reals{d\times K}: \diag(\mW^\top\mW) = \eye{K}}$ forms a regular submanifold of the Euclidean manifold and is usually called an Oblique manifold~\cite{absil2006joint} under the canonical inner product metric. This manifold is isometric to the product of $K$ spheres $\times_1^K\stiefel{1}$. We can rewrite a variant of GODS with the optimization on $\ob{K}$ as:
\begin{align}
    \gods_O :=\!\!\!\!\!\min_{\mW\in\ob{K},b}\!\!\!F(\mW, b)\!+\! \frac{\lambda}{2}\sum_{i=1}^2\fnorm{\mW_i^{\top}\mW_i\!-\!\eye{K}}^2,
    \label{eq:gods0}
\end{align}
the last term \emph{softly} controls the correlations among columns in $\mW$.

\subsubsection{Euclidean Manifold}
Removing both the orthogonality and the unit norm constraints on the classifiers and the input data results in our most general form of the GODS formulation, that assumes $\mW$ belongs to the Euclidean manifold. We can write such a variant as:
\begin{align}
    \gods_E := &\min_{\mW\in\reals{d\times K}, b} F(\mW, b) + \frac{\lambda}{2}\sum_{i=1}^2\fnorm{\mW_i^{\top}\mW_i-\eye{K}}^2.
    \label{eq:euc}
\end{align}
Similar to~\eqref{eq:gods0}, the last term in $\gods_E$ controls the correlations between the columns in $\mW$; a large $\lambda$ will promote $\mW$ to be similar to the original GODS formulation using the Stiefel manifold in~\eqref{Problem3}.

\subsection{Kernelized One-class Discriminative Subspaces}
\label{sec:kgods}
While, the classifier as described in our GODS formulation can offer computationally efficient, yet non-linear decision functions in the input space, it may fail in situations when input data cannot be bounded using rectilinear coordinates. For example, when the outliers form dense regions inside the one-class, or when the normal data forms islands, are xor-shaped, or ring-shaped. This lends a kernelized variant of GODS scheme inevitable.  

To derive kernelized GODS, we use our formulation\footnote{We attempted to use other GODS variants, however they resulted in objectives that seemed computational expensive.} in $\problem_2$, however expand the min and max constraints in~\eqref{eq:10} and~\eqref{eq:11} via propagating the $\eta$ to each of the $K$ hyperplanes. Such a simplification allows for a direct application of the Langrange multiplers to derive the dual. We also assume that there are no outliers in the one-class data provided. This allows us to simplify the expressions we derive.\footnote{We also note that the use of slacks brings in additional constraints, making our optimization setup computationally difficult.} With these simplifications, we rewrite our modified $\problem_2$ as:
\begin{align}
\label{eq:kgods-primal}
\min_{\substack{\mW\in\stiefel{K}, \vb}} & \half\fnorm{\mW_1-\mW_2}^2+\half\enorm{\vb_1-\vb_2}^2 \\
\subjectto & \eta - \left(\mW_{1j}^T\vx_i + \vb_1\right) \leq 0,\quad\forall j\in\indexset{K}, i\in\indexset{n}\notag\\
    &\eta + \left(\mW_{2j}^T\vx_i +\vb_2\right) \leq 0,\quad\forall j\in\indexset{K}, i\in\indexset{n}\notag.
\end{align}
Using the fact that $\mW^{\top}\mW=\eye{K}$, and using non-negative dual variables $\mY,\mZ\in\reals{K\times n}_+$, we have the following Lagrangian formulation of~\eqref{eq:kgods-primal}:
\begin{align}
    L(\mW,\vb,\mY,\mZ) &:= -\trace{\mW_1^\top\mW_2} + \half\enorm{\vb_1-\vb_2}^2 +\notag\\
    & \trace{\mY^\top\left(\eta\one_{K\times n}-\mW_1^{\top}\mX - \vb_1\one_n^{\top}\right)}+\notag\\
    &\trace{\mZ^{\top}\left(\mW_2^{\top}\mX + \vb_2\one_n^\top+\eta\one_{K\times n}\right)}.
\end{align}
A straightforward reduction provides the following dual:
\begin{align}
    \min_{\mY,\mZ\in\reals{K\times n}_+}  \half\one_{n}^\top\mY^{\top}&\mY\one_{n}\!\!+\!\!\trace{\mY\kernel\mZ^T}
   \!-\!\eta\trace{\!\left(\mY\!\!+\!\!\mZ\right)^{\top}\!\!\one_{K\times n}}\!\!\notag\\
    \label{eq:sumcons}\text{s. t. } &(\mY - \mZ)\one_n = \zero\\
    &\mY\kernel\mY^{\top} = \mZ\kernel\mZ^{\top} = \eye{K},\label{eq:kgods-3}
\end{align}

\noindent where $\kernel=\mX^{\top}\mX$ is a linear kernel, however could be replaced by any other positive definite kernel via the kernel trick. We call our formulation above as \emph{kernelized one-class discriminative subspaces} (KODS). Recall that, for $\kernel\in\reals{n\times n}\succ 0$, the constraints in~\eqref{eq:kgods-3} pose the KODS objective on the generalized Stiefel manifold $\gs{K\times n}$, formally defined as $\gs{K\times n} = \set{\Lambda\in\reals{K\times n}: \Lambda\kernel\Lambda^{\top}=\eye{K}, \kernel\succ 0}$.  However, there are two constraints in our objective that adds hurdle to directly using this manifold for optimization: (i) the null-space constraint in~\eqref{eq:sumcons}, and (ii) the requirement that the dual variables $\mY, \mZ$ are non-negative. Below, we present soft-constraints circumventing these challenges and derive an approximate KODS objective.

\subsubsection{Approximate KODS Formulation}
We avoid the null-space constraint in KODS via incorporating~\eqref{eq:sumcons} as a soft-constraint into the KODS objective using a regularization penalty, $\lambda>0$. To circumvent the non-negative constraints, we replace the dual variables $\mY,\mZ$ by their element-wise squares, e.g., $\mY\in\reals{K\times n}: \mY\to\mYY$, while retaining $\mY\in\gs{K\times n}$. Note that the latter heuristic has been used before, such as in approximating quadratic assignment problems~\cite{wen2013feasible}. With these changes, we provide our \emph{approximate KODS formulation} as:
\begin{align}
      \min_{\mY,\mZ\in\gs{K\times n}}&\!\!\!\mathcal{K}(\mY_,\mZ) = \half\one_{n}^\top\mYY^{\top}\mYY\one_{n} \notag\\
        &+ \trace{\mYY\kernel\mZZ^T} \notag\\
        &-\eta\trace{\left(\mYY+\mZZ\right)^{\top}\!\!\one_{K\times n}}\notag\\
        &+\frac{\lambda}{2} \enorm{(\mYY-\mZZ)\one}^2,\label{eq:kgods-5}
\end{align}
where the last factor corresponds to~\eqref{eq:sumcons}.
Note that we use the squared form only on the optimization variables, and not on the constraints, and thus our objective is still on the generalized Stiefel product manifold. 

To derive the classification rules at test time (in the next section), we will need expressions for the primal variables in terms of the duals, which we provide below:
\begin{align}
    &\mW_1(.) = \mZZ\kernel(\mX, .)\\
    & \mW_2(.) = -\mYY\kernel(\mX,.)\\
     &\vb_1 = \rowmax\left(\eta - \mZZ\kernel\right)\\
     &\vb_2 = \rowmin\left(-\eta+\mYY\kernel\right), 
\end{align}
where $\rowmax$ and $\rowmin$ corresponds to the maximum and minimum values along the rows of the respective matrices.

\section{One-class Classification}
\label{sec:classification}
During inference, we use the decision functions with the learned parameters to classify a given data point as in-class or out-of-class. Specifically, for a new data point $\vx$, it is classified as in-class if the following criteria is met:
\begin{equation}
    \min(\mW_1(\vx) + \vb_1) \geq \eta \wedge \max(\mW_2(\vx)+\vb_2) \leq -\eta,
    \label{eq:decision}
\end{equation}
where the variables $\mW$ and $\vb$ are either learned in the KODS formalism or the GODS. In case, we have access to a validation set consisting of in-class and out-of-class data (for which we know the class labels), then we may calibrate the threshold $\eta$ to improve our decision rules. Specifically, suppose we have access to $m$ such validation data points, denoted $\dataset_{v}$. Then, to estimate an updated threshold $\eta'$, we propose to compute the decision scores $\sv_l=\set{\min(\mW_1(\vx)+\vb_1)}_{\vx\in\dataset_v}$ and $\sv_u=\set{\max(\mW_2(\vx)+\vb_2)}_{\vx\in\dataset_v}$. Next, we apply K-Means (or spectral clustering) on $\sv_l$ and $\sv_u$ with $K=2$ clusters. Suppose $c_{lk}$ and $c_{uk}$ ($k=1,2$) are the respective centroids for the two clustering problems; then we propose to update $\eta'$ as the average of the smaller of the two centroids thresholded by $\eta$; i.e.,
\begin{equation}
    \Delta \eta=\half\left(\hinge{\eta-\min(c_{l1},c_{l2})}\!\!-\!\hinge{\eta+ \min(c_{u1},c_{u2})}\right),
\end{equation}
and use $\eta'=\eta+\Delta\eta$ to form the new decision rules in~\eqref{eq:decision}.

\section{GODS Optimization}
In contrast to OC-SVM and SVDD, the $\gods$ formulation in~\eqref{Problem3} is non-convex due to the orthogonality constraints on $\mW_1$ and $\mW_2$.\footnote{Note that the function $\max(0, \min(z))$ for $z$ in some convex set is also non-convex.} However, these constraints naturally impose a geometry on the solution space  and in our case, puts optimization on the Stiefel manifold~\cite{muirhead2009aspects} -- a Riemannian manifold characterizing the space of all orthogonal frames. There exist several schemes for geometric optimization over Riemannian manifolds (see~\cite{absil2009optimization} for a detailed survey) from which we use the Riemannian conjugate gradient (RCG) scheme in this paper, due to its stable and fast convergence. In the following, we review some essential components of the RCG scheme and provide the necessary formulae for using it to solve our objectives.

\subsection{Riemannian Conjugate Gradient}
Recall that the standard (Euclidean) conjugate gradient (CG) method~\cite{absil2009optimization}[Sec.8.3] is a variant of the steepest descent method, however chooses its descent along directions conjugate to previous descent directions with respect to the parameters of the objective. Formally, suppose $F(\mW)$ represents our objective.\footnote{The other optimization variable -- $\vb$, belongs to the Euclidean manifold, and thus $(\mW,\vb)\in\stiefel{K}\times \reals{K}$. However for brevity and focus, we omit these variables from our optimization discussion.} Then, the CG method uses the following recurrence at the $k$-th iteration:
\begin{equation}
    \mW^{k} = \mW^{k-1} + \lambda^{k-1} \alpha^{k-1},
    \label{eq:recc}
\end{equation}
where $\lambda$ is a suitable step-size (found using line-search) and $\alpha^{k-1}=-\grad F(\mW^{k-1}) + \mu^{k-1} \alpha^{k-2}$, where $\grad F(\mW^{k-1})$ defines the gradient of $F$ at $\mW^{k-1}$ and $\alpha^{k-1}$ is a direction built over the current residual, which is conjugate to previous descent directions (see~\cite{absil2009optimization}[pp.182])). 

When $\mW$ belongs to a curved Riemannian manifold, we may use the same recurrence, however there are a few important differences from the Euclidean CG case, namely (i) we need to ensure that the updated point $\mW^k$ belongs to the manifold, (ii) there exists efficient vector transports\footnote{This is required for computing $\alpha_{k-1}$ that involves the sum of two terms in potentially different tangent spaces, which would need vector transport for moving between them; see~\cite{absil2009optimization}[pp.182].} for computing $\alpha^{k-1}$, and (iii) the gradient $\grad$ is along tangent spaces to the manifold. For (i) and (ii), we may resort to computationally efficient retractions (using QR factorizations; see~\cite{absil2009optimization}[Ex.4.1.2]) and vector transports~\cite{absil2009optimization}[pp.182], respectively. For (iii), there exist standard ways that take as input a Euclidean gradient of the objective (i.e., assuming no manifold constraints exist), and maps them to the Riemannian gradients~\cite{absil2009optimization}[Chap.3]. Specifically, for the Stiefel manifold, let $\nabla_{\mW} F(\mW)$ define the Euclidean gradient of $F$ (without the manifold constraints), then the Riemannian gradient is given by:
\begin{equation}
    \grad F(\mW) = \nabla_{\mW} F(\mW) - \mW\nabla_{\mW} F(\mW)^{\top}\mW .
    \label{eq:grad}
\end{equation}
The direction $\grad F(\mW)$ corresponds to a curve along the manifold, descending along which ensures the optimization objective is decreased (atleast locally).

Now, getting back to our one-class objective, all we need to derive to use the RCG, is compute the Euclidean gradients $\nabla_\mW F(\mW)$ of our objective in $\gods$ with regard to the variables $\mW_j$s. The other variables, such as the biases and slacks, belong to the Euclidean space and their gradients are straightforward. The expression for the Euclidean gradient of our objective with respect to the $\mW$'s is given by: 

\begin{align}
\label{eq:grad_1}\frac{\partial F}{\partial \mW_1} &= \sum_{i=1}^n\vx_i\left(\mW_1^T\vx_i+\vb_1\right)^T-\mZ_{k_i^*}\hinge{\eta-\mW_{1,k_i^*}^T\vx_{i}-b_1},\\
\label{eq:grad_2}\frac{\partial F}{\partial \mW_2} &= \sum_{i=1}^n\vx_i\left(\mW_2^T\vx_i+\vb_2\right)^T +\mZ_{k_i^*}\!\!\hinge{\eta+\mW_{2,k_i^*}^T\vx_{i}+b_2},
\end{align}
where $k_i^*\in\indexset{K}$ denotes the hyperplane index for the respective subspaces; $k_i^*=\argmin_k(\mW_1^T\vx_i+\vb_1)$ for~\eqref{eq:grad_1} and  $k_i^*=\argmax_k(\mW_2^T\vx_i+\vb_2)$ for~\eqref{eq:grad_2}. The variable $\mZ_{k_i^*}$ is a $d\times K$ matrix with all zeros, except $k_i^*$-th column, which is set to $\vx_i$.

\subsection{KODS Optimization}
\label{sec:kods_optimization}
In this section, we will derive the gradients for our approximate KODS formulation provided in~\eqref{eq:kgods-5}. Similar to~\eqref{eq:grad}, the mapping from the Euclidean gradient to the Riemannian gradient for the generalized Stiefel manifold is provided in the following theorem.
\begin{theorem}
 For the optimization problem $\min_{\mU} \fK(\mU)$ $\text{ s.t. } \mU\kernel\mU^{\top}=\eye{K},\ \kernel\succ 0$, if $\nabla_{\mU}\fK(\mU)$ denotes the Euclidean gradient of $\fK(\mU)$, then the Riemannian gradient under the canonical metric is given by: 
 \begin{equation}
     \grad \fK(\mU) = \nabla_{\mU}\fK(\mU)\kernel^{-1} - \mU\nabla_{\mU}\fK(\mU)^{\top}\mU.
 \end{equation}
\end{theorem}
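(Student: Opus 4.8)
The plan is to treat $\gs{K\times n}$ as an embedded submanifold of $\reals{K\times n}$ cut out by $\mU\kernel\mU^{\top}=\eye{K}$, equip it with the canonical metric, and then identify $\grad\fK(\mU)$ as the unique tangent vector representing the differential $D\fK(\mU)[\xi]=\trace{\nabla_{\mU}\fK(\mU)^{\top}\xi}$ under that metric. First I would differentiate the defining constraint along a curve $\mU(t)$ with $\mU(t)\kernel\mU(t)^{\top}=\eye{K}$, giving $\xi\kernel\mU^{\top}+\mU\kernel\xi^{\top}=\zero$; hence $T_{\mU}\gs{K\times n}=\set{\xi\in\reals{K\times n}: \mU\kernel\xi^{\top}\text{ is skew-symmetric}}$. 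To pin down the canonical metric I would use the substitution $\widetilde{\mU}=\mU\kernel^{1/2}$, which is a diffeomorphism onto the row-orthonormal Stiefel manifold $\set{\widetilde{\mU}:\widetilde{\mU}\widetilde{\mU}^{\top}=\eye{K}}$ and sends $\xi$ to $\widetilde{\xi}=\xi\kernel^{1/2}$. Pulling back the standard canonical Stiefel metric of~\cite{edelman1998geometry} through this isometry, a short computation yields $g_{\mU}(\xi,\zeta)=\trace{\xi\kernel\zeta^{\top}}-\half\trace{\xi\kernel\mU^{\top}\mU\kernel\zeta^{\top}}$.

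Next I would check that the claimed expression $\Xi:=\nabla_{\mU}\fK(\mU)\kernel^{-1}-\mU\nabla_{\mU}\fK(\mU)^{\top}\mU$ is tangent. Writing $G=\nabla_{\mU}\fK(\mU)$ and using $\mU\kernel\mU^{\top}=\eye{K}$, I compute $\Xi\kernel\mU^{\top}=G\mU^{\top}-\mU G^{\top}$, which is manifestly skew-symmetric, so $\Xi\in T_{\mU}\gs{K\times n}$. It then remains to verify the variational identity $g_{\mU}(\Xi,\xi)=\trace{G^{\top}\xi}$ for every tangent $\xi$. Substituting $\Xi$ into the metric and using $\mU\kernel\mU^{\top}=\eye{K}$ once more, the leading piece $\trace{\Xi\kernel\xi^{\top}}$ splits as $\trace{G\xi^{\top}}-\trace{\mU G^{\top}T}$ with $T:=\mU\kernel\xi^{\top}$, while the correction term reduces to $-\half\trace{(G\mU^{\top}-\mU G^{\top})T}$.

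The crux -- and the step I expect to be the main obstacle -- is showing these leftover traces cancel, and this is exactly where tangency of $\xi$ enters. The constraint forces $T=\mU\kernel\xi^{\top}$ to be skew-symmetric, so transposing inside the trace and using cyclicity gives $\trace{\mU G^{\top}T}=-\trace{G\mU^{\top}T}$; feeding this into the sum above makes the residual from the leading piece cancel against the correction term, leaving precisely $\trace{G\xi^{\top}}=\trace{G^{\top}\xi}$. Since a positive-definite Riemannian metric represents the differential by a unique tangent vector, this identifies $\Xi$ with $\grad\fK(\mU)$ and proves the formula. As a sanity check, setting $\kernel=\eye{n}$ collapses the expression to $\nabla_{\mU}\fK(\mU)-\mU\nabla_{\mU}\fK(\mU)^{\top}\mU$, the row-orthonormal analogue of the compact-Stiefel gradient in~\eqref{eq:grad}.
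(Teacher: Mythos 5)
Your proof is correct, and I verified the key computations: the tangency check $\Xi\kernel\mU^{\top}=G\mU^{\top}-\mU G^{\top}$ (skew), and the cancellation in $g_{\mU}(\Xi,\xi)=\trace{G\xi^{\top}}-\trace{\mU G^{\top}T}-\half\trace{\left(G\mU^{\top}-\mU G^{\top}\right)T}$, which indeed collapses to $\trace{G^{\top}\xi}$ once skew-symmetry of $T$ is used. However, your route is genuinely different from the paper's: the paper does not prove the theorem at all, but simply defers to the results of Edelman, Arias, and Smith (the reference cited for the geometry of orthogonality constraints), treating the generalized-Stiefel gradient formula as a known consequence of that work. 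What your argument buys is self-containedness and, more importantly, precision about what ``canonical metric'' means here: you pin it down as the pullback of the Edelman canonical Stiefel metric through the diffeomorphism $\mU\mapsto\mU\kernel^{1/2}$, yielding $g_{\mU}(\xi,\zeta)=\trace{\xi\kernel\zeta^{\top}}-\half\trace{\xi\kernel\mU^{\top}\mU\kernel\zeta^{\top}}$. This matters because other common metrics on the same constraint set (e.g., $\trace{\xi\kernel\zeta^{\top}}$ alone, as used in some software implementations) produce a different gradient formula involving $\mathrm{sym}(G\mU^{\top})\mU$ rather than $\mU G^{\top}\mU$; your derivation shows the stated formula is exactly the one consistent with the canonical choice, and your sanity check at $\kernel=\eye{n}$ correctly recovers the compact-Stiefel expression used elsewhere in the paper. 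The paper's citation is shorter; your verification is the one a reader could check line by line.
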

\begin{proof}
The proof follows directly from the results in ~\cite{edelman1998geometry}[Section 4.5].
\end{proof}
For the retraction of the iterates on to the manifold, we use the generalized polar decomposition~\cite{higham2010canonical} as suggested in~\cite{boumal2014manopt}. As in the previous section, next we derive the expressions for the Euclidean gradients.
\begin{proposition}
Let $f(\mY)$ be a differentiable matrix function, then the matrix gradient $\nabla_{\mY^{\top}} f(\mY) = \left(\nabla_{\mY} f(\mY)\right)^{\top}$.
\label{prop:1}
\end{proposition}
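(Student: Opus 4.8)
The plan is to establish the identity entrywise, directly from the definition of the matrix gradient. Recall that for a scalar-valued differentiable function $f$ of a matrix argument, the gradient $\nabla_{\mY} f(\mY)$ is, under the convention adopted throughout this paper, the matrix of the \emph{same shape} as $\mY$ whose $(i,j)$ entry is the partial derivative of $f$ with respect to the $(i,j)$ entry of $\mY$; that is, $\left[\nabla_{\mY} f(\mY)\right]_{ij} = \partial f / \partial [\mY]_{ij}$. Making this layout convention explicit at the outset is important, since the asserted transpose relationship is exactly what is being claimed and should not be confused with a numerator-layout convention.

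First I would fix an arbitrary index pair $(i,j)$ and write out the $(i,j)$ entry of the left-hand side. Applying the same convention to the argument $\mY^{\top}$ gives $\left[\nabla_{\mY^{\top}} f(\mY)\right]_{ij} = \partial f / \partial [\mY^{\top}]_{ij}$. The key elementary observation is the relabeling $[\mY^{\top}]_{ij} = [\mY]_{ji}$: differentiating $f$ with respect to the $(i,j)$ entry of $\mY^{\top}$ is literally differentiating with respect to the $(j,i)$ entry of $\mY$, so this quantity equals $\partial f / \partial [\mY]_{ji} = \left[\nabla_{\mY} f(\mY)\right]_{ji}$. Finally, $\left[\nabla_{\mY} f(\mY)\right]_{ji}$ is by definition the $(i,j)$ entry of $\left(\nabla_{\mY} f(\mY)\right)^{\top}$. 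Since $(i,j)$ was arbitrary, the two matrices agree entrywise, which is the claim.

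There is no genuine analytical difficulty here, so I do not expect a substantive obstacle; the statement is essentially a bookkeeping identity. The only point requiring care is fixing the gradient-layout convention before starting, because the transpose relationship would be obscured (or even reversed) under an inconsistent convention. Once $\nabla_{\mY} f$ is declared to have the same shape as $\mY$, the result is a one-line consequence of the coordinate relabeling $[\mY^{\top}]_{ij} = [\mY]_{ji}$, with no chain-rule subtleties since the map $\mY \mapsto \mY^{\top}$ merely permutes coordinates. This proposition is purely a convenience lemma, used subsequently to transfer Euclidean-gradient computations performed with respect to $\mY$ into the forms needed for the KODS optimization.
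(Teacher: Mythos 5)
Your proof is correct: the entrywise relabeling $[\mY^{\top}]_{ij}=[\mY]_{ji}$, under the same-shape gradient convention, is exactly the right justification, and the observation that the transpose map is a mere coordinate permutation (so no chain-rule terms arise) is sound. Note that the paper states Proposition~\ref{prop:1} without any proof at all, treating it as a bookkeeping fact to be invoked later in the KODS gradient derivations, so your argument simply supplies the routine verification the paper leaves implicit.
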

\begin{lemma}
For matrices $\mY,\mA,\text{ and } \mD$ of appropriate sizes, if $f(\mY)=\trace{\left(\mY\odot\mA\right)\mD}$, then $\nabla_{\mY}f(\mY)\!=\!\mA\odot\mD^T$.
\begin{proof}
If $\va_{i:},\vy_{i:},\vd_{:j}$ represent the $i$-th row and $j$-th column of matrices $\mA,\mY,\mD$ respectively, then 
\begin{align}
    f(\mY) &= \sum_{i}(\va_{i:}\odot\vy_{i:})^{\top}\vd_{:i} = \sum_{i}\vy_{i:}^{\top}\left(\va_{i:}\odot\vd_{:i}\right).
\end{align}
Then, the gradient w.r.t. $\vy_{ij}$, i.e., $\nabla_{\vy_{ij}} f(\mY)=\va_{ij}\odot\vd_{ji}$, and we have the desired result.
\end{proof}
\label{thm:1}
\end{lemma}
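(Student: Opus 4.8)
The plan is to prove the identity by direct computation in scalar (index) coordinates, since the claim is an identity about a trace functional that is linear in $\mY$, and the gradient of such a functional can be read off entrywise. First I would pin down the compatible dimensions: if $\mY,\mA\in\reals{m\times p}$, then $\mY\odot\mA\in\reals{m\times p}$, and for $(\mY\odot\mA)\mD$ to be square (so that its trace is defined) we need $\mD\in\reals{p\times m}$. This makes both $\nabla_{\mY}f$ and $\mA\odot\mD^{\top}$ matrices of size $m\times p$, so the two sides of the claimed identity at least inhabit the same space — a useful sanity check before any algebra.

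Next I would expand the trace. The $i$-th diagonal entry of $(\mY\odot\mA)\mD$ is $\sum_k Y_{ik}A_{ik}D_{ki}$, so summing over $i$ yields the closed form $f(\mY)=\sum_{i,k}Y_{ik}A_{ik}D_{ki}$, which is manifestly linear in the entries of $\mY$. Equivalently — and this is essentially the route the row/column notation in the statement sets up — writing $\va_{i:},\vy_{i:}$ for the $i$-th rows of $\mA,\mY$ and $\vd_{:i}$ for the $i$-th column of $\mD$, each diagonal term is the inner product $(\va_{i:}\odot\vy_{i:})^{\top}\vd_{:i}=\vy_{i:}^{\top}(\va_{i:}\odot\vd_{:i})$, which exposes the coefficient multiplying each entry of $\vy_{i:}$ directly. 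I would then differentiate entrywise: from $f(\mY)=\sum_{i,k}Y_{ik}A_{ik}D_{ki}$, the partial derivative with respect to a single entry $Y_{ij}$ selects the unique term containing it, giving $\partial f/\partial Y_{ij}=A_{ij}D_{ji}$. Finally I would identify the right-hand side, $(\mA\odot\mD^{\top})_{ij}=A_{ij}(\mD^{\top})_{ij}=A_{ij}D_{ji}$, which agrees entry for entry, establishing $\nabla_{\mY}f(\mY)=\mA\odot\mD^{\top}$.

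There is no genuine obstacle here; the computation is routine. The one point that warrants a moment's care is bookkeeping the transpose on $\mD$: the index $D_{ki}$ appearing in the diagonal sum becomes $D_{ji}$ after differentiating with respect to $Y_{ij}$, and it is exactly this swap of the row and column roles of $\mD$ that converts $\mD$ into $\mD^{\top}$ in the final expression. Fixing the dimensions at the outset is the main guard against the likeliest source of error, namely a shape mismatch or an accidental untransposed $\mD$.
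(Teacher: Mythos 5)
Your proof is correct and follows essentially the same route as the paper's: expand the trace into row/column inner products (equivalently the scalar sum $\sum_{i,k}Y_{ik}A_{ik}D_{ki}$), differentiate entrywise to get $\partial f/\partial Y_{ij}=A_{ij}D_{ji}$, and identify this with $(\mA\odot\mD^{\top})_{ij}$. Your added bookkeeping of dimensions and the explicit double-sum form are just a more careful rendering of the paper's argument, not a different one.
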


\begin{lemma}
Let $f(\mY)=\trace{\mYY^{\top}\mYY\mD}$, where $\mD$ is a symmetric matrix. Then, $\nabla_{\mY} f(\mY) = 4\mY\odot\mYY\mD$.
\begin{proof}
To simplify the notation, let us use $\mA=\mYY$, then using Proposition~\ref{prop:1}, and applying chain-rule:
\begin{align}
\nabla_{\mY} f(\mY) &= 2\left(\nabla_{\bullet^\top}\!\!\trace{(\bullet^{\!\!\top}\!\!\odot\!\!\mY^{\!\!\top})\mA\mD}\right)^{\!\!\top}\!\!\notag\\
&\ \ \ \ \ \ \ \ \ \ \ \ \ \ \ \ \ \ \ \ \ \ \ \ \ \ \ \ \ +2\nabla_{\bullet}\!\!\trace{\!\mA^\top\!\left(\bullet\!\odot\!\mY\right)\!\mD}\notag\\
                    &= 2\left(\mY^{\top}\odot(\mA\mD)^\top\right)^{\top} + 2\mY\odot\left(\mD\mA^\top\right)^{\top}\label{eq:kgods-7}\\  
                    &= 2\mY\odot\left(\mA\mD\right) + 2\mY\odot(\mA\mD^\top)\notag,
\end{align}
where we used Lemma~\ref{thm:1} to obtain~\eqref{eq:kgods-7}. Using the symmetry of $\mD$, we have the result.
\end{proof}
\label{thm:2}
\end{lemma}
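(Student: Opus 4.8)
The plan is to treat $f(\mY)=\trace{\mYY^{\top}\mYY\mD}$ as a composition of two maps: the inner Hadamard-squaring map $\mY\mapsto\mA:=\mYY$, followed by the quadratic trace form $\mA\mapsto\trace{\mA^{\top}\mA\mD}$. I would then apply the product rule to the two occurrences of $\mA$ and the chain rule through the squaring map, reducing each resulting piece to the single-Hadamard trace form already handled in Lemma~\ref{thm:1}.

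First I would fix the notation $\mA=\mYY$ and write $f=\trace{\mA^{\top}\mA\mD}$. Differentiating with respect to $\mY$, I would use the product rule to split the derivative into a contribution from the first $\mA$ and a contribution from the second $\mA$, each with the other factor frozen. The key analytic fact is the chain rule for the squaring map: since $\mA=\mY\odot\mY$, an infinitesimal variation satisfies $d\mA = 2\,\mY\odot d\mY$, so each contribution carries a factor $2$ together with a Hadamard factor $\mY$. Introducing a placeholder variable $\bullet$ (to be set to $\mY$ after differentiation), this rewrites the gradient as
\begin{align}
\nabla_{\mY} f(\mY) &= 2\left(\nabla_{\bullet^{\top}}\trace{(\bullet^{\top}\odot\mY^{\top})\mA\mD}\right)^{\top} \notag\\
&\qquad + 2\,\nabla_{\bullet}\trace{\mA^{\top}(\bullet\odot\mY)\mD},
\end{align}
where Proposition~\ref{prop:1} is used to move the transpose outside the first term so that both terms present $\bullet$ as the leading Hadamard factor, which is exactly the shape required by Lemma~\ref{thm:1}.

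Next I would apply Lemma~\ref{thm:1} to each piece. For the first term the role of the lemma's right-multiplier is played by $\mA\mD$, giving (after undoing the transpose via Proposition~\ref{prop:1} and substituting $\bullet=\mY$) the contribution $2\,\mY\odot(\mA\mD)$; for the second term a cyclic rotation of the trace brings it to canonical form with right-multiplier $\mD\mA^{\top}$, yielding $2\,\mY\odot(\mA\mD^{\top})$. Summing these and then invoking the symmetry $\mD=\mD^{\top}$ collapses the two terms into $4\,\mY\odot(\mA\mD)=4\,\mY\odot\mYY\mD$, which is the claimed identity.

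The main obstacle I anticipate is the bookkeeping in the middle step: correctly matching each of the two product-rule contributions to the pattern of Lemma~\ref{thm:1}, which treats only the \emph{leading} Hadamard factor as the differentiation variable. This forces a careful use of Proposition~\ref{prop:1} to relocate transposes (so $\bullet$ sits in front) and a clean accounting, via cyclic invariance of the trace, of which matrix plays the role of the right-multiplier in the lemma. Once the two terms are in canonical form, the chain-rule factor of $2$ and the final appeal to the symmetry of $\mD$ are routine.
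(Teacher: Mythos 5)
Your proposal is correct and follows essentially the same route as the paper's proof: the same product-rule split into two placeholder terms via $d\mA = 2\,\mY\odot d\mY$, the same use of Proposition~\ref{prop:1} to relocate the transpose and of Lemma~\ref{thm:1} (with a cyclic trace rotation) to evaluate each piece, and the same final appeal to the symmetry of $\mD$ to merge $2\,\mY\odot(\mA\mD)$ and $2\,\mY\odot(\mA\mD^{\top})$ into $4\,\mY\odot\mYY\mD$.
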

\begin{theorem}
 Let $\mE_n=\one_n\one_n^{\top}$, $\sqY=\mY\odot\mY$, and $\sqZ=\mZ\odot\mZ$, the Euclidean gradient of $\kods(\mY,\mZ)$ in~\eqref{eq:kgods-5} is:
\begin{align}
\nabla_{\mZ} \kods(\mY,\mZ) &= 2\lambda \mZ\odot\sqZ\mE_n + \mZ\odot\sqY\left[2\kernel-\lambda\mE_n\right]-2\eta\mZ\notag\\
\nabla_{\mY} \kods(\mY,\mZ) &= \gamma \mY\odot\sqY\mE_n + \mY\odot\sqZ\left[2\kernel-\lambda\mE_n\right]-2\eta\mY\notag,
\end{align}
where $\gamma=2+2\lambda$.
\begin{proof}
The result directly follows by applying Lemma~\ref{thm:1} and Lemma~\ref{thm:2} to the formulation in~\eqref{eq:kgods-5}.
\end{proof}
\end{theorem}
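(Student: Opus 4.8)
The plan is to differentiate $\kods$ in~\eqref{eq:kgods-5} one summand at a time, cast each piece into one of the two shapes covered by Lemmas~\ref{thm:1} and~\ref{thm:2}, and then add the four contributions. First I would rewrite every quadratic form as a trace against $\mE_n=\one_n\one_n^{\top}$: the leading term becomes $\half\one_n^{\top}\sqY^{\top}\sqY\one_n=\half\trace{\sqY^{\top}\sqY\mE_n}$, the coupling term is already $\trace{\sqY\kernel\sqZ^{\top}}$, the margin term is $-\eta\trace{(\sqY+\sqZ)^{\top}\one_{K\times n}}$, and the penalty expands, using the symmetry of $\mE_n$ to merge the two equal cross terms, as $\frac{\lambda}{2}\enorm{(\sqY-\sqZ)\one}^2=\frac{\lambda}{2}\bigl[\trace{\sqY^{\top}\sqY\mE_n}-2\trace{\sqY^{\top}\sqZ\mE_n}+\trace{\sqZ^{\top}\sqZ\mE_n}\bigr]$. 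After this step each summand is either genuinely quadratic in one of $\sqY,\sqZ$ or linear/bilinear in them.

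Next I would compute $\nabla_{\mY}$ and $\nabla_{\mZ}$ by pushing each piece through the appropriate lemma. The purely quadratic pieces $\trace{\sqY^{\top}\sqY\mE_n}$ and $\trace{\sqZ^{\top}\sqZ\mE_n}$ are handled directly by Lemma~\ref{thm:2} with $\mD=\mE_n$ symmetric, each yielding a factor $4\mY\odot\sqY\mE_n$, respectively $4\mZ\odot\sqZ\mE_n$. The remaining linear and bilinear pieces are handled by Lemma~\ref{thm:1} after writing $\sqY=\mY\odot\mY$ and applying the product rule; this chain rule through the Hadamard square is exactly what supplies the extra factor $2$ together with the outer Hadamard multiplication by $\mY$ (or $\mZ$). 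Throughout I would use that $\kernel=\mX^{\top}\mX$ and $\mE_n$ are symmetric, so that the transpose $\mD^{\top}$ appearing in Lemma~\ref{thm:1} drops out and the matrix products close in the stated left-to-right order; for instance the coupling term contributes $2\mY\odot\sqZ\kernel$ to $\nabla_{\mY}\kods$ and, symmetrically, $2\mZ\odot\sqY\kernel$ to $\nabla_{\mZ}\kods$.

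The final step is to collect the per-term contributions. The $\sqY\mE_n$ pieces coming from the leading term and from the penalty add to the coefficient $\gamma=2+2\lambda$, the margin term gives $-2\eta\mY$ (using $\mY\odot\one_{K\times n}=\mY$), and the coupling term together with the penalty cross term combine into the bracket $[\,2\kernel-\lambda\mE_n\,]$ multiplying $\mY\odot\sqZ$. The expression for $\nabla_{\mZ}\kods$ then follows by the $\mY\leftrightarrow\mZ$ symmetry of the objective, giving the same structure but with coefficient $2\lambda$ on the $\sqZ\mE_n$ term rather than $\gamma$, since the leading quadratic form involves only $\sqY$ and hence does not contribute the analogous constant-coefficient piece to the $\mZ$ gradient.

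I expect the main obstacle to be purely bookkeeping rather than conceptual: keeping the chain rule through the Hadamard square $\sqY=\mY\odot\mY$ consistent so that every term carries the correct power of two, and making sure the residual matrix inside each trace is multiplied on the correct side before the final Hadamard product with $\mY$ or $\mZ$. The only genuinely error-prone spot is the penalty term, where the two equal cross terms must each be accounted for and the sign split between the $\sqY$ and $\sqZ$ factors tracked carefully, since it is precisely here that the $\lambda\mE_n$ correction to the coupling bracket is produced.
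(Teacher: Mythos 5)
Your plan follows the paper's proof exactly---the paper, too, simply applies Lemmas~\ref{thm:1} and~\ref{thm:2} term by term---but your collection step contains a real error, and it sits precisely where you flagged the computation as ``error-prone.'' Expand the penalty as you propose:
\begin{equation*}
\frac{\lambda}{2}\enorm{(\sqY-\sqZ)\one}^2=\frac{\lambda}{2}\Bigl[\trace{\sqY^{\top}\sqY\mE_n}-2\trace{\sqY^{\top}\sqZ\mE_n}+\trace{\sqZ^{\top}\sqZ\mE_n}\Bigr].
\end{equation*}
Differentiating the cross term in $\mY$ requires the same product rule through the Hadamard square that you invoke for the coupling term $\trace{\mYY\kernel\mZZ^{\top}}$ (where it is what produces the coefficient $2\kernel$ rather than $\kernel$): it gives $\nabla_{\mY}\trace{\sqY^{\top}\sqZ\mE_n}=2\,\mY\odot\sqZ\mE_n$. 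With the prefactor $-2\cdot\frac{\lambda}{2}=-\lambda$, the penalty cross term therefore contributes $-2\lambda\,\mY\odot\sqZ\mE_n$, so your own steps, executed consistently, produce the bracket $\left[2\kernel-2\lambda\mE_n\right]$, not the claimed $\left[2\kernel-\lambda\mE_n\right]$; the same holds for $\nabla_{\mZ}\kods$. Note that your value $\gamma=2+2\lambda$ already presupposes this product-rule factor: the $2\lambda$ piece of $\gamma$ is $\frac{\lambda}{2}\cdot 4$ from Lemma~\ref{thm:2}, whose factor $4$ embeds exactly the factor $2$ you silently drop on the cross term. You cannot keep it in one place and drop it in the other.

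A scalar sanity check confirms this. For $K=n=1$, $\mY=y$, $\mZ=z$, $\kernel=k$, the objective in~\eqref{eq:kgods-5} reads $\frac{1}{2}y^4+ky^2z^2-\eta(y^2+z^2)+\frac{\lambda}{2}(y^2-z^2)^2$, and differentiating in $y$ gives $(2+2\lambda)y^3+2kyz^2-2\lambda yz^2-2\eta y$: the coefficient multiplying $yz^2$ through $\mE_n$ is $2\lambda$, not $\lambda$. So either the displayed theorem carries a factor-of-two typo in the $\lambda\mE_n$ term (in which case your method is sound but should not land on the stated formula), or the stated formula is taken as ground truth, in which case your derivation fails at the cross term. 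As written, the proposal asserts both the correct intermediate steps and the stated final bracket, which are mutually inconsistent; you need to rework the penalty cross term and state explicitly which resolution holds.
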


\subsection{Optimization Initialization}
\label{sec:init}
Due to the non-convexity of our objective, there could be multiple local solutions. To this end, we resort to the following initialization of our optimization variables, which we found to be empirically beneficial. Specifically, for the GODS optimization, we first sort all the training points based on their Euclidean distances from the origin. Next, we randomly select a suitable number ($3\times \#$ hyperplanes in our experiments) of such sorted points near and far from the origin, compute a compact singular value decomposition (thin-SVD) of these points, and initialize the GODS subspaces using these orthonormal frames from the SVD. The intercepts ($\vb$) are initialized to zero. For KODS, we initialize the dual variables as $\frac{1}{nK}$, where $K$ is the number of hyperplanes. 

\section{Experiments}
\label{sec:expts}
In this section, we provide experiments demonstrating the performance of our proposed schemes on several one-class tasks. We will introduce these tasks and the associated datasets briefly next along with detailing the data features used. 

\begin{figure}[t]
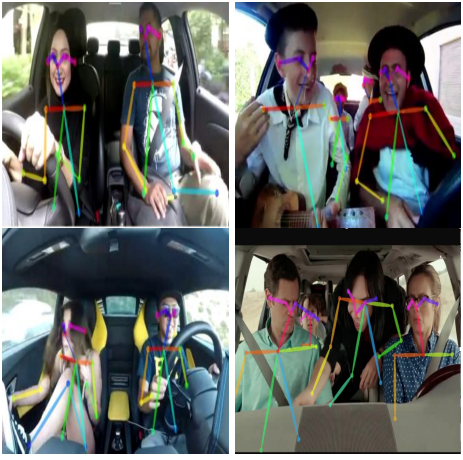

\centering
\includegraphics[width=3.0cm, height=3.0cm]{./figure/pic1}
\includegraphics[width=3.0cm, height=3.0cm]{./figure/pic4}
\includegraphics[width=3.0cm, height=3.0cm]{./figure/pic7}
\includegraphics[width=3.0cm, height=3.0cm]{./figure/pic9}

\caption{Frames from our Dash-Cam-Pose dataset. The left-top frame has poses in-position (one-class), while the rest of the frames are from videos labeled out-of-position.}
\label{fig:dash-cam-pose}
\end{figure}

\begin{figure}[t]
\centering
\includegraphics[width=0.7\linewidth,trim={0cm 0cm 0cm 0cm},clip]{./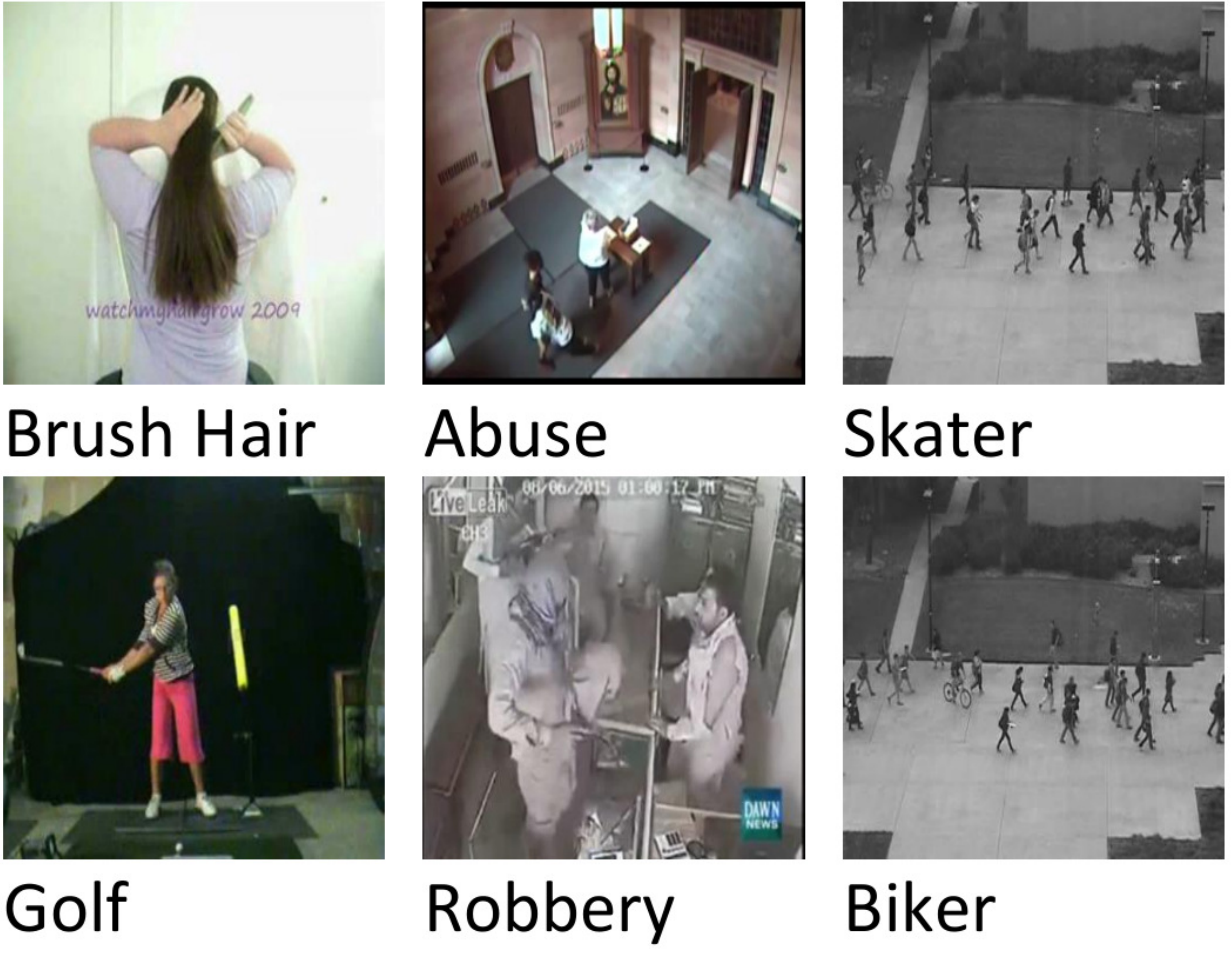}
\caption{Some examples from JHMDB (first column), UCF-Crime (second column), and USCD Ped2 (third column) datasets, with respective categories.}
\label{fig:example}
\end{figure}

 \subsection{Dash-Cam-Pose Dataset}
 Out-of-position (OOP) human pose detection is an important problem with regard to the safety of passengers in a vehicle.  While, there are public datasets for human pose estimation, they are usually annotated for generic pose estimation tasks, and neither do they contain any in-vehicle poses as captured by a dashboard camera, nor are they annotated for pose anomalies. To this end, we collected 104 videos, each 20-30 min long, from the Internet (including Youtube, ShutterStock, and Hollywood road movies). As these videos were originally recorded for diverse reasons, there are significant shifts in camera angles, perspectives, locations of the camera, scene changes, etc. We encourage the interested reader to refer to~\cite{GODS} for more details on this dataset and its collection. Next, we manually selected clips from these videos that are found interesting for our task. To extract as many clips as possible from these videos, we first segmented each video into three second clips at 30fps, which resulted in approximately 7000 clips. Next, we selected only those clips where the camera is approximately placed on the dashboard looking inwards, which amounted to 4,875 clips, totalling 4.06 hours. We annotated each clip with a weak binary label based on the poses of humans in the front seat. Specifically, if all the front-seat humans (passengers and the driver) are seated in-position, the clip was given a positive label, while if any human is seated OOP (based on~\cite{nordhoff2005motor,duma1996airbag}) for the entire 3s, the clip was labeled as negative.  Next, we used Open Pose~\cite{cao2016realtime} on each clip to extract a sequence of poses for every person. Our final Dash-Cam-Pose dataset consists of 4875 short videos, 1.06 million poses, of which 310,996 are OOP.
 
 We explore two pose-sequence representations for this task: (i) a simple bag-of-words (BoW) model, and (ii) using a Temporal Convolutional Network (TCN) ~\cite{kim2017interpretable} consisting of residual units with 1D convolutional layers capturing both local and global information via convolutions for each joint across time. For BoW, we use 1024 pose centroids computed using K-Means clustering. For the TCN, we use the following procedure. The poses from each person in each frame are vectorized and stacked into the temporal dimension. For each pose thus passed through TCN, we extract features from the last pooling layer, using a model pre-trained on the NTU-RGBD dataset~\cite{shahroudy2016ntu} (for 3D skeleton action recognition) to produce 256-D features for every clip. As our pre-trained TCN model takes 3D poses as input, we pad our Dash-Cam-Poses with zeros in the third dimension.
 
 We use a four-fold cross-validation for evaluation on Dash-Cam-Pose. Specifically, we divide the entire dataset into four non-overlapping splits, each split consisting of approximately 1/4-th the dataset, of which roughly 2/3rd's are labeled as positive (in-pose) and the rest as OOP. We use only the positive data in each split to train our one-class models. Once the models are trained, we evaluate on the held out split. For every embedded-pose feature, we use the binary classification accuracy against the ground truth. The evaluation is repeated on all the four splits and the performance averaged.

\subsection{Public Datasets}
\noindent\textbf{JHMDB dataset:} is a video action recognition dataset~\cite{jhuang2013towards} consisting of 968 clips with 21 classes (see Figure~\ref{fig:example} for example frames). To adapt the dataset for a one-class evaluation, we use a one-versus-rest strategy by  choosing sequences from an action class as ``normal'' while those from the rest 20 classes are treated as ``abnormal''. To evaluate the performance on the entire dataset, we cycle over the 21 classes, and the scores are averaged. For representing the frames, we use an ImageNet pre-trained VGG-16 model and extract features from the `fc-6' layer (4096-D). 

\noindent\textbf{UCF-Crime dataset:} is the largest publicly available real-world anomaly detection dataset~\cite{sultani2018real}, consisting of 1900 surveillance videos and 13 crime categories such as \emph{fighting}, \emph{robbery}, etc. and several ``normal'' activities, such as the daily walking, running and driving. Illustrative video frames from this dataset and their class labels are shown in Figure~\ref{fig:example}. To encode the videos, we use the state-of-the-art Inflated-3D (I3D) neural network~\cite{carreira2017quo}. Specifically, video frames from non-overlapping sliding windows (8 frames each) are passed through the I3D network; features are extracted from the `Mix\_5c' network layer, that are then reshaped to 2048-D vectors. For anomaly detections on the test set, we first map back the features classified as anomalies by our scheme to the frame-level and apply the official evaluation metrics~\cite{sultani2018real}.

\noindent\textbf{UCSD Ped2 dataset:} contains 16 videos in the training and 12 videos in the test set. There are 12 abnormal events in the test videos, such as the \emph{Biker}, \emph{Cart}, \emph{Skater}, etc. To encode the video data, we apply the deep autoencoder with causal 3D convolutions~\cite{bai2018empirical} trained to minimize the reconstruction loss. We extract features from the bottleneck layer of this model to be input to our algorithm. As the videos can be of arbitrary length, the pipeline is trained on clips from temporal sliding windows with a stride of one and consisting of 16 frames. As anomalous events are labeled frame-wise in this dataset, we use the averaged clip-level predictions within a window as the prediction for the center frame in that window. We use the evaluation metrics on these frame-level predictions similar to~\cite{liu2018future,luo2017revisit,abati2019latent}. 
\subsection{Experimental Setup}
Before using the above features in our algorithms, we found that it is beneficial to unit-normalize them. However, we do report results without such normalization on other datasets in Section~\ref{sec:uci}. These scaled features are then used in our GODS formulations, the optimization schemes for which are implemented using ManOpt~\cite{boumal2014manopt} and PyManOpt~\cite{JMLR:v17:16-177}. We use the conjugate gradient scheme for optimization, which typically converges in about 200 iterations. We initialize the iterates using the approach described in Section~\ref{sec:init}. The hyper-parameters in our models are chosen via cross-validation, and the sensitivities of these parameters are evaluated in the next section. We use regularization constants $\nu=1$ and $\lambda=1$. We use the inference criteria described in Section~\ref{sec:classification} for classifying a test point as in-class or an anomaly.

\subsection{Evaluation Metrics}
On the UCF-Crime dataset, we follow the official evaluation protocol, reporting AUC as well as the false alarm rate. For other datasets, we use the $F1$ score to reflect the sensitivity and accuracy of our classification models. As the datasets we use - especially the Dash-Cam-Pose -- are imbalanced across the two classes, having a single performance metric over the entire dataset may fail to characterize the quality of the discrimination for each class separately, which is of primary importance for the one-class task. To this end, we also report $\text{True Negative Rate}\ TNR=\frac{TN}{N}$, $\text{Negative\ Predictive\ Value}\ NPV=\frac{TN}{TN+FN}$, and $\overline{F1}=\frac{2\times TNR\times NPV}{TNR+NPV}$, alongside standard F1 scores. We will use $\overline{F1}$ on the Dash-Cam-Pose dataset and $F1$ score on other datasets. Informally, $\overline{F1}$ is the same as $F1$ with the positive and negative categories switched.

\subsection{Ablative Studies}
\label{exp_result}
\noindent\textbf{Synthetic Experiments: } To gain insights into the inner workings of our schemes, we present results on several 2D synthetic toy datasets. In Figure~\ref{fig:bods-vis}--\ref{fig:gods-vis-arb}, we show three plots with 100 points distributed as (i) Gaussian and (ii) some arbitrary distribution\footnote{The data follows the formula $f(x)=\sqrt{x}*(x+sign(randn)*rand)$, where randn and rand are standard MATLAB functions.}. We show the BODS hyperplanes in the Figure~\ref{fig:bods-vis}, and the GODS 2D subspaces in Figures~\ref{fig:gods-vis},~\ref{fig:gods-vis-arb} with the hyperplanes belonging to each subspace shown in same color. As the plots show, our models are able to orient the subspaces such that they confine the data within a minimal volume. In Figure~\ref{fig:kods-vis}, we show 300 data points (black dots) distributed along a 2D ring, a situation when a rectilinear GODS may fail (as the inner circle does not contain the one-class). As seen in Figures~\ref{fig:kods-vis-w1} and~\ref{fig:kods-vis-w2}, the two kernelized KODS hyperplanes capture the outer and inner decision regions separately, and their combined decision region is able to capture the ring structure of the input data, as seen in Figure~\ref{fig:kods-vis}. In Figure~\ref{fig:kods-3d}, we plot the decision surfaces for 3D data points.

\noindent\textbf{Choice of Manifold and Initialization.} As described in  Section~\ref{gods_extension}, our GODS algorithm may assume several optimization manifolds based on the type of regularization used between the classifier hyperplanes. In the Figure~\ref{manifold}, we evaluate three different manifold choices, namely (i) Stiefel manifold, (ii) oblique manifold, and (iii) Euclidean manifold. We also evaluate three different hyperplane initialization strategies: (i) random, (ii) SVD (as describved in Section~\ref{sec:init}), and (iii) mean of the data features. From the Figure, it can be seen that the Stiefel manifold and SVD initialization works best compared to oblique and Euclidean manifolds and against random or mean initializations, consistently on the three datasets. 
\begin{figure}[ht]
	\begin{center}
        \subfigure[Dash-Cam-Pose-BOW($\overline{F1}$)]{\label{ini-subfig:0}\includegraphics[width=0.49\linewidth,trim={0cm 0cm 0cm 0cm},clip]{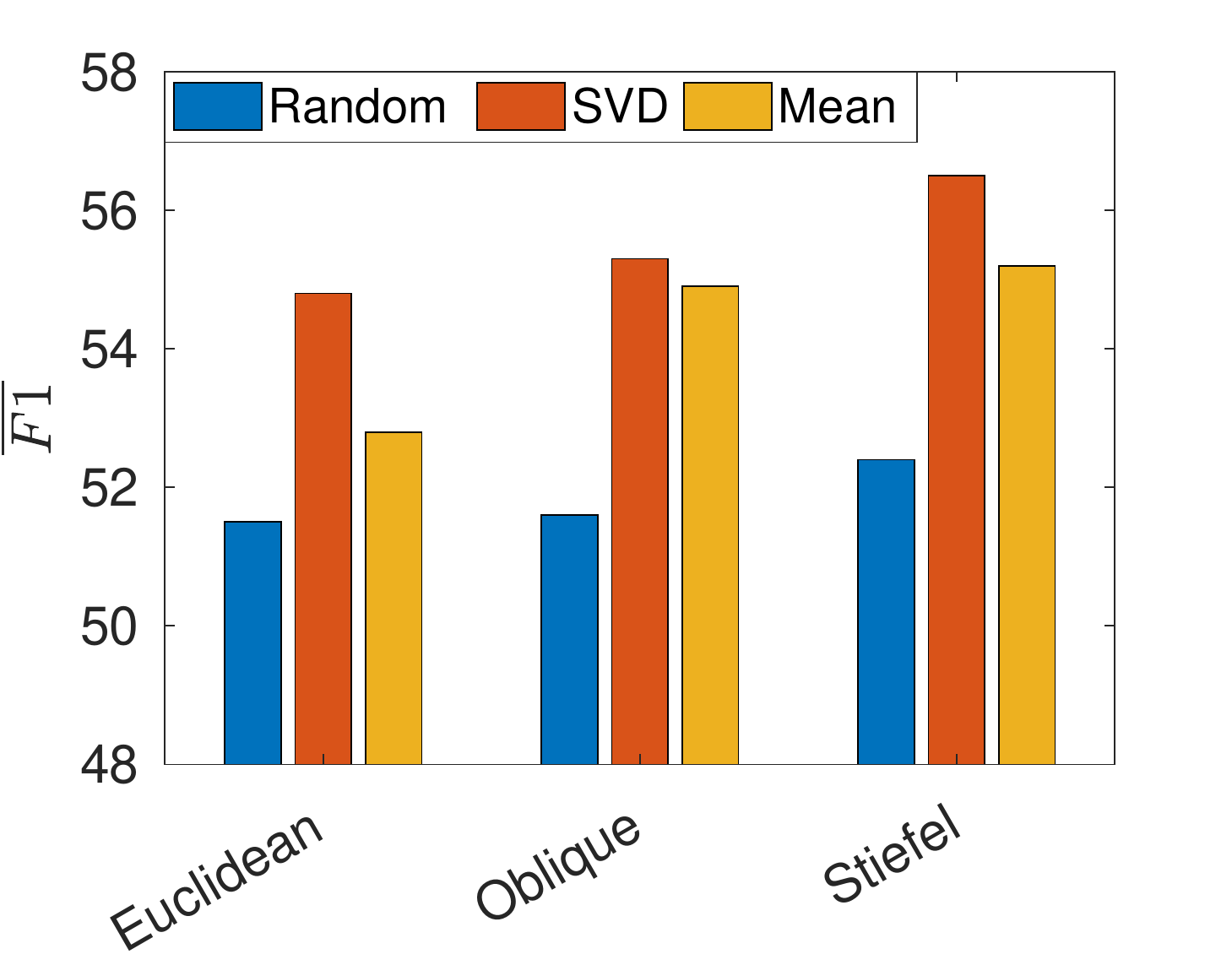}}
        \subfigure[Dash-Cam-Pose-TCN($\overline{F1}$)]{\label{ini-subfig:1}\includegraphics[width=0.49\linewidth,trim={0cm 0cm 0cm 0cm},clip]{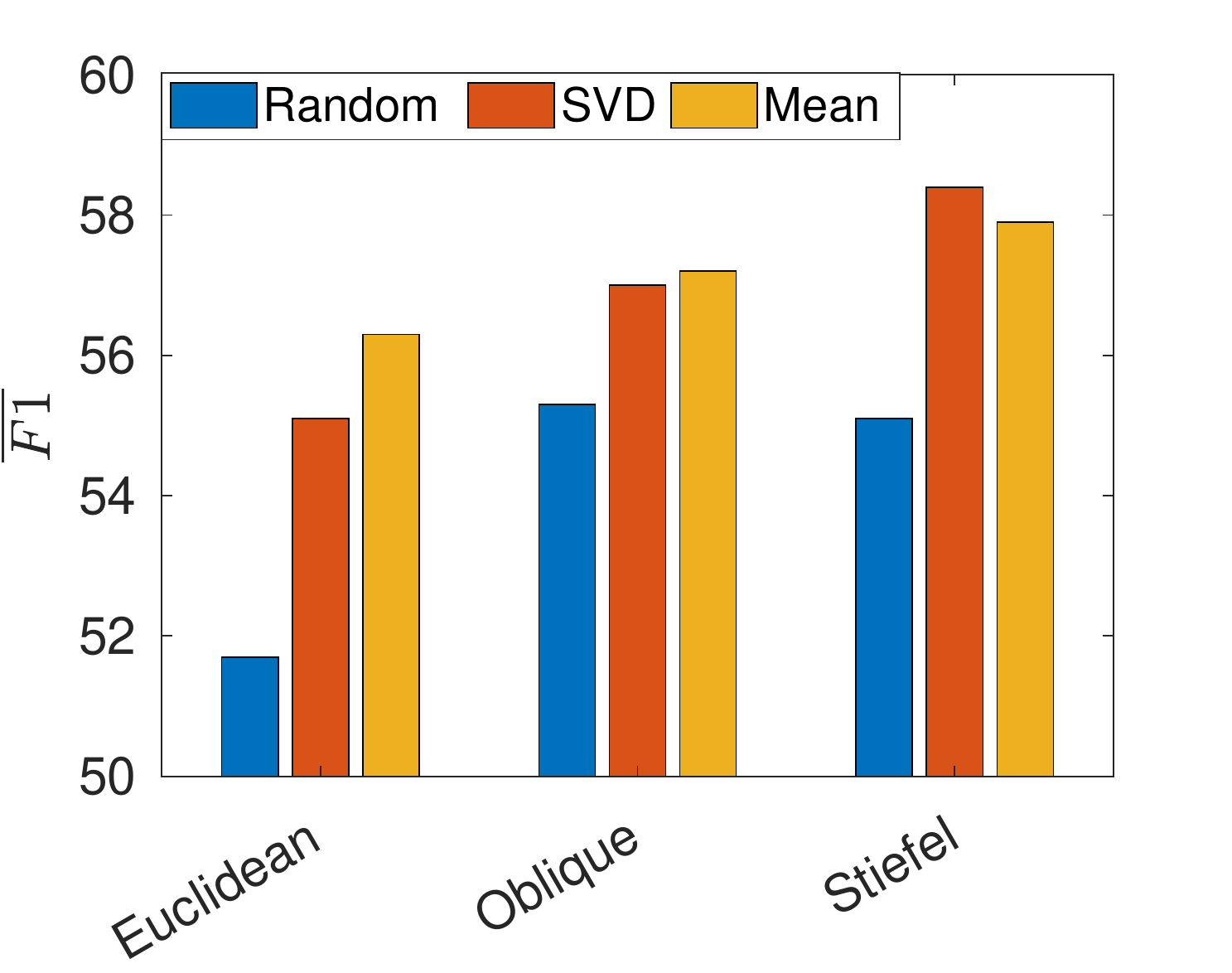}}
        \subfigure[JHMDB($F1$)]{\label{ini-subfig:2}\includegraphics[width=0.49\linewidth,trim={0cm 0cm 0cm 0cm},clip]{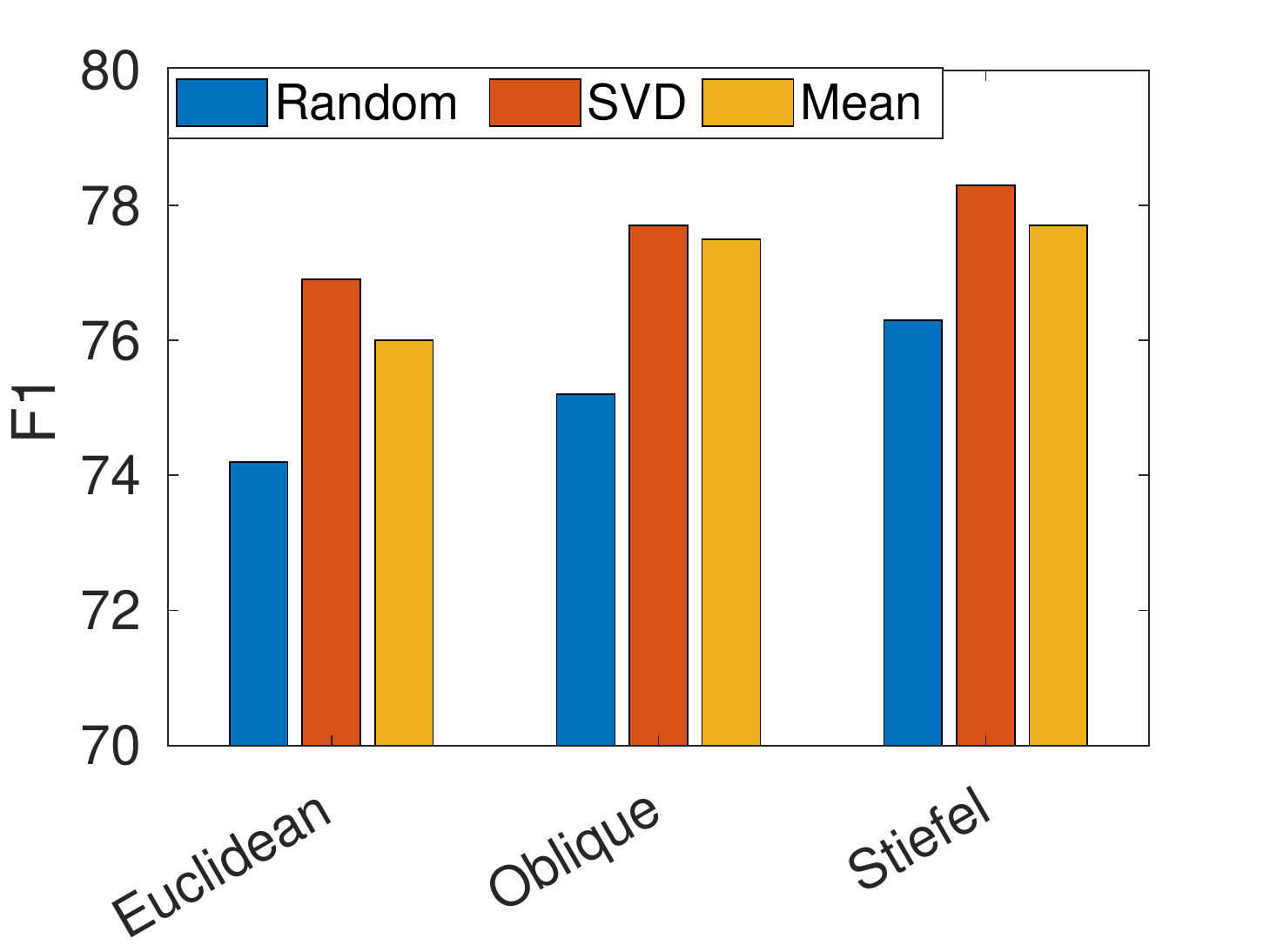}}
        \subfigure[UCF-Crime($F1$)]{\label{ini-subfig:3}\includegraphics[width=0.49\linewidth,trim={0cm 0cm 0cm 0cm},clip]{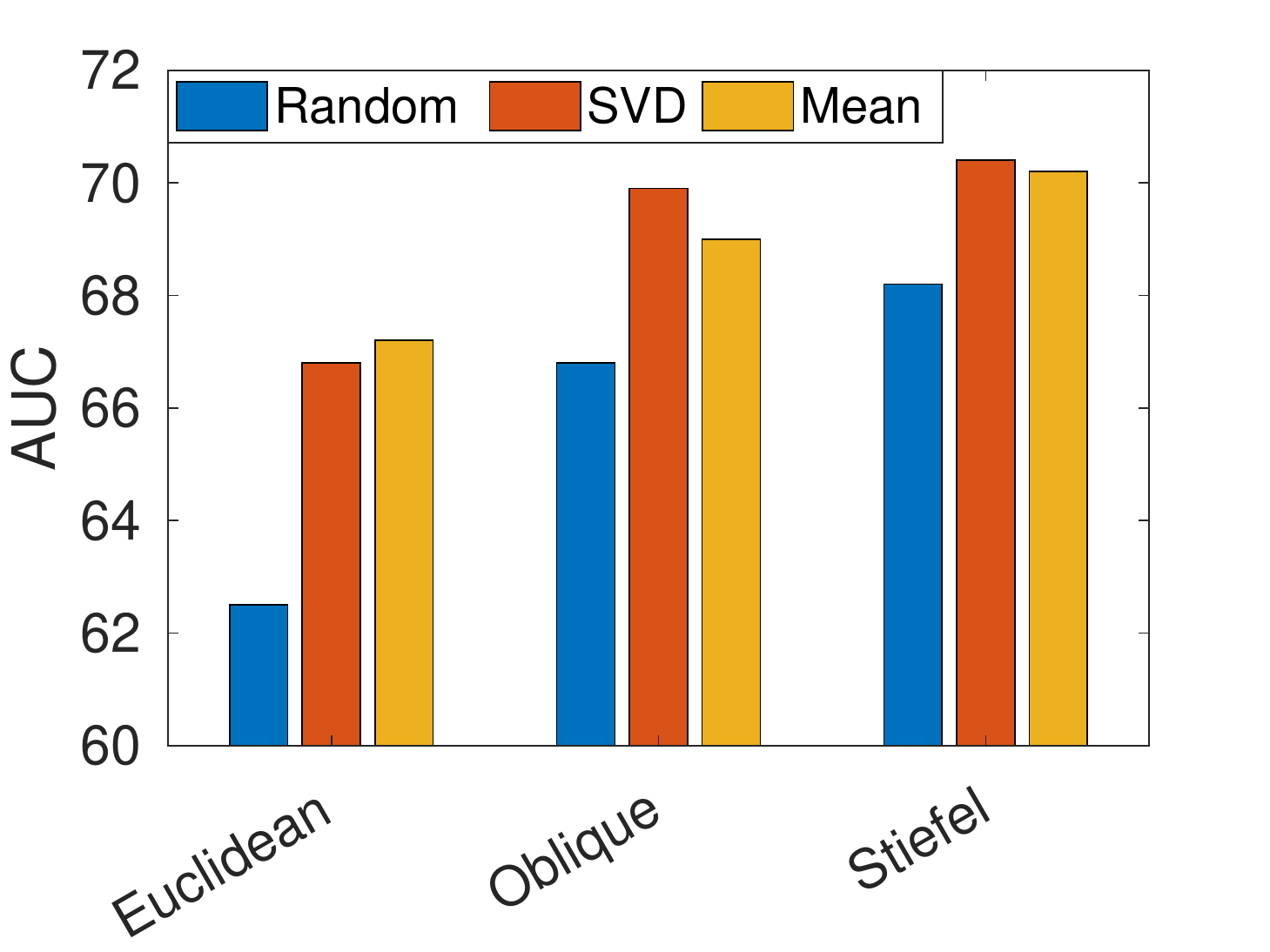}}
	\end{center}
	\caption{Performance of GODS with $F1$, $\overline{F1}$ and AUC for different initialization methods and manifold assumptions. }
    \label{manifold}
\end{figure}

\noindent\textbf{Choice of Normalization and soft-orthogonality.} In Table~\ref{l2norm}, we investigate the effectiveness of the L2 normalization and soft-orthogonality proposed in Equation~\eqref{eq:gods0} and~\eqref{eq:euc}. For fair comparison, we implement the GODS algorithm with Stifel, non-compact Stiefel, Euclidean and Oblique manifolds and evaluate on three different datasets (in Dash-Cam-Pose dataset, we use BOW and TCN features as input). From the experimental result, it is clear that the L2 norm helps to maintain a good performance in the GODS algorithm (verifying our assumptions in Section~\ref{sec:unit_sphere}). Thus, we will apply L2 norm by default in our following experiments.

\begin{table}
\caption{Comparisons of GODS variants on vision datasets. We compare: (i) $\gods$ (Eq.~\eqref{Problem3}) using the Stiefel manifold, (ii) $\gods_N$ using the non-compact Stiefel (Eq.~\eqref{eq:ncstiefel}), (iii) the Euclidean (Eq.~\eqref{eq:euc}), and (iv) the oblique manifolds (Eq.~\eqref{eq:gods0}). We compare under (i) $\ell_2$ unit-normalization of inputs and (ii) $C$, under soft-orthogonality (Eqs.~\eqref{eq:gods0},~\eqref{eq:euc}). We report $F1$, $\overline{F1}$ and AUC scores (in \%) for JHMDB, Dash-Cam-Pose (DCP), and UCF-Crime datasets.}
\begin{tabular}{|p{0.5mm}|p{8mm}|c|c|c|c|}
\hline
                           & Type & DCP-BOW               & DCP-TCN               & JHMDB & UCF-Crime \\\hline
            \rottxt{$\gods$} & $\ell_2$     & \textbf{56.3}      & \textbf{58.4}    & \textbf{77.7}    & 70.0    \\
                             & $\neq\!\!\ell_2$   & 53.8      & 56.6     & 75.2 &68.9  \\
                             & $\neq\!\!\ell_2\!\!+\!\!C$ & 53.5      & 55.4    & 75.1 &67.3   \\
                             & $\ell_2\!\!+\!\!C$     & 54.9      & 58.5    & 77.5 & \textbf{70.3}          \\\hline\hline
   
            \rottxt{$\gods_N$}& $\ell_2$     & 53.4      & 57.6    &75.2 & 68.6        \\
                              & $\neq\!\!\ell_2$ & 52.8      & 55.9     & 74.1 & 68.1   \\
                              & $\neq\!\!\ell_2\!\!+\!\!C$& 52.4      & 54.3    & 72.9 & 67.7         \\
                              & $\ell_2\!\!+\!\!C$   & 53.3      & 56.5     & 74.8 & 69.1       \\\hline\hline                        
          \rottxt{$\gods_E$} & $\ell_2$     & 55.1     & 55.8    & 76.7 & 66.8         \\
                              & $\neq\!\!\ell_2$ & 53.9      & 54.1     & 74.7 &64.6         \\
                              & $\neq\!\!\ell_2\!\!+\!\!C$& 53.2     & 53.7     & 73.6 & 63.4     \\
                              & $\ell_2\!\!+\!\!C$   & 54.8      & 55.1    & 74.9 &64.8      \\\hline\hline
           \rottxt{$\gods_O$}& $\ell_2$       & 55.5     & 56.8     &77.4 &66.7        \\
                              & $\neq\!\!\ell_2$   & 54.3      & 55.2     & 76.1 & 64.9        \\
                              & $\neq\!\!\ell_2\!\!+\!\!C$ & 53.8      & 54.5     & 75.4 & 64.8     \\
                              & $\ell_2\!\!+\!\!C$     & 54.8     & 55.9     & 76.8 & 66.2      \\\hline
 
\end{tabular}
\label{l2norm}
\end{table}

\noindent\textbf{Sensitivity of Margin $\eta$.} The hyperparameter $\eta$ decides the support margin between the hyperplanes and the one-class data. In Figure~\ref{eta}, we analyze the performance sensitivity against changes in $\eta$ on the JHMDB, UCF-Crime, UCSD-Ped2, and the Dash-Cam-Pose datasets. To ensure the learning will not ignore the changes in $\eta$, we increased the regularization constants on the two terms involving $\eta$ in~\eqref{eq:7}. On both datasets, the TPR (true positive rate) increases for increasing $\eta$, while the TNR decreases with a higher value of $\eta$. This is because the distances between the two orthonormal frames ($\mW_1,\mW_2$) may become larger to satisfy the new margin constraints imposed by $\eta$. Thus, more points will be included between the two frames and classified as positive. As $F1$ relies on the classification sensitivity of the positive data, while the $\overline{F1}$ relies on the negative classifications, they show opposite trends. Observing these trends, we fix $\eta=0.3$ in our experiments. 

\begin{figure}[ht]
	\begin{center}
        \subfigure[Dash-Cam-Pose($\overline{F1}$)]{\label{dcp-subfig:0}\includegraphics[width=0.49\linewidth,trim={0cm 0cm 0cm 0cm},clip]{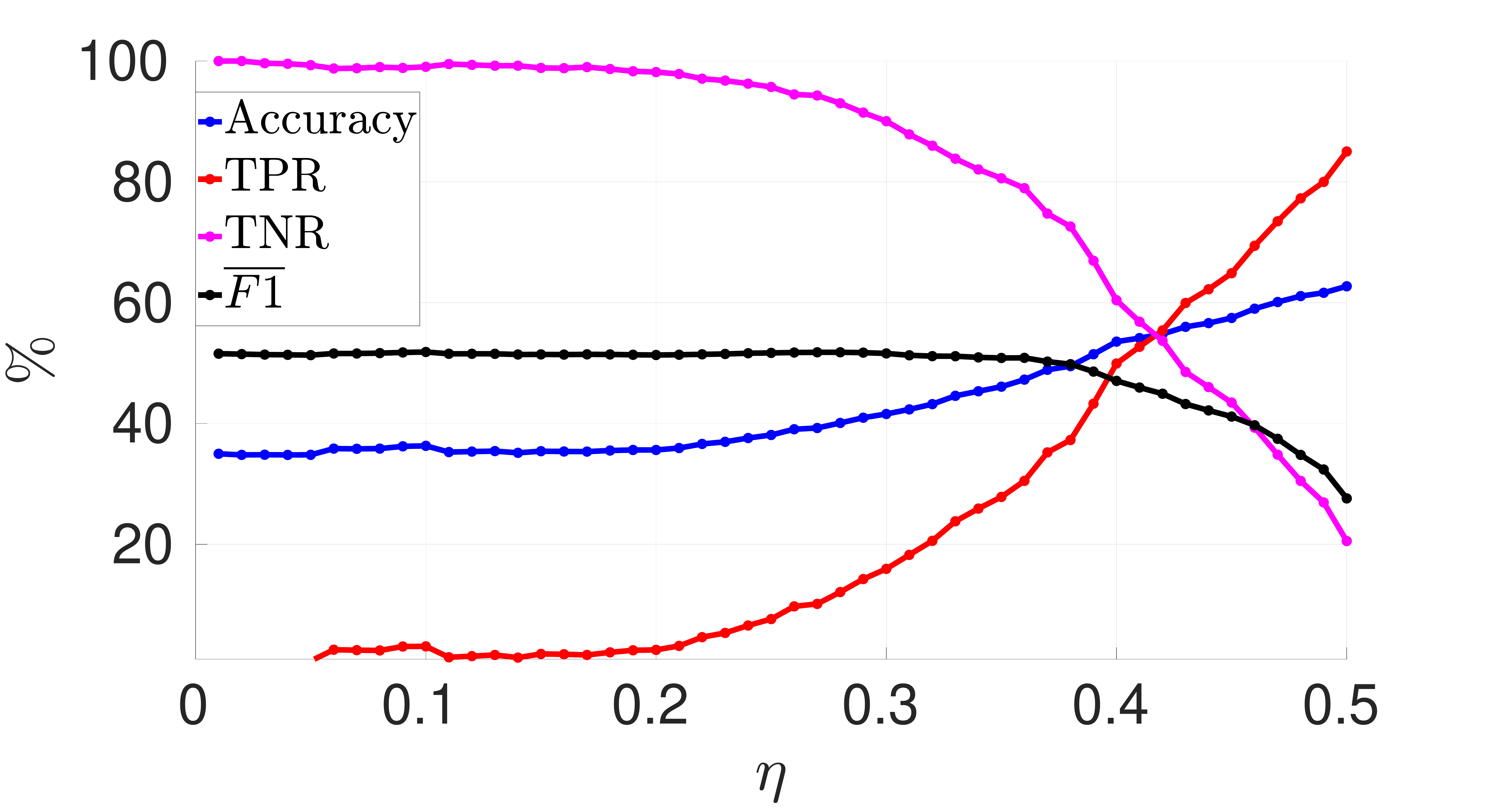}}
        \subfigure[JHMDB($F1$)]{\label{jhmdb-subfig:1}\includegraphics[width=0.49\linewidth,trim={0cm 0cm 0cm 0cm},clip]{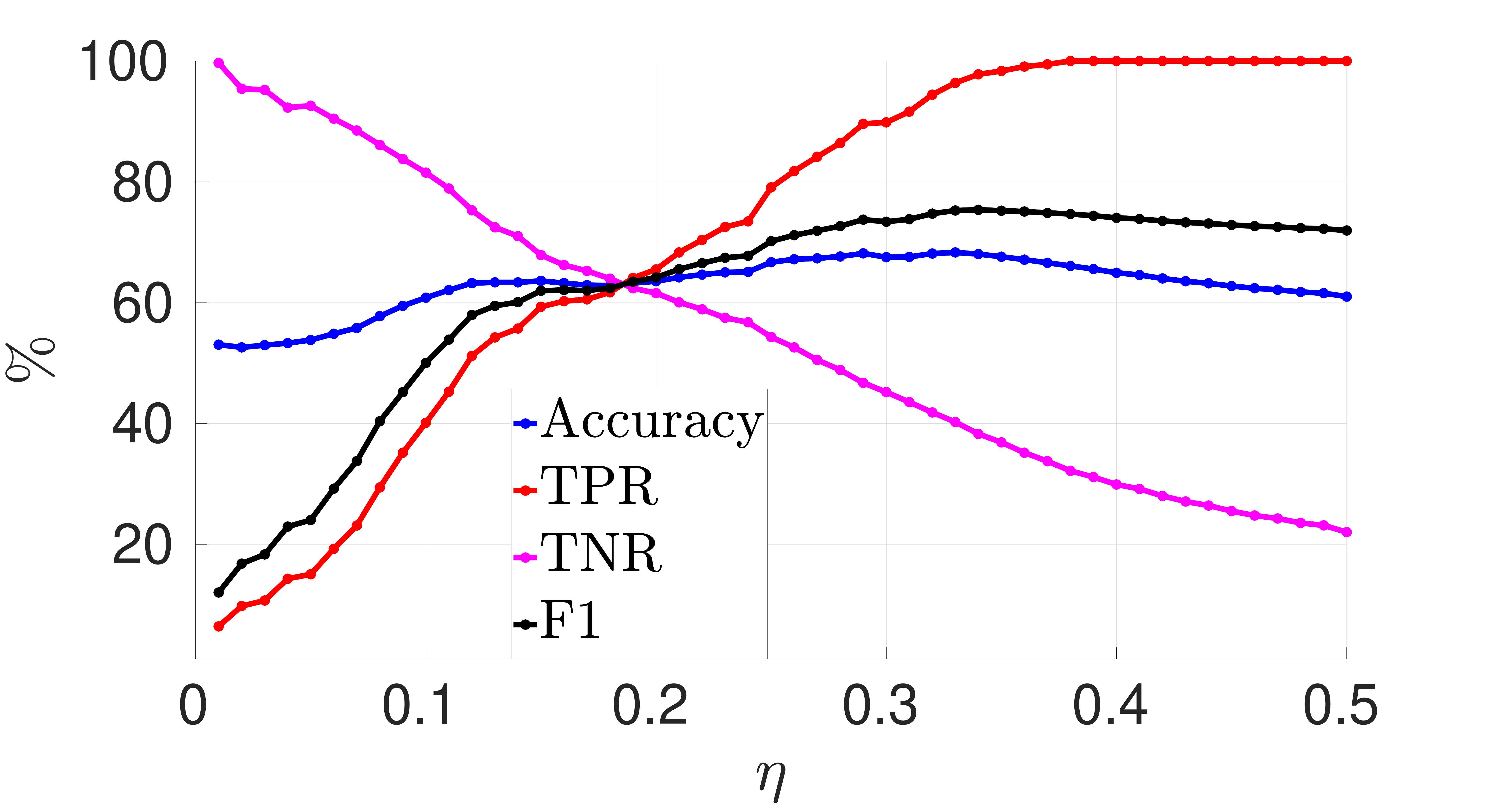}}
        \subfigure[UCF-Crime($AUC$)]{\label{ucf-subfig:2}\includegraphics[width=0.49\linewidth,trim={0cm 0cm 0cm 0cm},clip]{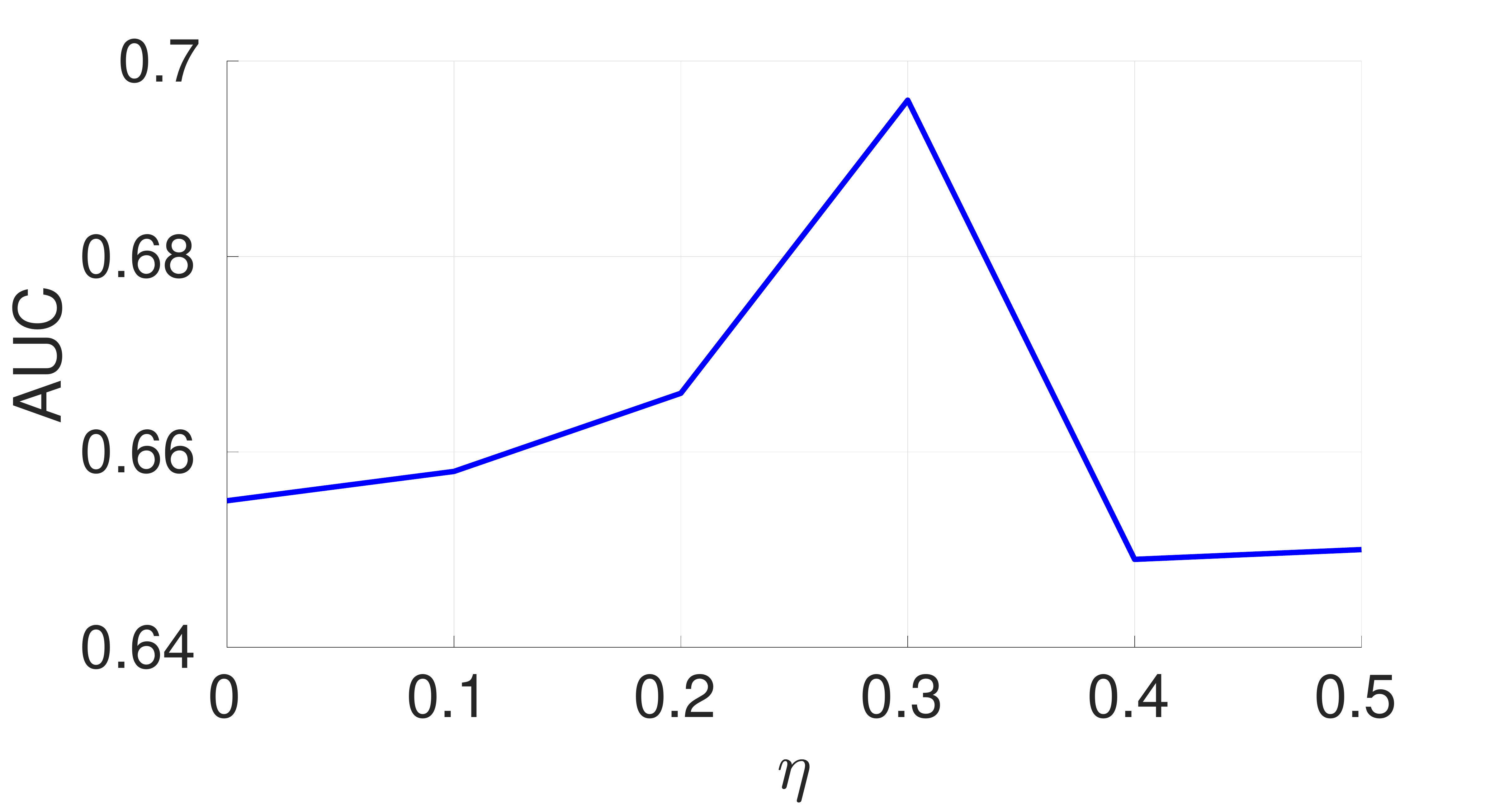}}
        \subfigure[UCSD Ped2($AUC$)]{\label{ucsd-subfig:3}\includegraphics[width=0.49\linewidth,trim={0cm 0cm 0cm 0cm},clip]{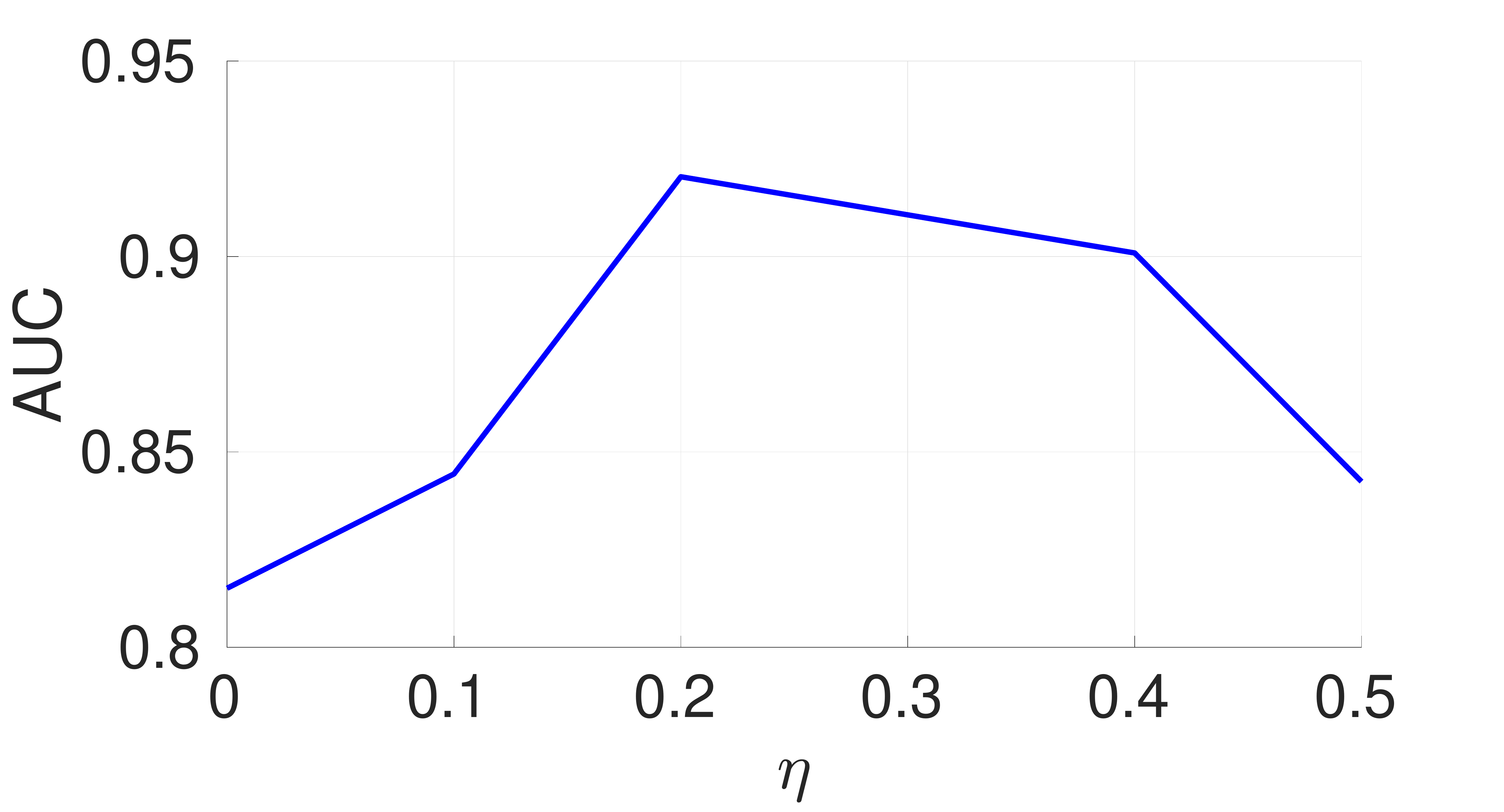}}
	\end{center}
	\caption{Performance of BODS with $F1$ and $\overline{F1}$ for increasing $\eta$. }
    \label{eta}
\end{figure}

\noindent\textbf{Number of Hyperplanes $K$.} In Figure~\ref{K_parameter}, we plot the influence of increasing number of hyperplanes on our four datasets. We find that after a certain number of hyperplanes, the performance saturates, which is expected, and suggests that more hyperplanes might lead to overfitting to the positive class. We also find that the TCN embedding is significantly better than the BoW model (by nearly 3\%) on the Dash-Cam-Pose dataset when using our proposed methods.  Surprisingly, S-SVDD is found to perform quite inferior against ours; note that this scheme learns a low-dimensional subspace to project the data to (as in PCA), and applies SVDD on this subspace. We believe, these subspaces perhaps are common to the negative points as well that it cannot be suitably discriminated, leading to poor performance. We make a similar observation on the other datasets as well. 

\begin{figure}[ht]
	\begin{center}
        \subfigure[UCF-Crime]{\label{ucf-subfig:0}\includegraphics[width=0.49\linewidth,trim={0cm 0cm 0cm 0cm},clip]{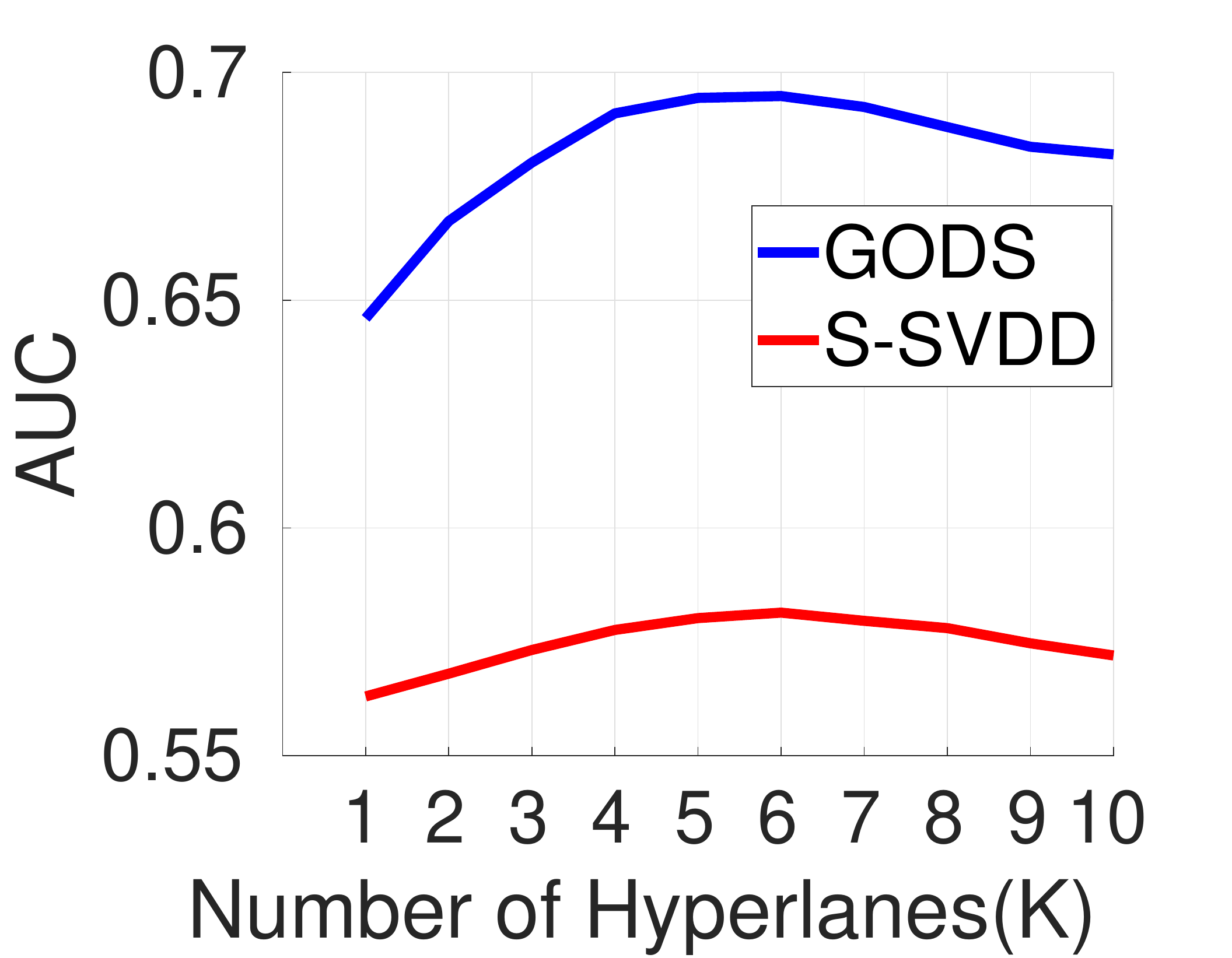}}
        \subfigure[Dash-Cam-Pose]{\label{dcp-subfig:1}\includegraphics[width=0.49\linewidth,trim={0cm 0cm 0cm 0cm},clip]{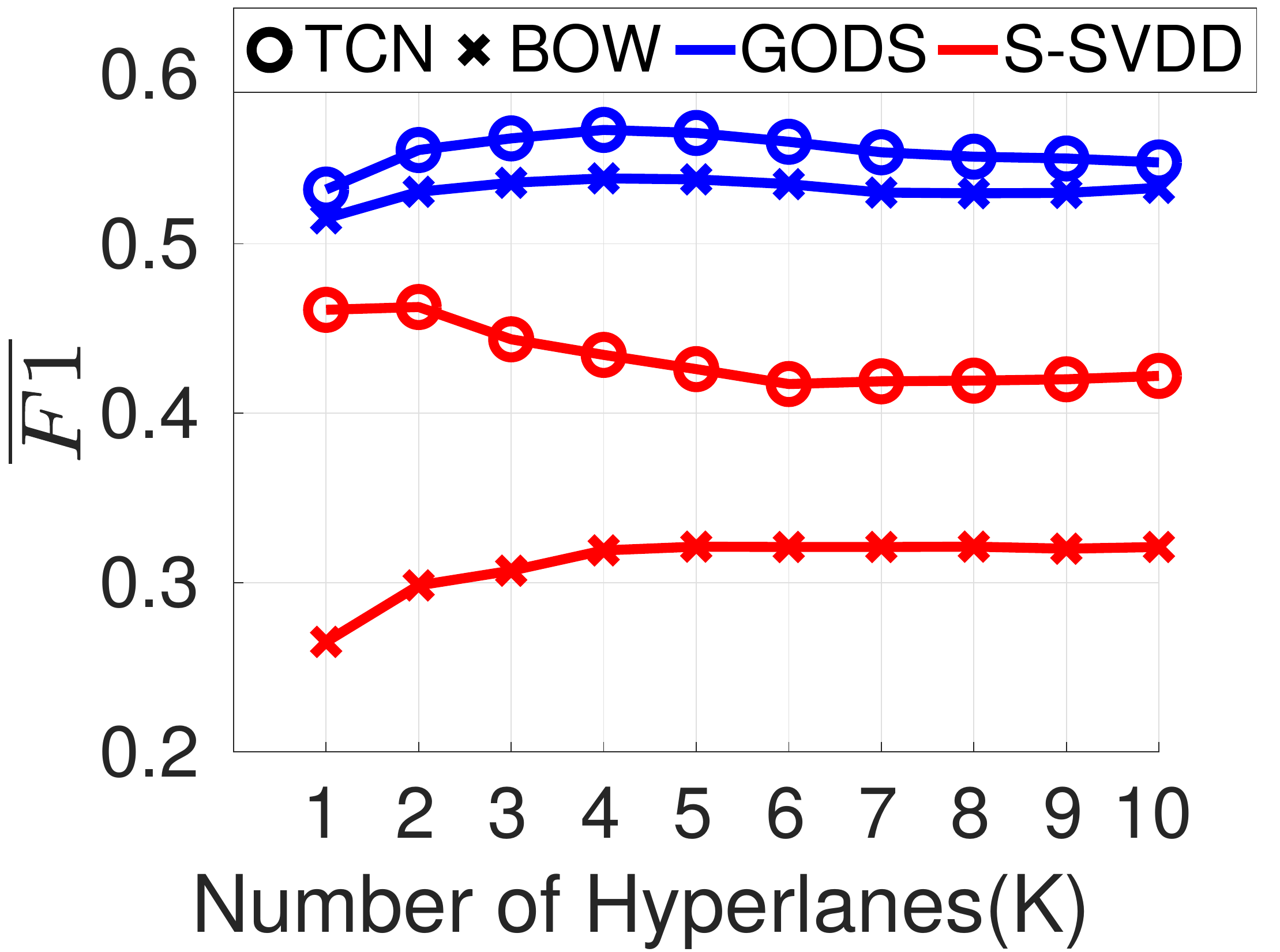}}
        \subfigure[JHMDB]{\label{jhmdb-subfig:2}\includegraphics[width=0.49\linewidth,trim={0cm 0cm 0cm 0cm},clip]{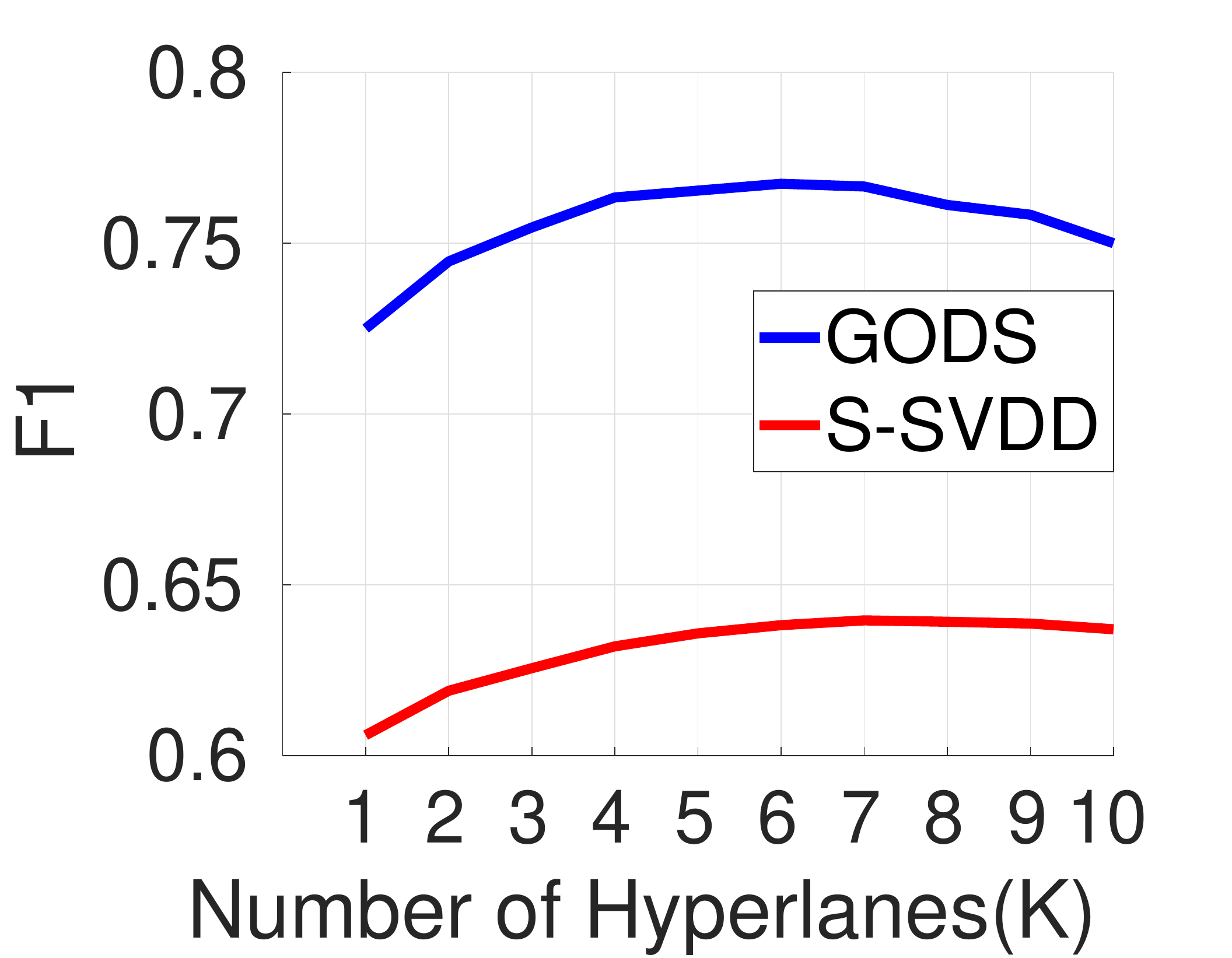}}
        \subfigure[UCSD Ped2]{\label{ped2-subfig:3}\includegraphics[width=0.49\linewidth,trim={0cm 0cm 0cm 0cm},clip]{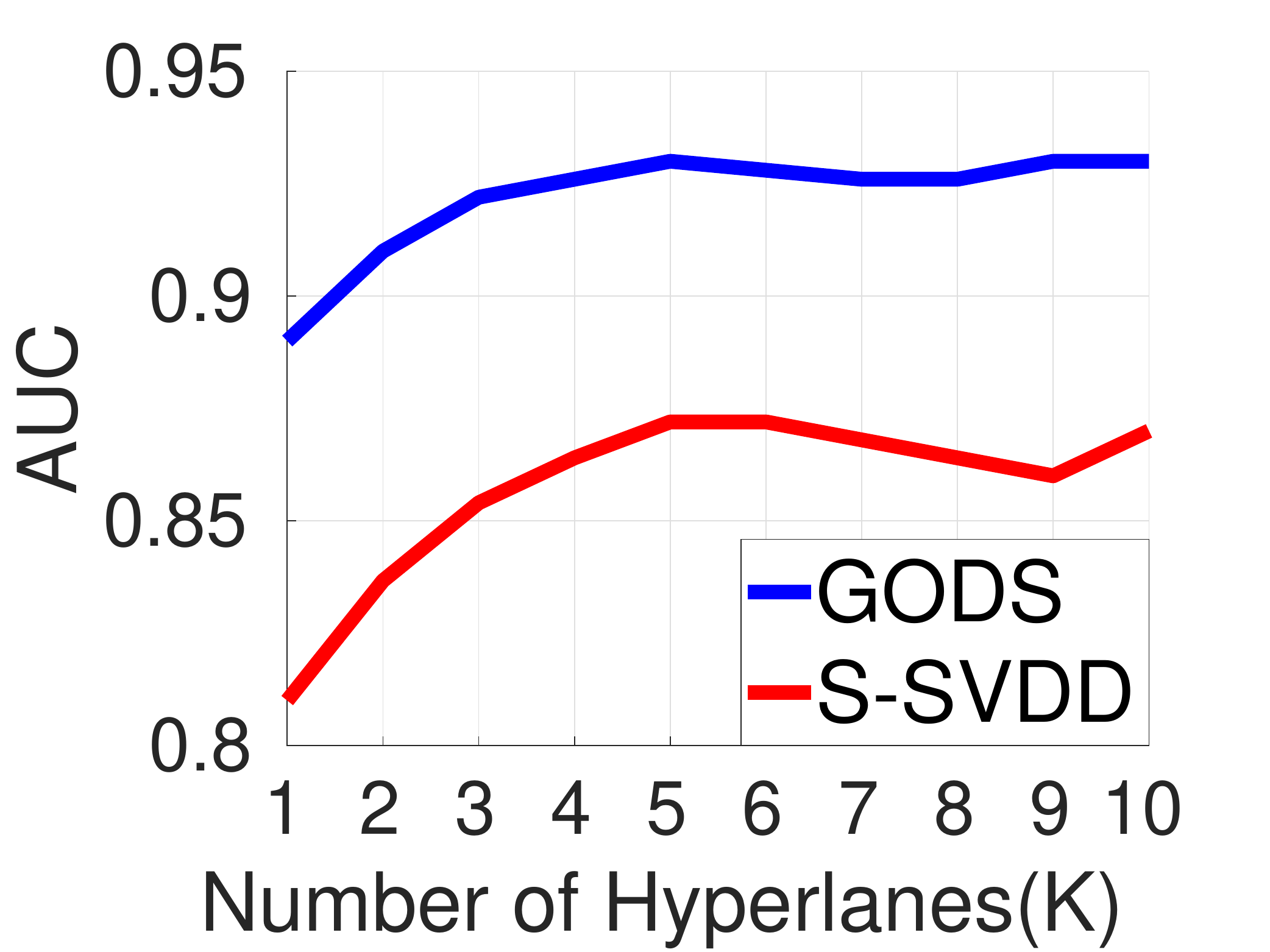}}
	\end{center}
	\caption{Performance of GODS for an increasing number of subspaces.}
    \label{K_parameter}
\end{figure}

\noindent\textbf{Kernel Choices and Number of Subspaces in $K$.} In Figure~\ref{fig:kernel}, we demonstrate the performance of KODS on the datasets for various choices of the embedding kernels, and also when increasing the number of Hilbert space classifiers $K$ for each choice of the kernel feature map on every dataset we use. Specifically, we experiment with linear, RBF, and polynomial kernels on JHMDB, UCF-Crime, UCSD Ped2, and Dash-Cam-Pose (with TCN) datasets, while use the Chi-square~\cite{moh2007chi} and Histogram Intersection kernels~\cite{barla2003histogram} on the Dash-Cam-Pose dataset with the Bag-of-Words features. We set $\sigma = 0.1$ for the bandwidth in the RBF kernel, polynomial kernel degree is set to $3$, and use 1024 words in the Bag-of-Words representation. In Figure~\ref{fig:kernel}, we see that the performance saturates with increasing number of hyperplanes. The RBF kernel seems to work better on the JHMDB and UCF-Crime datasets, while the polynomial kernel demonstrates higher performances on the UCSD-Ped2 and the Dash-Cam-Pose datasets (with TCN). For the Bag-of-Words features on the Dash-Cam-Pose dataset, the Chi-square kernel shows better performance than the Historgram Intersection kernel.

\begin{figure*}[ht]
	\begin{center}
        \subfigure[Dash-Cam-Pose-BOW]{\label{dcp-bow-subfig:0}\includegraphics[width=0.19\linewidth,trim={0cm 0cm 0cm 0cm},clip]{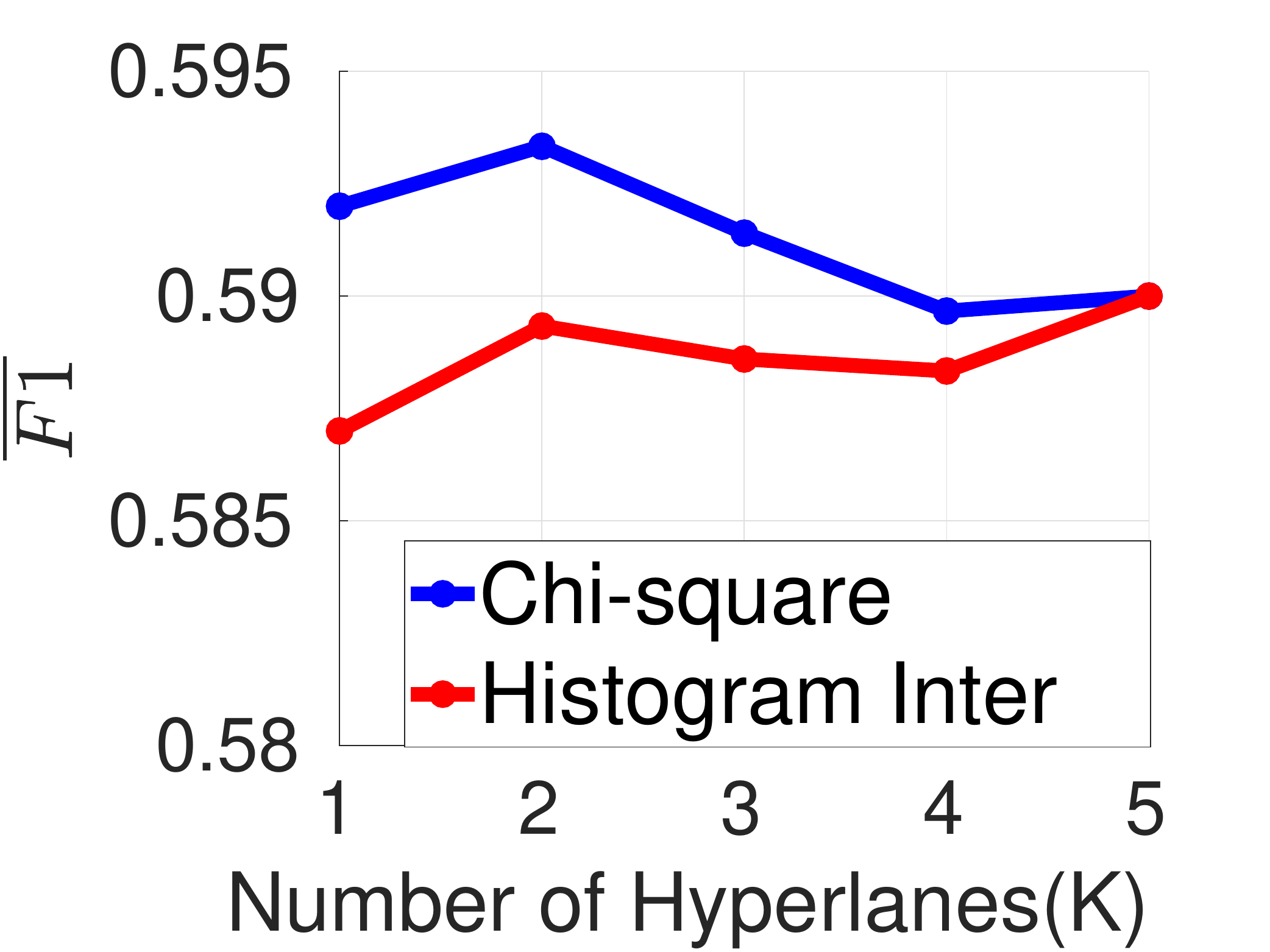}}
        \subfigure[Dash-Cam-Pose-TCN]{\label{dcp-tcn-subfig:1}\includegraphics[width=0.19\linewidth,trim={0cm 0cm 0cm 0cm},clip]{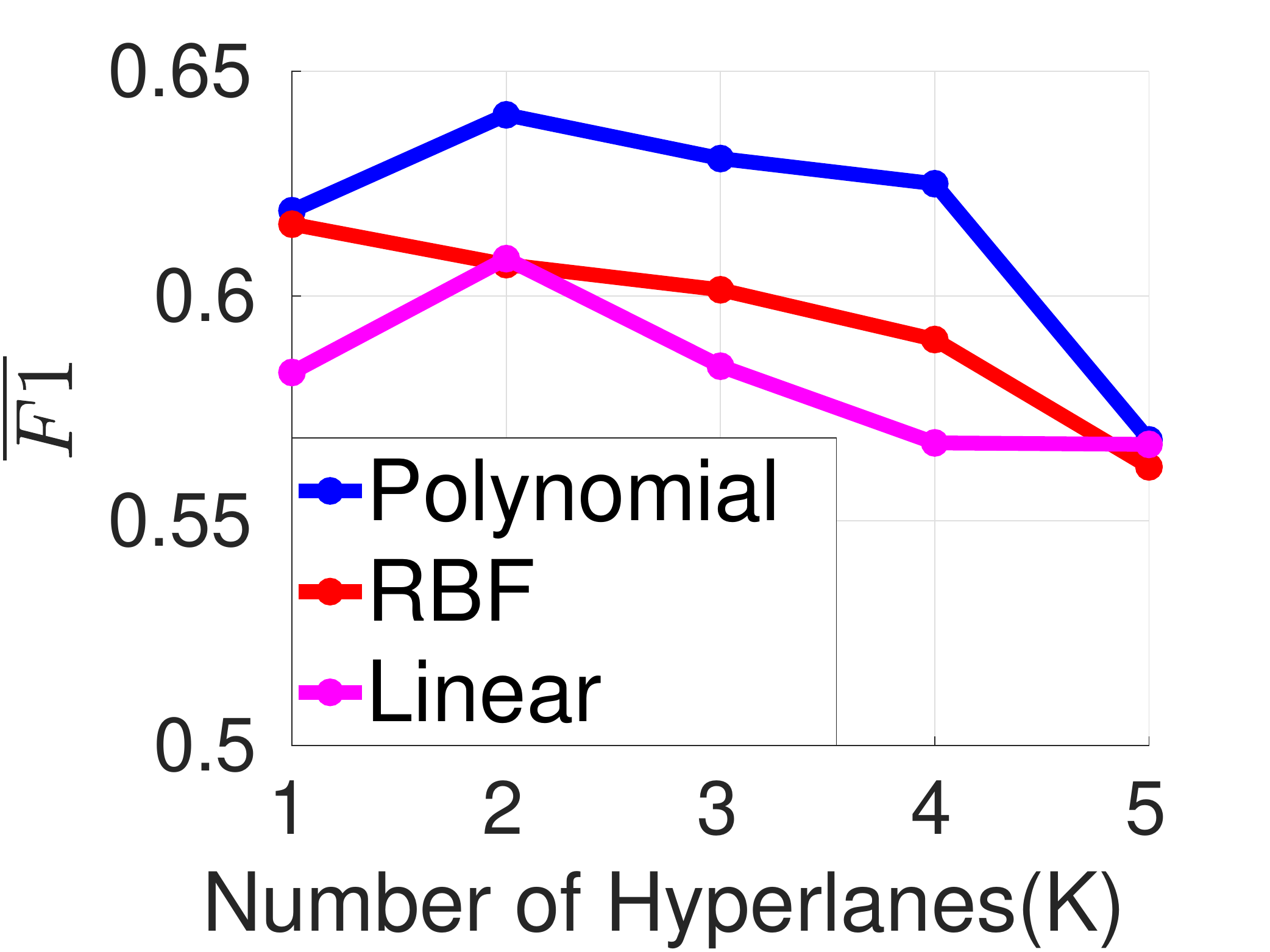}}
        \subfigure[UCF-Crime]{\label{ucf-crime-subfig:2}\includegraphics[width=0.19\linewidth,trim={0cm 0cm 0cm 0cm},clip]{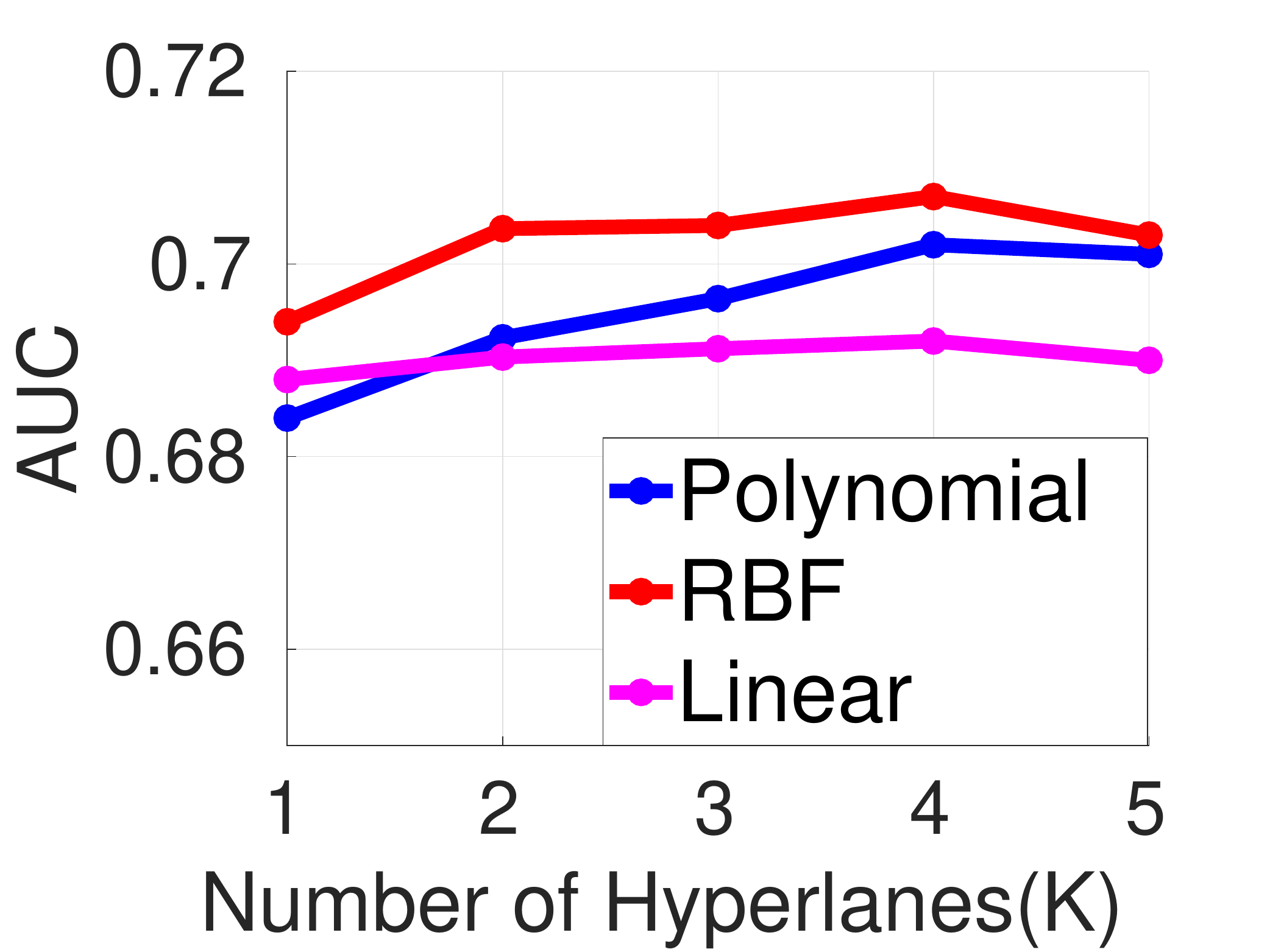}}
        \subfigure[JHMDB]{\label{subfig:4}\includegraphics[width=0.19\linewidth,trim={0cm 0cm 0cm 0cm},clip]{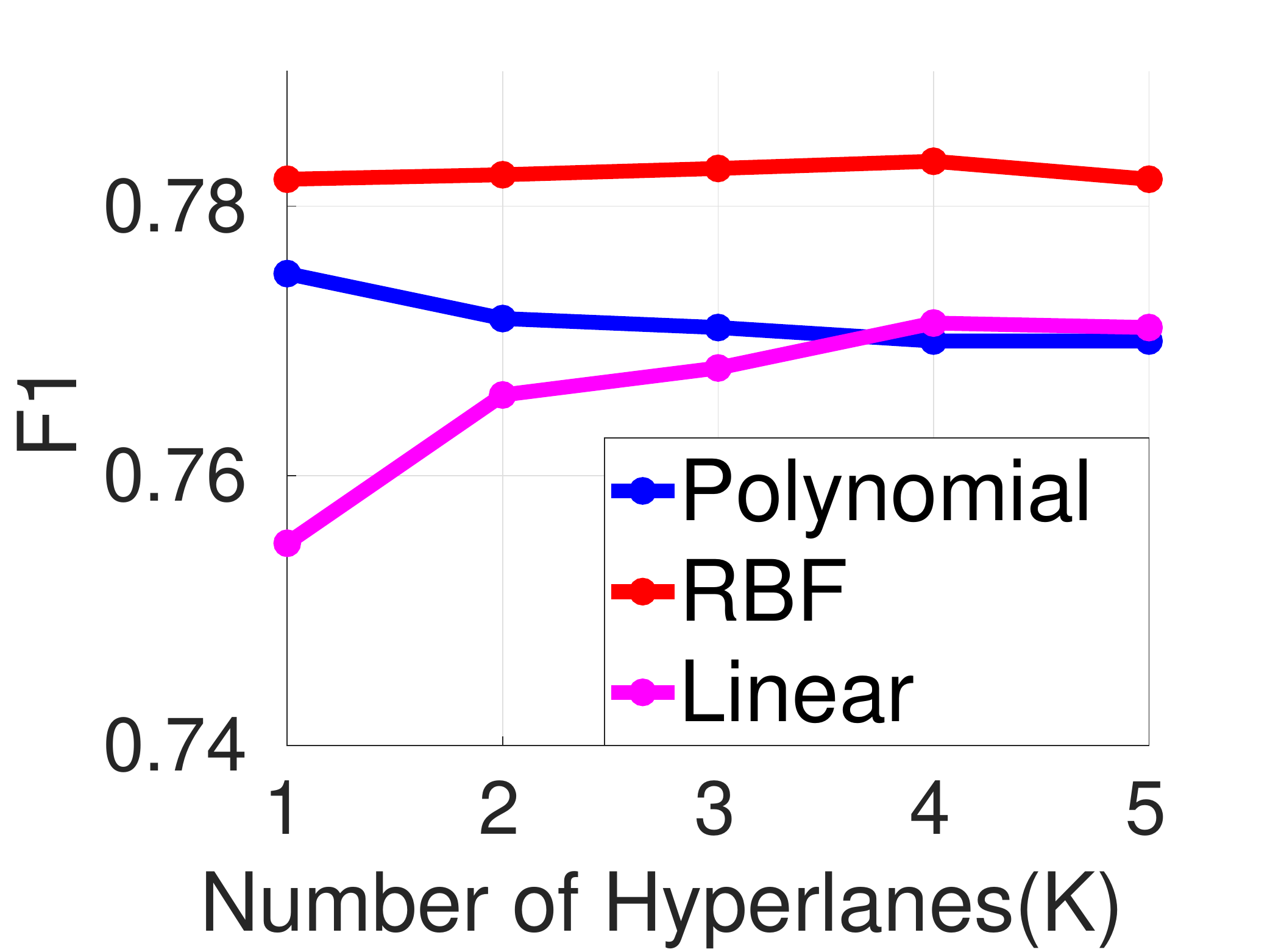}}
        \subfigure[UCSD Ped2]{\label{subfig:5}\includegraphics[width=0.19\linewidth,trim={0cm 0cm 0cm 0cm},clip]{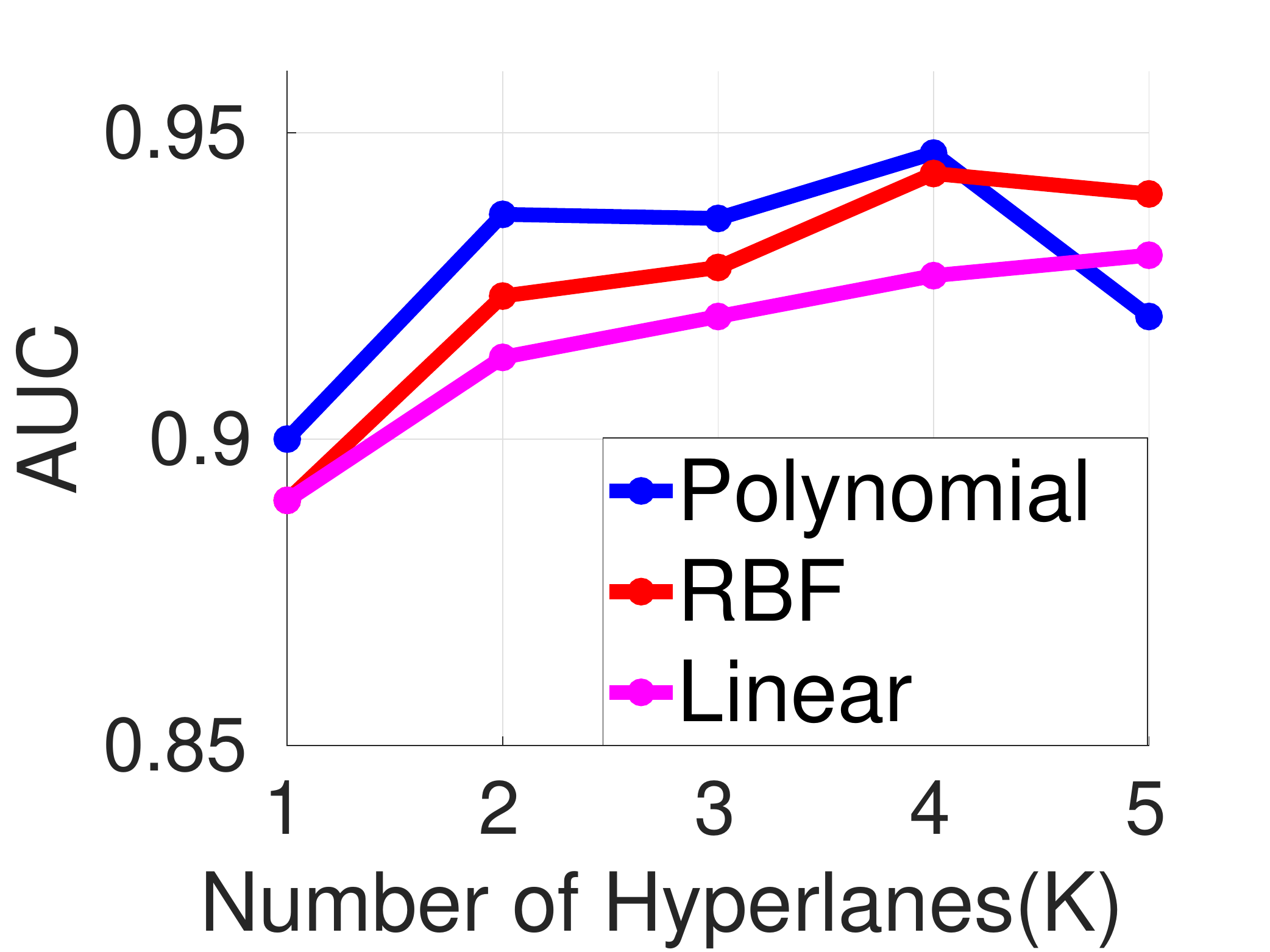}}
	\end{center}
	\caption{Performance of KODS in different kernel type on various datasets for an increasing number of subspaces.}
    \label{fig:kernel}
\end{figure*}

\noindent\textbf{Empirical Convergence.} 
In Figures~\ref{optsubfig:0} and~\ref{gods-optsubfig:1}, we show the empirical convergences of our GODS algorithm on the JHMDB dataset using the original GODS formulation in~\eqref{Problem3}. We show the convergence in the objective value as well as the magnitude of the Riemannian gradients. As is clear, our algorithm convergences in about 200 iterations on this dataset. We repeat this experiment on the KODS formulation~\eqref{eq:kgods-5} using different kernel maps. As is seen from Figures~\ref{optsubfig:3} and~\ref{optsubfig:4}, while the convergence is slower compared to that in GODS -- perhaps due to our approximations -- it does converge suitably for appropriate kernel choices. 

\noindent\textbf{Running Time.} In the Figure~\ref{optsubfig:2}, we demonstrate the time taken for training our different models. For this analysis, we use an Intel i7-6800K 3.4GHz CPU with 6 cores. We implement the different algorithms in the Matlab, run it on the same data, and record the training time with an increasing number of training samples. For GODS, KODS, and S-SVDD, we use 3 hyperplanes in the subspaces. It can be seen that the GODS, BODS, and KODS are not substantially more computationally expensive against prior methods, while remaining  empirically superior (Table~\ref{soa}). 

\begin{figure*}[ht]
	\begin{center}
        \subfigure[GODS-objective]{\label{optsubfig:0}\includegraphics[width=0.19\linewidth,trim={0cm 0cm 0cm 0cm},clip]{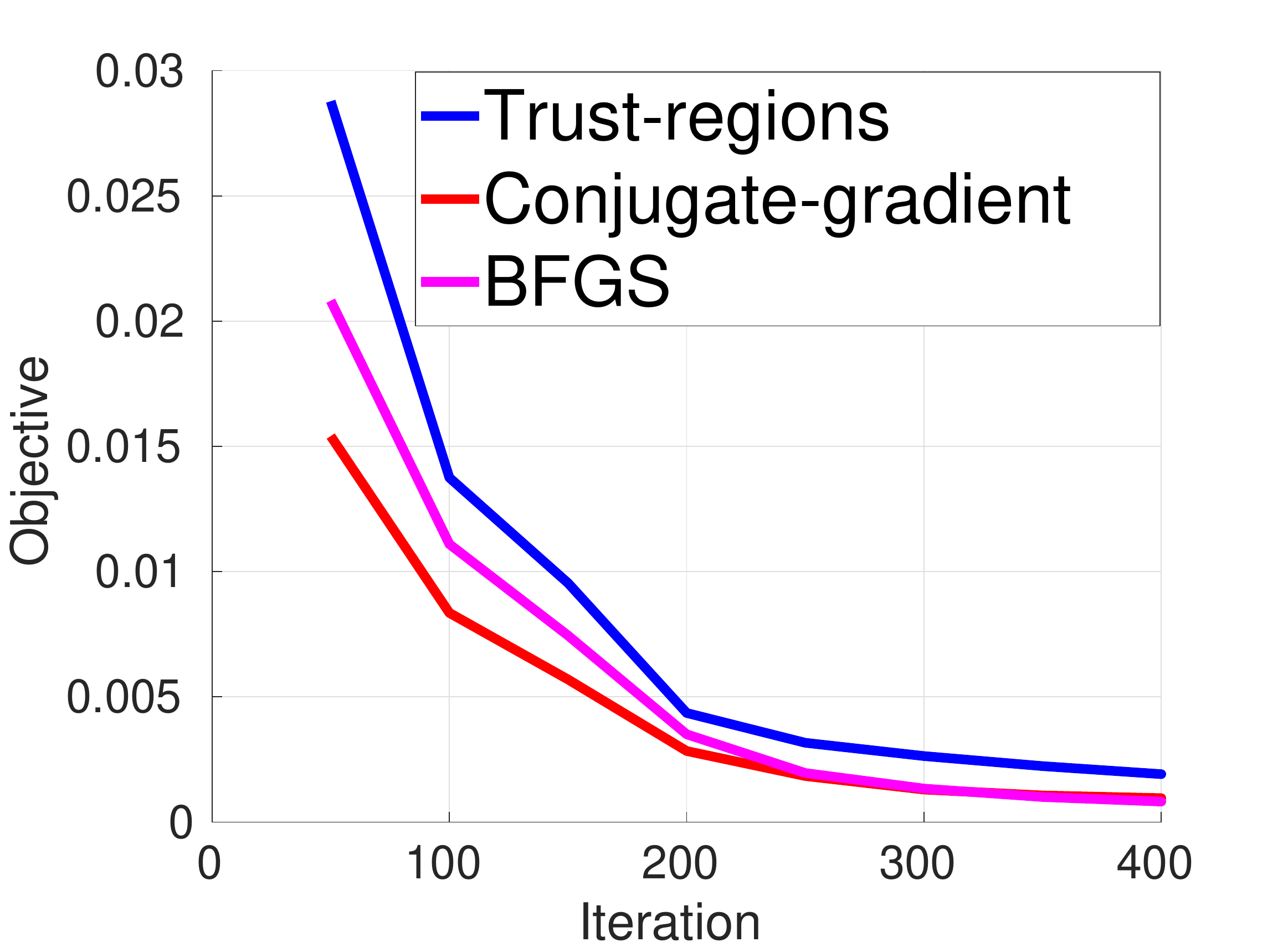}}
        \subfigure[GODS-gradient]{\label{gods-optsubfig:1}\includegraphics[width=0.19\linewidth,trim={0cm 0cm 0cm 0cm},clip]{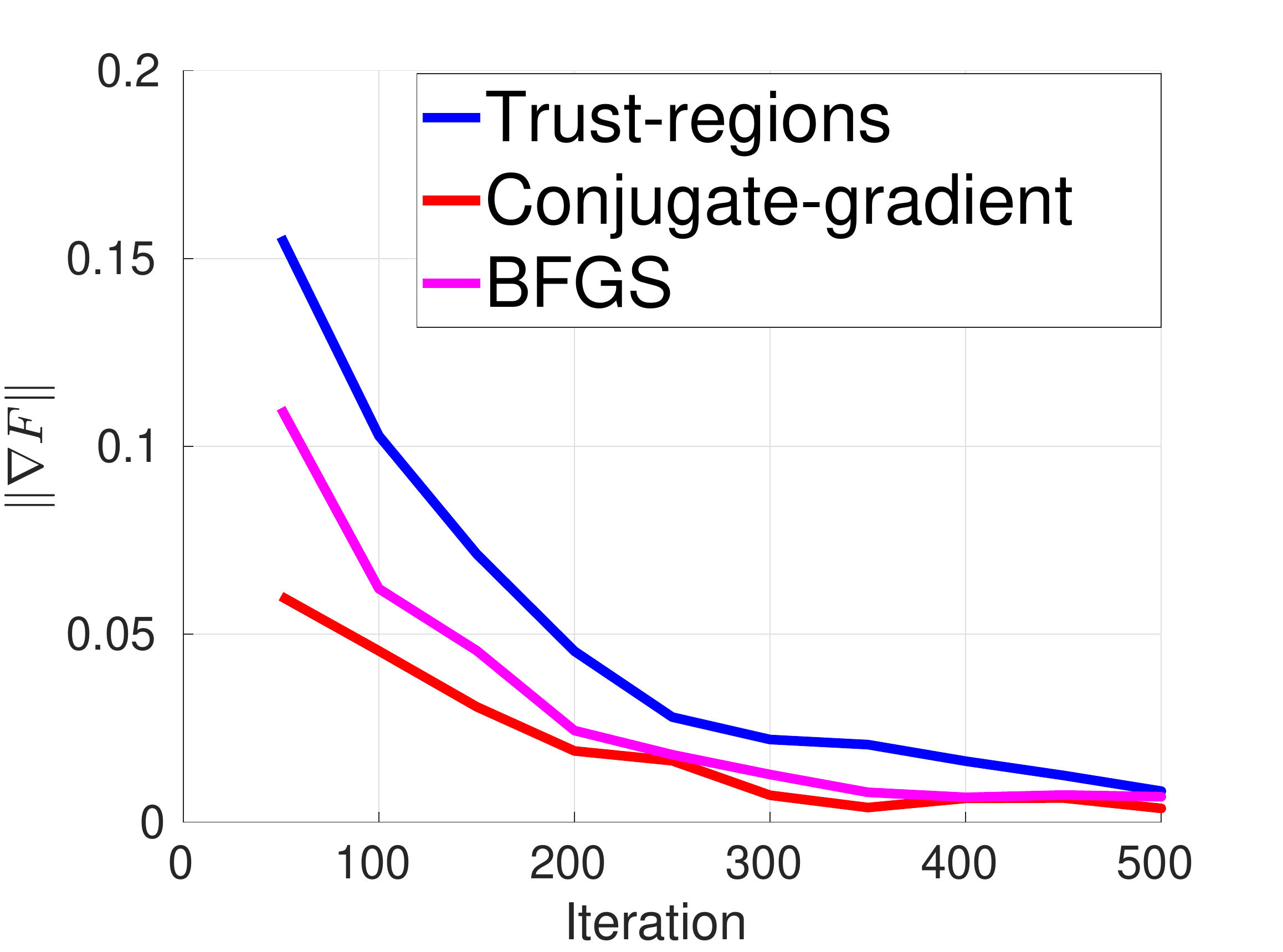}}
        \subfigure[KODS-objective]{\label{optsubfig:3}\includegraphics[width=0.19\linewidth,trim={0cm 0cm 0cm 0cm},clip]{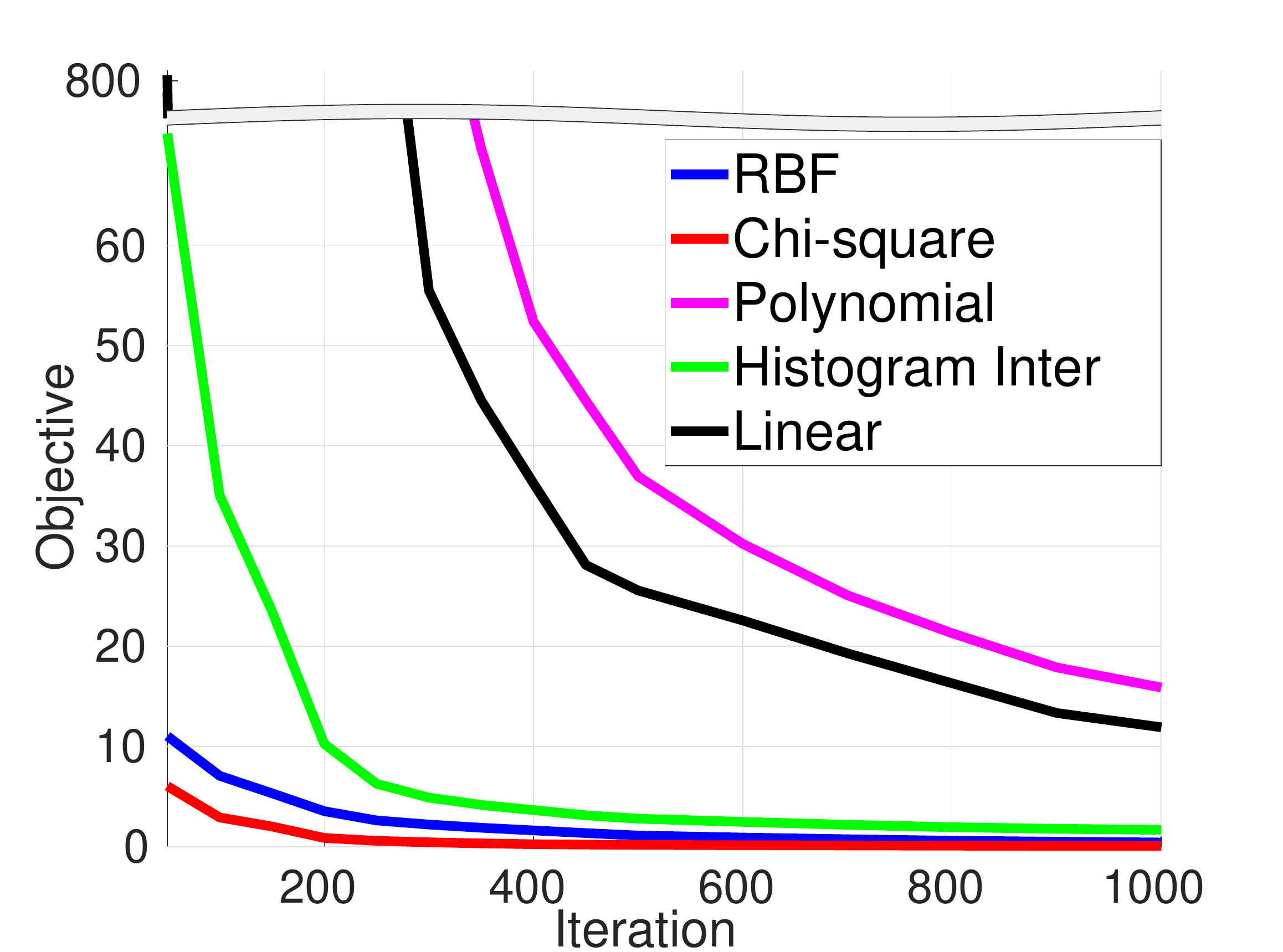}}
        \subfigure[KODS-gradient]{\label{optsubfig:4}\includegraphics[width=0.19\linewidth,trim={0cm 0cm 0cm 0cm},clip]{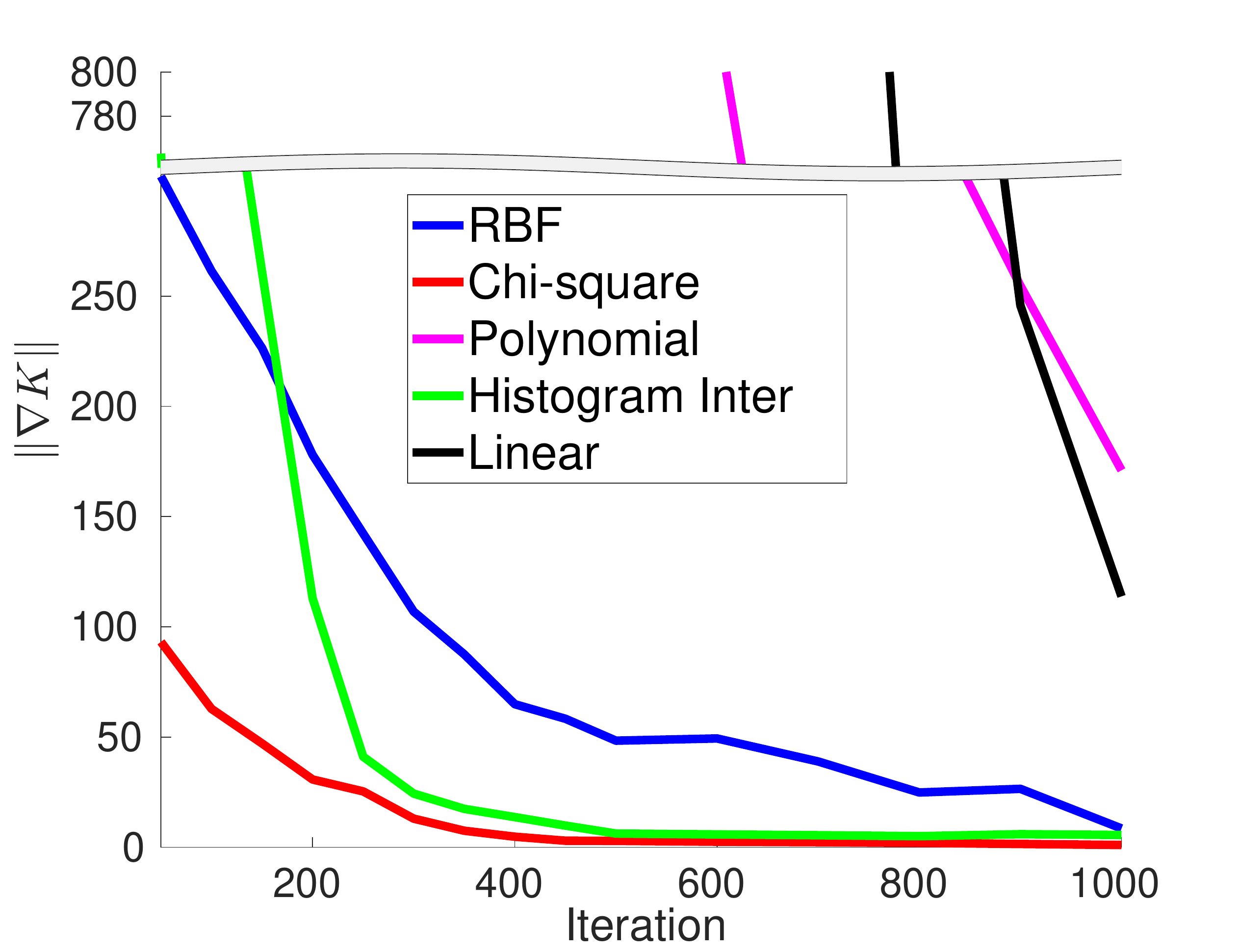}}
    	\subfigure[Running time]{\label{optsubfig:2}\includegraphics[width=0.19\linewidth,trim={0cm 0cm 0cm 0cm},clip]{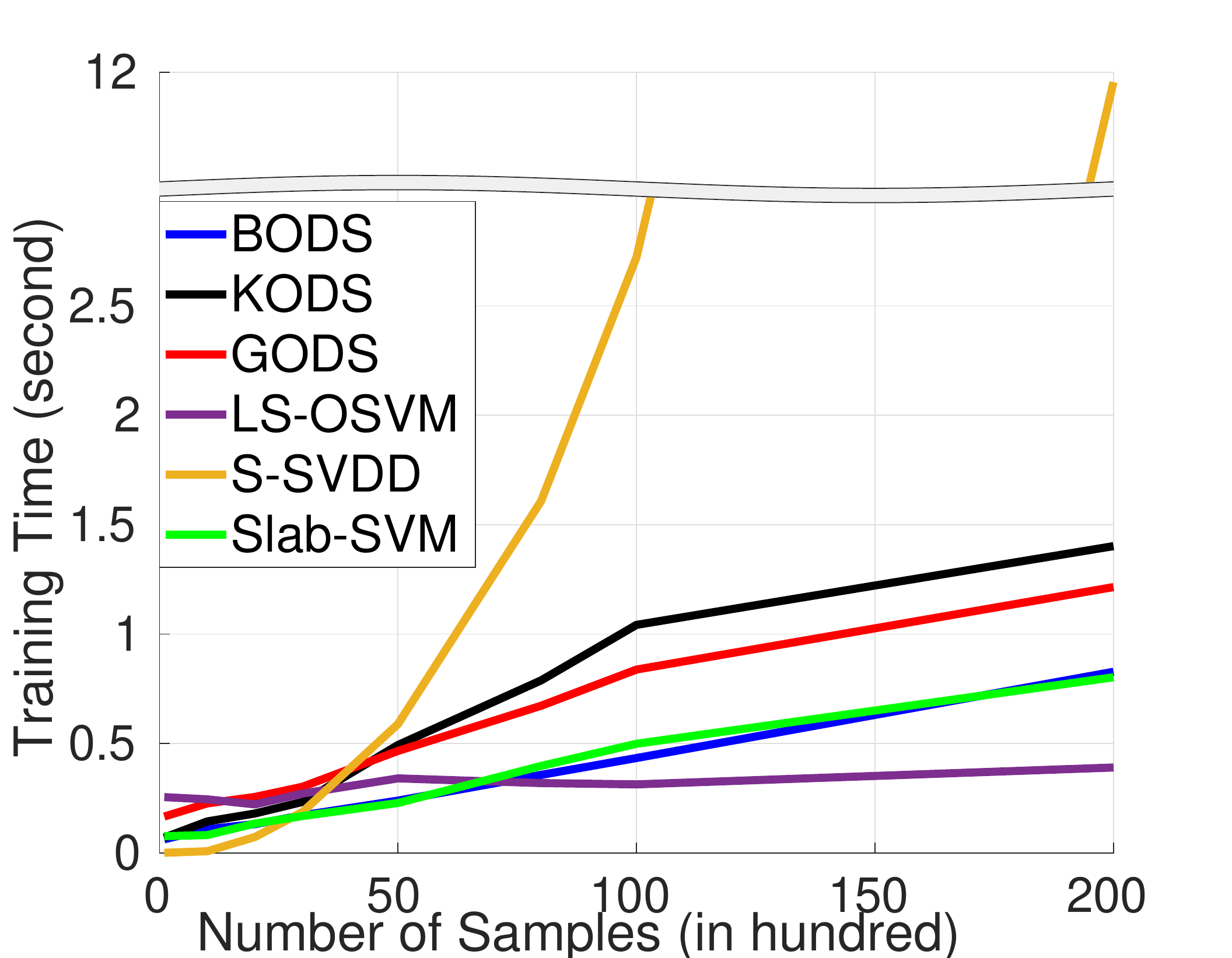}}
	\end{center}
	\caption{(a--d) show optimization convergence on JHMDB dataset. (a-b) GODS~\eqref{Problem3} against the choice of optimization schemes, (c-d) KODS~\eqref{eq:kgods-5} under different kernels (using CG optimizer). (e) compares running time.}
    \label{opt}
\end{figure*}

\subsection{State-of-the-Art Comparisons} In Table~\ref{soa}, we compare our variants to the state-of-the-art methods. As alluded to earlier, for our Dash-Cam-Pose dataset, as its positive and negative classes are imbalanced, we resort to reporting the $\overline{F1}$ score on the test set. From the table, our variants are seen to outperform prior methods by a significant margin; especially our GODS and KODS schemes demonstrate the best performances on different tasks. For example, using TCN, KODS  outperforms other kernelized prior variants by over 20\%. Similarly, on the JHMDB dataset, both GODS and KODS are better than the next best kernel-based method (K-OC-SVM) by about 20\%, and improves the classification accuracy by over 30\%. Overall, the experiments clearly substantiate the performance benefits afforded by our methods on the one-class task. 

In Table~\ref{tab:ucf-soa}-left, we present results against the state of the art on the UCF-Crime dataset using the AUC metric and false alarm rates; we use the standard threshold of 50\%. While, our results are lower than~\cite{sultani2018real} by 4\% in AUC and 0.2 larger in false alarm rate, their problem setup is completely different from ours in that they use weakly labeled abnormal videos as well in their training, which we do not use and which as per definition is not a one-class problem. Thus, our results are incomparable to theirs. Against other methods on this dataset, our schemes are about 5-10\% better. In the Table~\ref{tab:ucf-soa}-right, we provide the performance (AUC) on the UCSD Ped2 dataset. Compared with the recent state-of-the-art methods, both GODS and KODS achieves similar performances using the 3D autoencoder features. Among these methods, Luo et al.~\cite{luo2017remembering} and Liu et al.~\cite{luo2017revisit} propose ConvLSTM-AE and S-RNN respectively, which rely on the recurrent neural networks, that might be hard to train. Abati et al.~\cite{abati2019latent} also uses the 3D autoencoder features similar to ours, but also employs additional constraints for building the Conditional Probability Density (CPD), which is more expensive compared to our solution. We also note that Table~\ref{tab:ucf-soa}-right lists prior methods that use deep learning models, such as the ConvLSTM autoencoder~\cite{luo2017remembering}, Stacked-RNN~\cite{luo2017revisit}, and GANs~\cite{ravanbakhsh2018plug}; our GODS variants offer competitive performances against them. 

 \begin{table}[]
 \centering
 \caption{Average performances on the Dash-Cam-Pose and JHMDB datasets. Dash-Cam-Pose uses the $\overline{F1}$ score while JHMDB uses $F1$ score as evaluation metric (classification accuracy is shown in the brackets). K-OC-SVM and K-SVDD are the RBF kernelized variants.}
 \label{soa}
\begin{tabular}{l|l|l|l}\hline
Method   & CarPose\_BOW          & CarPose\_TCN          & JHMDB                             \\\hline
OC-SVM~\cite{scholkopf2001estimating}   & 0.167 (0.517)                & 0.279(0.527)                & 0.301 (0.568)               \\\hline
SVDD~\cite{tax2004support}     & 0.448 (0.489)                & 0.477(0.482)                & 0.407 (0.566)              \\\hline
K-OC-SVM~\cite{scholkopf2001estimating} & 0.327 (0.495)                & 0.361(0.491)                & 0.562 (0.412)          \\\hline
K-SVDD~\cite{tax2004support}    & 0.476 (0.477)                & 0.489 (0.505)                & 0.209 (0.441)                \\\hline
K-PCA~\cite{hoffmann2007kernel}    &  0.145 (0.502) & 0.258 (0.492) & 0.245 (0.557) \\\hline
Slab-SVM~\cite{fragoso2016one} & 0.468 (0.568) & 0.498 (0.577) & 0.643 (0.637) \\\hline
KNFST~\cite{bodesheim2013kernel} & 0.345 (0.487) & 0.368 (0.496) & 0.667 (0.501) \\\hline
KNN~\cite{hautamaki2004outlier} & 0.232 (0.475) & 0.276 (0.488) & 0.643 (0.492) \\\hline
LS-OSVM~\cite{choi2009least}  & 0.234 (0.440)                & 0.246(0.460)                & 0.663(0.582)               \\\hline
S-SVDD~\cite{sohrab2018subspace}   & 0.325 (0.490)           & 0.464 (0.500)           & 0.642 (0.498)         \\\hline\hline
BODS     & 0.523 (0.582)                & 0.532 (0.579)                & 0.725 (0.714)              \\\hline
GODS     & 0.563 (0.629)          & 0.584 (0.601)           & 0.777 (0.752)   \\\hline
KODS     & \textbf{0.596 (0.642)}          & \textbf{0.664 (0.604) }          & \textbf{0.785 (0.726)}
\end{tabular}
\end{table}

\begin{table}[htbp]
\centering
\caption{Performances on UCF-Crime dataset (left) and UCSD Ped2 dataset (right). $^*$Setup is different. }
\begin{tabular}{l|c|c}
\hline
 \multicolumn{3}{c}{\textbf{UCF Crime Dataset}}\\
\hline
\textbf{Method}       & \textbf{AUC} & \textbf{FAR} \\\hline
Random       & 0.50                    & -                                    \\\hline
Hasan et al.~\cite{hasan2016learning} & 0.51                    & 27.2                                 \\\hline
Lu et al.~\cite{lu2013abnormal}    & 0.66                   & 3.1                                  \\\hline
$^*$Waqas et al.~\cite{sultani2018real} & 0.75                   & 1.9                                  \\\hline
Sohrab et al.~\cite{sohrab2018subspace} &0.59 &10.5 \\\hline\hline
BODS & 0.68               &2.7      \\\hline           
GODS        & 0.70                   & 2.1         \\\hline
KODS    & \textbf{0.71}                   & \textbf{2.1} 
\end{tabular}
\begin{tabular}{l|c}
\hline
\multicolumn{2}{c}{\textbf{UCSD Ped2 Dataset}}\\
\hline
\textbf{Method}       & \textbf{AUC}  \\\hline
ConvLSTM-AE.~\cite{luo2017remembering}  &0.88\\\hline
GANs~\cite{ravanbakhsh2018plug} & 0.88\\\hline
S-RNN~\cite{luo2017revisit}    & 0.92                                                 \\\hline
Autoregression~\cite{abati2019latent} & \textbf{0.95}                                               \\\hline
FFP+MC, Liu et al.~\cite{liu2018future} & \textbf{0.95} \\\hline\hline
BODS & 0.91                \\      \hline    
GODS        & 0.93                        \\\hline
KODS    & \textbf{0.95}                 
\end{tabular}
\label{tab:ucf-soa}
\end{table}

\begin{table*}[]
 \centering
 \caption{Performances on UCI datasets. $N$ is number of samples, $D$ is the feature dimension and $T$ is the target (positive) class.}
\begin{tabular}{|c|l|l|l|l|c|c|c|c|c|c|c|}\hline
No. & Dataset  & N      & D          & T    & OC-SVM~\cite{sohrab2018subspace}  & K-OC-SVM  & SVDD~\cite{sohrab2018subspace}     & S-SVDD~\cite{sohrab2018subspace} & K-SVDD    & GODS & KODS      \\ \hline
1   & Sonar    & 208    & 60         & Mines         & 53.5\% & 56.5\% & 62.5\% & 63.8\% & 60.9\%  & 71.6\% & 72.5\%  \\ \hline
2   & Pump     & 1500   & 64         & Normal        & 63.2\% & 60.1\% & 84.6\% & 85.7\% & 83.6\%  & 87.6\% & 88.2\%   \\ \hline
3   & Scale    & 625    & 4          & Left          & 68.8\% & 67.6\% & 70.3\% & 90.7\% & 73.4\%  & 92.3\% & 92.5\%  \\ \hline
4   & Survival   & 306    & 3          & Survived     & 64.4\% & 74.3\% & 83.4\% & 84.1\% & 83.5\%  & 87.6\% & 88.1\% \\\hline
5   & Banknote  & 1372    & 5         & No          & 65.7\% & 70.0\% & 76.4\% & 90.8\% & 80.4\% & 94.7\% & 95.8\% \\\hline
\end{tabular}
\label{uci}
\end{table*}

\begin{table}[]
\centering

\caption{Comparisons of GODS variants on UCI datasets. We compare: (i) $\gods$~\eqref{Problem3} using the Stiefel manifold, (ii) $\gods_N$ using the non-compact Stiefel~\eqref{eq:ncstiefel}, (iii) the Euclidean~\eqref{eq:euc}, and (iv) the oblique manifolds~\eqref{eq:gods0}. We compare under (i) $\ell_2$ unit-normalization of inputs and (ii) $C$, under soft-orthogonality (~\eqref{eq:gods0},~\eqref{eq:euc}). We report F1 scores (in \%) and standard deviations over 5 trials.}

\begin{tabular}{|p{0.5mm}|p{8mm}|c|c|c|c|c|}
\hline
                           & Type & Sonar               & Pump               & Scale & Survival & Banknote \\\hline
            \rottxt{$\gods$} & $\ell_2$     & \textbf{71.6$\pm$2.6}      & \textbf{87.6$\pm$0.6}    & 89.2$\pm$5.4     & 87.6$\pm$1.9      & 94.7$\pm$3.8        \\
                             & $\neq\!\!\ell_2$   & 69.2$\pm$3.9      & 85.5$\pm$0.9     & 92.3$\pm$4.5     & 86.5$\pm$8.9      & 90.5$\pm$2.1       \\
                             & $\neq\!\!\ell_2\!\!+\!\!C$ & 68.3$\pm$4.1      & 85.2$\pm$0.8     & 92.2$\pm$4.1     & 87.0$\pm$9.5      & 90.4$\pm$2.1        \\
                             & $\ell_2\!\!+\!\!C$     & 71.6$\pm$3.7      & 87.0$\pm$0.6     & 88.7$\pm$3.6     & \textbf{87.8$\pm$1.8}      & 95.1$\pm$1.7        \\\hline\hline
   
            \rottxt{$\gods_N$}& $\ell_2$     & 69.2$\pm$8.6      & 86.7$\pm$7.1     & 84.8$\pm$7.0     & 85.3$\pm$0.3      & 90.9$\pm$6.2        \\
                              & $\neq\!\!\ell_2$ & 68.7$\pm$1.5      & 85.9$\pm$5.1     & 89.7$\pm$9.1     & 82.9$\pm$4.8      & 82.4$\pm$2.6        \\
                              & $\neq\!\!\ell_2\!\!+\!\!C$& 66.9$\pm$2.2      & 85.7$\pm$4.1     & 89.5$\pm$9.5     & 85.2$\pm$7.2      & 88.4$\pm$4.5        \\
                              & $\ell_2\!\!+\!\!C$   & 69.1$\pm$5.1      & 86.4$\pm$0.6     & 86.9$\pm$4.2     & 86.3$\pm$7.8      & 90.9$\pm$6.4        \\\hline\hline                        
          \rottxt{$\gods_E$} & $\ell_2$     & 66.7$\pm$2.3      & 85.9$\pm$0.3     & 79.8$\pm$5.1     & 84.8$\pm$1.7      & 93.8$\pm$6.7        \\
                              & $\neq\!\!\ell_2$ & 68.1$\pm$4.5      & 85.8$\pm$0.6     & 94.4$\pm$6.5     & 85.9$\pm$1.1      & 90.6$\pm$3.0        \\
                              & $\neq\!\!\ell_2\!\!+\!\!C$& 68.2$\pm$3.0      & 85.9$\pm$0.6     & 95.0$\pm$2.7     & 86.5$\pm$7.5      & 91.4$\pm$2.2        \\
                              & $\ell_2\!\!+\!\!C$   & 68.7$\pm$2.4      & 86.2$\pm$0.2     & 81.0$\pm$3.9     & 86.5$\pm$1.2      & 94.1$\pm$1.5        \\\hline\hline
           \rottxt{$\gods_O$}& $\ell_2$       & 69.1$\pm$3.5      & 87.0$\pm$0.6     & 81.6$\pm$3.7       & 85.7$\pm$6.1      & 94.0$\pm$1.2        \\
                              & $\neq\!\!\ell_2$   & 69.3$\pm$3.5      & 84.6$\pm$1.2     & 97.1$\pm$3.6     & 85.5$\pm$1.5      & 94.1$\pm$3.9        \\
                              & $\neq\!\!\ell_2\!\!+\!\!C$ & 70.5$\pm$4.6      & 86.5$\pm$0.7     & \textbf{97.1$\pm$2.9}     & 85.9$\pm$8.7      & 95.3$\pm$3.2        \\
                              & $\ell_2\!\!+\!\!C$     & 70.6$\pm$4.0      & 87.3$\pm$0.7     & 82.3$\pm$4.3     & 85.9$\pm$3.3      & \textbf{95.6$\pm$3.1}      \\\hline
 
\end{tabular}
\label{uci2}
\end{table}

\subsection{Performance on UCI datasets}
\label{sec:uci}
As the reader might acknowledge, the algorithms proposed in this paper are not specialized to only computer vision datasets, but could be applied for the anomaly detection task on any data mining, machine learning, or robotics task. To this end, in Table~\ref{uci}, we evaluate GODS and KODS on five datasets downloaded from UCI datasets\footnote{http://archive.ics.uci.edu/ml/index.php} and TU delft pattern recognition lab website\footnote{http://homepage.tudelft.nl/n9d04/occ/index.html}. These datasets are: (i) sonar, the task in which is to discriminate between sonar signals bounced off a metal cylinder and those bounced off a roughly cylindrical rock, (ii) the Delft pump dataset, the task in which is to detect abnormal condition of a submersible pump, (iii) the Scale dataset is to classify the balance scale tip to the right, tip to the left, or be balanced, (iv) the Haberman's survival dataset records the survival of patients who had undergone surgery for breast cancer, and (v) the Banknote dataset is to detect if the feature from an image passing the evaluation of an authentication procedure for banknotes.

We follow the evaluation protocol in the recent paper~\cite{sohrab2018subspace} for all the datasets and compare the performances to those reported in that paper. Specifically, the evaluation uses a split of 70/30 for the positive class; the model training is performed on the 70\%, and tested on the remaining 30\% positive class data and the negative (anomalous) data (which is not used in training). We repeat the split in the positive class five times and report the average performance on the five trials. For datasets having more than two classes, we pick one class as positive, while the remaining as negative, and follow the same protocol as above. In Table~\ref{uci}, we report the performances of GODS and KODS against those reported in~\cite{sohrab2018subspace}. In the table,  $N$ represents the number of samples in the dataset, $D$ denotes the feature dimension of each sample, and $T$ is the target class picked as positive (same as in~\cite{sohrab2018subspace}). We use three hyperplanes in the GODS subspaces, and set the sensitivity margin $\eta=0.3$ in both GODS and KODS. For KODS, we use polynomial kernel with degree as 3. As is clear from the table, we outperform the previous state-of-the-art results on all datasets. Specifically, GODS is substantially better than the previous best method S-SVDD by 2--8\%, and the KODS is even better by 1--2\%. This is because our GODS algorithm better characterizes the data distribution from positive classes and thus producing higher cost for anomalies during the inference. For KODS, the kernel embedding would further bring advantages in learning the decision regions better fitting the normal samples.

In the Table~\ref{uci2}, we evaluate the various extensions of GODS as described in Section~\ref{gods_extension}. Through these experiments, we evaluate the impact of unit-normalization on the data inputs and the orthogonality assumptions on the hyperplanes, and analyze the adequacy of each variant when such assumptions may not be relevant. In the Table~\ref{uci2}, each column contains the results for one dataset while the four rows in one column are the result for one variant of GODS. From top to the bottom, we have 4 settings relaxing different constraints; they are: 1) unit normalization applied on the inputs, denoted $\ell_2$, 2) no unit norm is enforced, $\neq\ell_2$, 3) $\neq\ell_2$ but with soft-orthogonality constraints ($C$) as described in~\eqref{eq:gods0},~\eqref{eq:euc}, and 4) $\ell_2$ norm and $C$ are used together. We also report the standard deviations associated with each experiment over the five trials. 

From the experimental results, it is found that the orthogonal constraint is generally helpful, whenever applicable. For instance, the results in the third and fourth row in Euclidean and Oblique manifolds are better than the ones without orthogonal constraints by up to 2\%. In terms of the $\ell_2$ norm constraints, it depends on the nature of the data points. For example, in the Scale dataset, each dimension of the data captures the presence of some semantic attribute and thus $\ell_2$ norm may not make much sense on them. However, for the vision datasets or the other four UCI datasets, the feature normalization could  bound the data allowing the one-class model to better capture the distribution.

\section{Conclusions}
\label{sec:conclude}
In this paper, we presented a novel one-class learning formulation -- called GODS -- using pairs of complementary classifiers; these classifiers are oriented so as to circumscribe the data density within a rectilinear space of minimal volume. We explored variants of GODS via relaxing the various constraints in our problem setup, as well as introducing kernel feature maps. Due to the orthonormality we impose on the classifiers, our objectives are non-convex, and solving for which we resorted to Riemannian optimization frameworks on the Stiefel manifold and its variants. We presented experiments on a diverse set of anomaly detection tasks, demonstrating state-of-the-art performances. We further analyzed the generalizability of our framework to non-vision data by presenting experiments on five UCI datasets; our results outperforming prior baselines by significant margins. 

 An potential direction to extend this work is perhaps to use more than two classifiers in the GODS framework. While, we experimented with a variant of this idea using multiple orthonormal frames, its performance was poor. We presume this inferior performance is perhaps due to the lack of appropriate regularizations across the classifiers and the absence of suitable complementarity conditions between the classifiers and the data. We plan to pursue this research direction in a future paper. 

Further, there are several aspects of our scheme that needs rigorous treatment. For example, deriving generalization bounds on GODS is one such. Analysis of the representation complexity within a computational learning theory framework is yet another direction. An analysis of our optimization landscape is a direction that could help better initialize our schemes. Extending our framework as a module within an end-to-end neural network is an interesting direction as well. 

\noindent\textbf{Acknowledgements.} We thank the anonymous reviewers whose insightful questions have helped us significantly improve the quality of this work.

{\small
\bibliographystyle{IEEEtran}
\bibliography{one-class-bib}
}
\vspace{-1.4cm}
\begin{IEEEbiography}[{\includegraphics[width=1in,height=1.25in,clip,keepaspectratio]{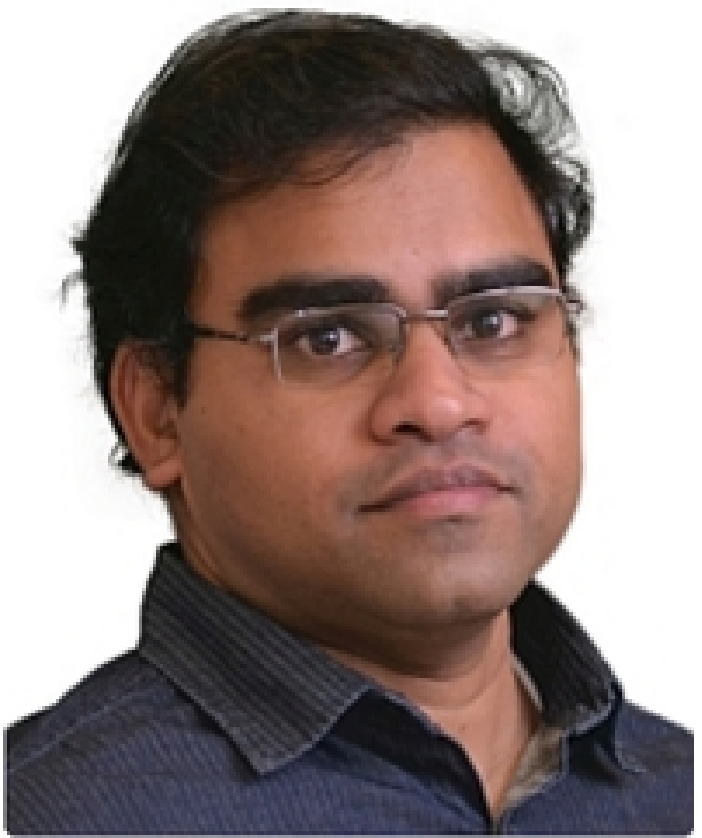}}]{Anoop Cherian}
is a Principal Research Scientist with Mitsubishi Electric Research Labs (MERL) Cambridge, MA and an Adjunct Researcher with the Australian National University. He received his M.S. and Ph.D. degrees in computer science from the University of Minnesota, Minneapolis in 2010 and 2013, respectively. He has broad interests in machine learning, computer vision, deep learning, and non-Euclidean optimization. He has co-authored more than 50 scientific articles, and is also the recipient of several awards, including the Best Student Paper Award at ICIP, 2012.
\end{IEEEbiography}
\vspace{-1.0cm}
\begin{IEEEbiography}[{\includegraphics[width=1in,height=1.25in,clip,keepaspectratio]{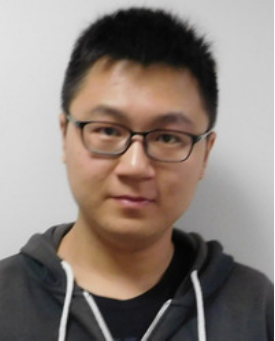}}]{Jue Wang}
is a PhD student with the Research School of Engineering at the Australian National University since 2016. He is also associated with CSIRO's Data61 in Australia. From 2010-2014, he received his double bachelor degree (honors) in Electronic Engineering from Australian National University and Beijing Institute of Technology. His research interest are in the area of computer vision and machine learning.
\end{IEEEbiography}

\end{document}